\newtheorem{theorem}{Theorem}
\newtheorem{lemma}[theorem]{Lemma}
\newtheorem{proposition}[theorem]{Proposition}
\newtheorem{corollary}[theorem]{Corollary}
\newtheorem{assumption}{Assumption}
\theoremstyle{remark}
\newcommand{\E}{\mathbb{E}} 
\title{Length-Aware Adversarial Training for Variable-Length Trajectories: Digital Twins for Mall Shopper Paths}
\date{}
\author[1]{\textbf{He~Sun}\thanks{Corresponding author. Email: \href{mailto:he.sun@yale.edu}{he.sun@yale.edu}.}}
\author[1]{Jiwoong~Shin}
\author[1]{Ravi~Dhar}
\affil[1]{%
  Yale University\\
  New Haven, Connecticut, USA
}
\begin{document}\maketitle

\begin{abstract}
We study generative modeling of \emph{variable-length trajectories}---sequences of visited locations/items with associated timestamps---for downstream simulation and counterfactual analysis.
A recurring practical issue is that standard mini-batch training can be unstable when trajectory lengths are highly heterogeneous, which in turn degrades \emph{distribution matching} for trajectory-derived statistics.
We propose \textbf{length-aware sampling (LAS)}, a simple batching strategy that groups trajectories by length and samples batches from a single length bucket, reducing within-batch length heterogeneity (and making updates more consistent) without changing the model class.
We integrate LAS into a conditional trajectory GAN with auxiliary time-alignment losses and provide (i) a distribution-level guarantee for derived variables under mild boundedness assumptions, and (ii) an IPM/Wasserstein mechanism explaining why LAS improves distribution matching by removing length-only shortcut critics and targeting within-bucket discrepancies.
Empirically, LAS consistently improves matching of derived-variable distributions on a multi-mall dataset of shopper trajectories and on diverse public sequence datasets (GPS, education, e-commerce, and movies), outperforming random sampling across dataset-specific metrics.
\end{abstract}

\section{Introduction}
\label{sec:intro}
Learning realistic \emph{trajectory and sequence models}---and increasingly, \emph{trajectory generators} for simulation and counterfactual analysis---is important in domains such as mobility analytics~\citep{gonzalez2008mobility,feng2018deepmove,mohamed2020socialstgcnn}, recommender systems~\citep{kang2018sasrec,sun2019bert4rec,tagliabue2020multiverse}, and sequential decision logs in education~\citep{piech2015deep}.
A key difficulty shared across these settings is \textbf{variable trajectory length}: real sequences can range from a few steps to hundreds, and length is often strongly correlated with other characteristics (e.g., dwell time, inter-event timing, or item/category diversity).

In practice, we train deep generative models with stochastic mini-batches.
When trajectory lengths are highly heterogeneous, mini-batches mix very short and very long sequences, encouraging the discriminator/critic to exploit length-correlated signals rather than within-length behavioral structure.
This is especially damaging when the goal is \emph{distribution matching} for \emph{trajectory-derived variables}---statistics computed from an entire sequence (e.g., total duration, average per-step time, transition structure, or entropy-like measures).
As a result, the adversarial objective may improve while important derived-variable distributions remain mismatched, limiting fidelity for downstream simulation.

We address this with a \textbf{length-aware sampling (LAS)} scheme that (i) partitions trajectories into length buckets and (ii) draws each mini-batch from a single bucket.
LAS is a training-time intervention (no model changes) that controls within-batch length heterogeneity and makes discriminator/generator updates more consistent in practice.
We combine LAS with a conditional trajectory GAN and auxiliary time-alignment losses to build \emph{digital twins} for trajectory data---generators that can be conditioned on scenario variables to support counterfactual simulation.

\paragraph{Mall digital twin as a motivating case study.}
Shopping malls remain among the most data-rich yet under-optimized physical marketplaces~\citep{eppli1994mall,brueckner1993mall,seiler2017search}.
We study a proprietary dataset of anonymized foot-traffic trajectories collected from \emph{four} large malls, enabling counterfactual questions such as: How would closing an anchor store, changing the tenant mix, or re-routing flows affect dwell time and the distribution of visits?
While the mall application motivates the paper, our method and evaluation are \emph{domain-agnostic} and are validated on additional public sequence datasets.

\paragraph{Contributions.}
\begin{itemize}
\item We formalize trajectory generation with \emph{derived-variable distribution matching} as an evaluation target.
\item We propose \textbf{length-aware sampling (LAS)}, a simple length-bucket batching strategy, and show how to integrate it into GAN training.
\item We provide theory: (i) a Wasserstein bound for derived-variable distributions under boundedness and controlled training losses, and (ii) an IPM/Wasserstein mechanism explaining why LAS improves distribution matching by removing length-only shortcut critics and targeting within-bucket discrepancies.
\item We demonstrate empirical gains of LAS over random sampling on a multi-mall dataset and multiple public sequence datasets.
\end{itemize}

\section{Related Work}
\label{sec:related}
Our work connects to (i) modeling and generating sequential/trajectory data, (ii) digital twins and counterfactual simulation, and (iii) stabilizing adversarial/stochastic training under heterogeneous data.

\paragraph{Trajectory and sequence modeling.}
Trajectory data are central in mobility analytics~\citep{gonzalez2008mobility,feng2018deepmove,mohamed2020socialstgcnn}.
Beyond mobility, generative sequence modeling has been explored in settings such as pedestrian motion~\citep{gupta2018social} and in general-purpose sequence generators, including GAN-style methods for discrete sequences~\citep{yu2017seqgan} and synthetic time-series generation~\citep{yoon2019timegan}.
In recommender systems, sequential models are widely used to represent and generate user--item trajectories (e.g., recurrent or attention-based models)~\citep{hidasi2015gru4rec,kang2018sasrec,sun2019bert4rec,wu2018srgnn,tagliabue2020multiverse}.
Our focus differs: we optimize and evaluate \emph{distribution matching} of \emph{trajectory-derived statistics} and study how batching by length shapes this objective.

\paragraph{Digital twins and counterfactual simulation.}
Digital twins aim to create forward simulators for complex systems~\citep{grieves2016digitaltwin,fuller2020digitaltwin,kritzinger2018digitaltwin,attaran2023digital}.
In many operational settings (including retail), counterfactual analysis is often addressed with observational causal methods that are inherently backward-looking~\citep{athey2017beyond}.
We contribute a complementary generative angle: a learned simulator calibrated on observed trajectories that can be conditioned on scenario variables to support ``what-if'' analyses.

\paragraph{Mall retail analytics and shopper trajectories.}
Marketing and operations research have studied mall design, tenant mix, and shopper flows, traditionally using aggregate footfall, surveys, and structural models~\citep{eppli1994mall,brueckner1993mall}.
More recent work leverages fine-grained in-store/indoor mobility traces and path data to study store transitions, dwell-time distributions, and consumer search behavior~\citep{seiler2017search}.
Our setting aligns with this line of work but focuses on learning a \emph{generative} simulator whose distribution matches derived-variable statistics and supports counterfactual scenario testing.

\paragraph{Stability under heterogeneous mini-batches and adversarial training.}
Stochastic optimization and stability in non-convex settings have been widely studied~\citep{ghadimi2013sgd,bottou2018sgd}, and curriculum/ordering strategies are a classic tool for handling heterogeneous difficulty/structure~\citep{bengio2009curriculum}.
GAN training introduces additional instability due to the adversarial objective~\citep{arjovsky2017gan,mescheder2018gan_convergence}, and prior work proposes stabilization strategies such as Wasserstein/gradient-penalty critics~\citep{gulrajani2017improved}.
LAS is complementary to these lines: rather than changing the objective or architecture, it controls mini-batch composition to reduce length-only shortcuts and focus learning on within-length discrepancies that matter for distribution matching.

\paragraph{Positioning.}
Prior work has typically examined mall-level analytics, spatiotemporal modeling, or adversarial training stability in isolation.
Our contribution is to unify these strands within a single framework: we instantiate a mall digital-twin setting, introduce length-aware sampling as a simple training intervention, and provide theory and empirical evidence linking LAS to improved matching of length-dependent derived variables.

\section{Problem Setup}
\label{sec:problem}
We consider conditional generation of variable-length trajectories.
A trajectory is a sequence
\[
x=\{(j_t,\tau_t^{(\mathrm{intra})},\tau_t^{(\mathrm{inter})})\}_{t=1}^{T},
\]
where $j_t$ is a discrete location/item identifier, $\tau_t^{(\mathrm{intra})}$ is the time spent at step $t$, and $\tau_t^{(\mathrm{inter})}$ is the transition time to the next step.\footnote{For non-mall datasets, $(j_t,\tau_t)$ may represent different event attributes; the framework only requires variable-length sequences with optional continuous covariates.}
The length $T$ varies across trajectories.

\paragraph{Conditional generation.}
Each trajectory is associated with observed context $c$ (e.g., entry time, user segment, scenario variables).
Let $p_{\mathrm{data}}(x\mid c)$ denote the true conditional distribution and $p_G(x\mid c)$ the generator distribution.
Our goal is to learn $p_G$ so that generated trajectories match the real distribution both at the sequence level and in terms of \emph{trajectory-derived variables}.

\paragraph{Derived variables and evaluation.}
Let $f:\mathcal{X}\to\mathbb{R}$ be a scalar \emph{derived variable} computed from a full trajectory (e.g., total duration, average dwell time, number of visits, entropy of categories, or dataset-specific statistics).
Let $P_f$ and $Q_f$ be the distributions of $f(x)$ when $x\sim p_{\mathrm{data}}(\cdot\mid c)$ and $x\sim p_G(\cdot\mid c)$, respectively (marginalizing over $c$ when appropriate).
We quantify distribution mismatch using distances such as Wasserstein-1 for continuous variables and KL/JS divergence after discretization for discrete/histogram variables.
In the mall domain, we report a broad set of derived variables capturing dwell, transitions, and visit patterns; in the other domains we use a compact set of dataset-specific derived variables.

\section{Method}
\label{sec:method}

\subsection{Conditional trajectory GAN}
We instantiate $p_G(x\mid c)$ with a conditional generator $G_\theta$ and discriminator (critic) $D_\phi$.
We summarize the main architectural components below and provide full details in Appendix~\ref{app:arch_full}.

\paragraph{Architecture summary.}
We use a three-stage design: (1) store-feature embedding with attention-based neighborhood fusion, (2) an LSTM-based conditional generator that outputs the next store and timing heads, and (3) a bidirectional LSTM discriminator/critic over the full sequence.

\paragraph{Store and context encoding.}
We represent each mall as a graph $G=(V,E)$ with stores as nodes and spatial adjacencies as edges.
Each store $v_i$ is described by a feature vector $\mathbf{x}_i$ (identity, floor, category, traffic/open features, and neighborhood statistics; see Appendix~\ref{app:arch_full}).
A learned encoder maps $\mathbf{x}_i$ to an embedding $\mathbf{e}_i\in\mathbb{R}^{d_e}$ and fuses neighbor information via attention,
\[
\tilde{\mathbf{e}}_i 
= \mathbf{e}_i + \sum_{j\in\mathcal{N}(i)} \alpha_{ij}\,\mathbf{W}\mathbf{e}_j,
\qquad
\alpha_{ij}=\mathrm{softmax}_{j}\big(\mathbf{q}_i^\top \mathbf{k}_j\big),
\]
yielding a context-aware store representation $\tilde{\mathbf{e}}_i$.
Mall-level day context $c$ (calendar/campaign/weather indicators) is embedded and concatenated to the generator inputs at every step.

\paragraph{Generator and discriminator heads.}
At step $t$, the generator conditions on the previous hidden state, the previous visited store embedding, and the context $c$ to produce (i) a categorical distribution over the next store (implemented with a Gumbel-Softmax relaxation for differentiability), and (ii) nonnegative intra- and inter-store times using separate regression heads.
The discriminator processes the full sequence with a bidirectional LSTM and outputs a sequence-level realism score.

\subsection{Training objective}
We use a non-saturating GAN objective:
\[
\mathcal{L}_D(\phi)
=
-\mathbb{E}_{x\sim p_{\mathrm{data}}}\big[\log D_\phi(x)\big]
-\mathbb{E}_{\hat{x}\sim p_G}\big[\log(1-D_\phi(\hat{x}))\big],
\]
\[
\mathcal{L}_{\mathrm{adv}}(\theta)
=
-\mathbb{E}_{\hat{x}\sim p_G}\big[\log D_\phi(\hat{x})\big].
\]
To better align timing statistics, we add auxiliary time losses (detailed in Appendix~\ref{app:loss_full}):
\[
\mathcal{L}_G(\theta)
=
\mathcal{L}_{\mathrm{adv}}(\theta)
+
\lambda_{\mathrm{time}}\Big(\mathcal{L}_{\mathrm{intra}}+\mathcal{L}_{\mathrm{inter}}\Big),
\]
where, for a real trajectory of length $T$ and a generated trajectory of length $\hat{T}$,
\[
\mathcal{L}_{\mathrm{intra}}
=
\frac{1}{\min(T,\hat{T})}\sum_{t=1}^{\min(T,\hat{T})}
\left|\hat{\tau}_t^{(\mathrm{intra})}-\tau_t^{(\mathrm{intra})}\right|,
\]
\[
\mathcal{L}_{\mathrm{inter}}
=
\frac{1}{\min(T,\hat{T})}\sum_{t=1}^{\min(T,\hat{T})}
\left|\hat{\tau}_t^{(\mathrm{inter})}-\tau_t^{(\mathrm{inter})}\right|.
\]
We alternate gradient updates for $\phi$ and $\theta$ (Appendix~\ref{app:training_full}).

\paragraph{Dataset-specific objectives.}
In the mall domain, we train with the adversarial loss together with the auxiliary intra/inter time alignment terms above.
For the public sequential datasets, we do not use the mall-specific time losses and instead use a dataset-appropriate adversarial objective:
\textbf{Education} and \textbf{GPS} use the standard adversarial loss (treating each example as a sequence and relying on the discriminator to learn timing/structure implicitly);
\textbf{Movie} uses the adversarial loss augmented with a feature matching regularizer (\texttt{feature\_matching\_loss});
and \textbf{Amazon} uses a Wasserstein (WGAN-style) objective for improved training stability.
Full loss definitions are in Appendix~\ref{app:loss_full}.

\paragraph{Training procedure and complexity.}
Each iteration samples a minibatch of real trajectories using RS or LAS (Section~\ref{subsec:las}), generates a matched minibatch from $G_\theta$, and performs alternating updates of $D_\phi$ and $G_\theta$.
The dominant cost is the forward/backward pass over $B$ sequences of length at most $T_{\max}$, i.e., $O(BT_{\max})$ per update up to architecture-dependent constants; LAS adds only a small bookkeeping overhead for bucket sampling.

\subsection{Length-aware sampling (LAS)}
\label{subsec:las}
Let $\ell(x)=T$ denote trajectory length.
We partition the training set into $K$ length buckets $\{\mathcal{D}_k\}_{k=1}^K$ using length quantiles.
LAS draws each mini-batch from a \emph{single} bucket:
first sample a bucket index $K_s\sim w$ (with weights $w_k$), then sample all $m$ examples uniformly from $\mathcal{D}_{K_s}$.
In our experiments, we use the empirical bucket mixture $w_k \propto p_k$, where $p_k:=|\mathcal{D}_k|/|\mathcal{D}|$ is the empirical bucket mass.\footnote{Uniform bucket sampling is a straightforward alternative; we do not vary this choice in our experiments.}
This removes within-batch length heterogeneity and can make discriminator/generator updates more consistent for length-correlated objectives, while still exposing the model to all lengths over training.

\begin{algorithm}[t]
\caption{Length-aware sampling (LAS) for variable-length trajectories.}
\label{alg:las}
\KwIn{Dataset $\mathcal{D}$; length buckets $\{\mathcal{D}_k\}_{k=1}^K$; bucket weights $w$; batch size $m$.}
\KwOut{Mini-batch $\mathcal{B}$ of size $m$.}
Sample bucket index $k \sim \text{Categorical}(w_1,\dots,w_K)$\;
Sample $x_1,\dots,x_m \sim \text{Unif}(\mathcal{D}_k)$\;
\Return $\mathcal{B}=\{x_i\}_{i=1}^m$\;
\end{algorithm}

\section{Theory}
\label{sec:theory}
We state two types of results: (i) \emph{distribution-level} bounds for derived variables, and (ii) \emph{optimization-level} an IPM/Wasserstein mechanism explaining why LAS improves distribution matching by removing length-only shortcut critics and targeting within-bucket discrepancies.

\subsection{Assumptions}
\begin{assumption}[Boundedness and controlled training losses]
\label{ass:bounds}
(i) Trajectory length is bounded: $T\le T_{\max}$ almost surely.\\
(ii) Per-step time contributions are bounded: for all $t$, $0\le \tau_t^{(\mathrm{intra})}+\tau_t^{(\mathrm{inter})}\le B$.\\
(iii) After training, the sequence-level divergence and auxiliary losses are controlled:
\[
\begin{aligned}
\mathrm{JS}(p_{\mathrm{data}}\Vert p_G) &\le \delta,\\
\mathcal{L}_{\mathrm{intra}} &\le \epsilon_{\mathrm{intra}},\\
\mathcal{L}_{\mathrm{inter}} &\le \epsilon_{\mathrm{inter}}.
\end{aligned}
\]
\end{assumption}
Let $C_{\mathrm{JS}}$ denote a universal constant such that $\mathrm{TV}(P,Q)\le C_{\mathrm{JS}}\sqrt{\mathrm{JS}(P\Vert Q)}$.

\subsection{Derived-variable distribution bounds}
For derived variables we use in the mall domain (Appendix~\ref{app:theory_full}),
\[
\begin{aligned}
\mathrm{Tot}(x)
&=\sum_{t=1}^{T}\tau_t^{(\mathrm{intra})}
  +\sum_{t=1}^{T-1}\tau_t^{(\mathrm{inter})},\\
\mathrm{Avg}(x)
&=\frac{1}{T}\sum_{t=1}^{T}\tau_t^{(\mathrm{intra})},\\
\mathrm{Vis}(x)
&=T.
\end{aligned}
\]
and more generally for any scalar $f(x)$ that is Lipschitz under an appropriate trajectory semi-metric (Appendix~\ref{app:theory_full}).
Let $P_f$ and $Q_f$ be the distributions of $f(x)$ under $p_{\mathrm{data}}$ and $p_G$.

We measure distributional closeness via the 1-Wasserstein distance (Kantorovich--Rubinstein duality):
\[
W_1(P_f, Q_f)
=
\sup_{\|g\|_{\mathrm{Lip}}\le 1}
\left|
\E_{x\sim p_{\mathrm{data}}}\!\big[g(f(x))\big]
-
\E_{\hat{x}\sim p_G}\!\big[g(f(\hat{x}))\big]
\right|.
\]

\begin{theorem}[Distributional closeness for derived variables]
\label{thm:w1_main}
Under Assumption~\ref{ass:bounds}, for each 
$f \in \{\mathrm{Tot}, \mathrm{Avg}, \mathrm{Vis}\}$,
\[
W_1(P_f, Q_f) \;\le\;
\begin{cases}
\begin{aligned}
& T_{\max}\big(\epsilon_{\mathrm{intra}}+\epsilon_{\mathrm{inter}}\big)\\
&\quad + B\,T_{\max}\,C_{\mathrm{JS}}\sqrt{\delta},
\end{aligned} & f=\mathrm{Tot},\\[0.35em]
\begin{aligned}
& \epsilon_{\mathrm{intra}}\\
&\quad + B\,T_{\max}\,C_{\mathrm{JS}}\sqrt{\delta},
\end{aligned} & f=\mathrm{Avg},\\[0.35em]
2T_{\max}\,\mathrm{TV}\!\big(p_{\mathrm{data}}(T),p_G(T)\big), & f=\mathrm{Vis}.
\end{cases}
\]
\end{theorem}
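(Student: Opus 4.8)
The plan is to prove all three cases through a single primal (coupling) argument. By Kantorovich--Rubinstein duality the quantity $W_1(P_f,Q_f)$ quoted in the statement equals the primal optimal-transport cost $\inf_\gamma \E_\gamma|u-v|$ over couplings $\gamma$ of $(P_f,Q_f)$ on $\mathbb{R}$. The key observation is that for \emph{any} coupling $\pi$ of $p_{\mathrm{data}}$ and $p_G$ on trajectory space, pushing $\pi$ forward through the map $(x,\hat x)\mapsto(f(x),f(\hat x))$ produces such a coupling of $P_f$ and $Q_f$; hence $W_1(P_f,Q_f)\le \E_\pi\big[|f(x)-f(\hat x)|\big]$, and it suffices to exhibit one coupling and bound this expected absolute difference. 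I would take $\pi$ to be a \emph{maximal coupling of the lengths} $T=\ell(x)$ and $\hat T=\ell(\hat x)$, so that $\Pr_\pi[T\ne\hat T]=\mathrm{TV}\big(p_{\mathrm{data}}(T),p_G(T)\big)$, refined conditional on $\{T=\hat T\}$ to the within-length pairing on which the auxiliary time losses are controlled.

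Given $\pi$, I would split $\E_\pi[|f(x)-f(\hat x)|]$ over the events $\{T=\hat T\}$ and $\{T\ne\hat T\}$ and handle Tot and Avg in parallel. On $\{T=\hat T\}$ the length normalizations align exactly with the loss definitions: for $f=\mathrm{Tot}$ the triangle inequality gives $|f(x)-f(\hat x)|\le \sum_t|\tau_t^{(\mathrm{intra})}-\hat\tau_t^{(\mathrm{intra})}|+\sum_t|\tau_t^{(\mathrm{inter})}-\hat\tau_t^{(\mathrm{inter})}| = T(\mathcal{L}_{\mathrm{intra}}+\mathcal{L}_{\mathrm{inter}})\le T_{\max}(\mathcal{L}_{\mathrm{intra}}+\mathcal{L}_{\mathrm{inter}})$, while for $f=\mathrm{Avg}$ the prefactor $1/T$ cancels the $1/\min(T,\hat T)=1/T$ inside $\mathcal{L}_{\mathrm{intra}}$, yielding the sharper $|f(x)-f(\hat x)|\le \mathcal{L}_{\mathrm{intra}}$. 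Taking expectations, and using $\mathcal{L}\ge 0$ together with Assumption~\ref{ass:bounds}(iii) (so that restricting a nonnegative loss to a sub-event cannot increase its expectation), produces the $T_{\max}(\epsilon_{\mathrm{intra}}+\epsilon_{\mathrm{inter}})$ and $\epsilon_{\mathrm{intra}}$ terms respectively.

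On $\{T\ne\hat T\}$ I would bound $|f(x)-f(\hat x)|$ by the range of $f$: boundedness (Assumption~\ref{ass:bounds}(i)--(ii)) gives $\mathrm{Tot}(x)\le \sum_{t=1}^T(\tau_t^{(\mathrm{intra})}+\tau_t^{(\mathrm{inter})})\le B\,T_{\max}$ and $\mathrm{Avg}(x)\le B$, so the contribution is at most $B\,T_{\max}\,\Pr_\pi[T\ne\hat T]$ (and $B\,\Pr_\pi[T\ne\hat T]$ for Avg, which I would loosen to the stated $B\,T_{\max}$ form). Since $T$ is a deterministic function of the trajectory, the pushforward inequality $\mathrm{TV}\big(p_{\mathrm{data}}(T),p_G(T)\big)\le \mathrm{TV}(p_{\mathrm{data}},p_G)$ holds, and the assumed bound $\mathrm{TV}\le C_{\mathrm{JS}}\sqrt{\mathrm{JS}(p_{\mathrm{data}}\Vert p_G)}\le C_{\mathrm{JS}}\sqrt\delta$ converts $\Pr_\pi[T\ne\hat T]$ into the $B\,T_{\max}\,C_{\mathrm{JS}}\sqrt\delta$ term; summing the two events gives the Tot and Avg bounds. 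The $\mathrm{Vis}$ case I would do directly from the dual form: since $\mathrm{Vis}=T$ is supported on $\{1,\dots,T_{\max}\}$, any $1$-Lipschitz test function varies by at most $T_{\max}$ across the support, and pairing this with $\sum_k|p_{\mathrm{data}}(T{=}k)-p_G(T{=}k)|=2\,\mathrm{TV}$ yields the stated $2T_{\max}\,\mathrm{TV}$.

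The step I expect to be the main obstacle is the \emph{compatibility of a single coupling} with both error sources: the maximal length coupling is what controls $\Pr_\pi[T\ne\hat T]$, but the time-alignment losses are measured on the training pairing (the generator conditioned on shared context $c$), which need not coincide with the maximal length coupling. Making this airtight requires either (a) reading Assumption~\ref{ass:bounds}(iii) as a \emph{within-length} guarantee, conditional on $T=\hat T$, consistent with the LAS/bucketed-training viewpoint, so that the loss bounds apply to the conditional part of $\pi$; or (b) separating the two error sources across two couplings joined by the triangle inequality for $W_1$. I would adopt (a) and state the conditional-loss reading explicitly, since it matches how the losses arise under length-bucketed sampling; the remaining estimates are then routine.
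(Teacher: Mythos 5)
Your proposal is correct and runs on essentially the same mechanism as the paper's proof (Appendix~\ref{app:theory_full}): bound $W_1(P_f,Q_f)$ by $\mathbb{E}_\pi\big[|f(x)-f(\hat x)|\big]$ under a coupling whose length marginals are maximally coupled, then decompose the error into a matched-step timing term (controlled by $\mathcal{L}_{\mathrm{intra}},\mathcal{L}_{\mathrm{inter}}$, scaled by $T_{\max}$ for $\mathrm{Tot}$ and unscaled for $\mathrm{Avg}$ via the $1/T$ cancellation) plus a length-mismatch term (controlled through $\mathrm{TV}\le C_{\mathrm{JS}}\sqrt{\delta}$). The execution differs in two ways worth recording. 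First, where the paper proves a pointwise Lipschitz bound $|f(x)-f(\hat x)|\le d_{\mathrm{traj}}(x,\hat x)$ for a trajectory semi-metric (Lemma~\ref{lem:lipschitz-w1}) and then bounds $\mathbb{E}_{\pi^\star}[d_{\mathrm{traj}}]$ via Lemmas~\ref{lem:matched} and~\ref{lem:tail}, you split the expectation over $\{T=\hat T\}$ versus $\{T\neq\hat T\}$ and bound $|f(x)-f(\hat x)|$ by its sup-range $BT_{\max}$ on the latter event; this yields identical constants while bypassing the semi-metric machinery. Second, for $f=\mathrm{Vis}$ the paper runs a summation-by-parts argument on tail CDFs, whereas you use the oscillation bound; note your version is only valid after subtracting a constant from the test function $g$ (exploiting $\sum_j\big(p(j)-q(j)\big)=0$), since ``$g$ varies by at most $T_{\max}$'' controls the oscillation of $g$, not $\max_j|g(j)|$ --- with that one-line fix both routes give $2T_{\max}\,\mathrm{TV}$. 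Finally, the compatibility obstacle you flag --- that the maximal length coupling need not coincide with the pairing under which the training losses are measured --- is genuine, but it is equally present in the paper's own proof: the paper asserts that the bound of Lemma~\ref{lem:matched} (an expectation tied to the training pairing) and that of Lemma~\ref{lem:tail} (stated for a maximal coupling) hold simultaneously for a single ``matched+tail'' coupling $\pi^\star$, without justifying that such a coupling exists. Your option (a), reading Assumption~\ref{ass:bounds}(iii) as a within-length conditional guarantee, is exactly the hypothesis needed to make either argument airtight, so on this point your write-up is more explicit than the paper itself.
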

\paragraph{Proof sketch.}
For any 1-Lipschitz test function $g$, the gap
$\big|\E[g(f(x))]-\E[g(f(\hat{x}))]\big|$ decomposes into (i) mismatch between real and generated sequences, controlled by the sequence-level divergence via $\mathrm{TV}\!\le C_{\mathrm{JS}}\sqrt{\mathrm{JS}}$, and (ii) per-step timing mismatch, controlled by $\mathcal{L}_{\mathrm{intra}}$ and $\mathcal{L}_{\mathrm{inter}}$.
For $\mathrm{Tot}$, summing per-step errors yields the $T_{\max}(\epsilon_{\mathrm{intra}}+\epsilon_{\mathrm{inter}})$ term; for $\mathrm{Avg}$, normalization removes the factor $T_{\max}$ for the intra contribution; and for $\mathrm{Vis}$, the derived variable depends only on the length marginal, giving a bound in terms of $\mathrm{TV}(p_{\mathrm{data}}(T),p_G(T))$.
See Appendix~\ref{app:theory_full} for full statements and proofs.

\paragraph{Additional implications.}
We summarize additional consequences for distribution matching; full proofs are in Appendix~\ref{app:theory_full}.
Theorem~\ref{thm:w1_main} isolates two drivers of derived-variable mismatch: within-sequence timing errors and mismatch in the length marginal.
The results below make precise why length-aware batching targets these terms and reduces shortcut signals for the discriminator.

\begin{corollary}[From $W_1$ control to CDF control (informal)]
For any derived variable $f$ supported on a bounded interval, small $W_1(P_f,Q_f)$ implies a small Kolmogorov--Smirnov distance between the corresponding CDFs, up to constants depending on the support radius.
\end{corollary}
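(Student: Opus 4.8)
The plan is to route the argument through the one-dimensional identity that expresses $W_1$ as an $L^1$ distance between cumulative distribution functions, and then to upgrade $L^1$ control of the CDF gap to the $L^\infty$ (Kolmogorov--Smirnov) control the corollary asks for. Write $F_P$ and $F_Q$ for the CDFs of $P_f$ and $Q_f$, both supported on an interval of diameter $D$, and set $\Delta := \mathrm{KS}(P_f,Q_f) = \sup_u |F_P(u)-F_Q(u)|$. First I would record the standard fact---immediate from the Kantorovich--Rubinstein dual already stated, or equivalently from the monotone (quantile) coupling---that
\[
W_1(P_f,Q_f) = \int_{\mathbb{R}} |F_P(u)-F_Q(u)|\,du,
\]
so that $W_1$ is exactly the $L^1$ norm of the CDF gap while KS is its $L^\infty$ norm.

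The easy half of the relationship, $W_1 \le D\,\Delta$, is immediate because the integrand vanishes outside the support and is bounded by $\Delta$ on it---but this is the \emph{wrong} direction for the corollary. To go from small $W_1$ to small KS I would exploit monotonicity of the CDFs together with a regularity bound. Concretely, suppose the gap attains (or, up to an arbitrarily small perturbation, approaches) value at least $\Delta/2$ at a point $u_0$. If $F_Q$ is $L$-Lipschitz (equivalently $Q_f$ has density bounded by $L$), then since $F_P$ is nondecreasing while $F_Q$ grows at rate at most $L$, the gap stays at least $\Delta/4$ throughout a subinterval of width proportional to $\Delta/L$ adjacent to $u_0$. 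Integrating over that subinterval gives $W_1 \ge c\,\Delta^2/L$ for an absolute constant $c>0$, hence
\[
\mathrm{KS}(P_f,Q_f) \;\le\; C\,\sqrt{L\,W_1(P_f,Q_f)},
\]
the desired implication, now with an explicit modulus depending on the density bound.

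For the discrete derived variables in the mall suite---notably $\mathrm{Vis}(x)=T$, which is integer-valued---I would instead use the lattice structure directly. The CDFs are step functions whose jumps lie on a lattice of spacing $h\ge 1$, so the $L^1$ integral collapses to $W_1 = h\sum_{j}|F_P(u_j)-F_Q(u_j)|$ over the lattice points, whence $\mathrm{KS} = \max_j |F_P(u_j)-F_Q(u_j)| \le W_1/h$. This yields the corollary for $\mathrm{Vis}$ with no smoothness assumption and a constant governed only by the lattice spacing.

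The main obstacle---and the reason the statement is flagged informal---is that bounded support \emph{alone} does not turn $W_1$ control into KS control: two point masses separated by a vanishing distance $\eta$ have $W_1=\eta\to 0$ yet $\mathrm{KS}=1$. So the ``constant depending on the support radius'' must really encode a regularity hypothesis: a density (Lipschitz-CDF) bound in the continuous case, or a lattice-spacing lower bound in the discrete case. My write-up would therefore state the corollary with this regularity made explicit, present $\mathrm{KS}\le C\sqrt{L\,W_1}$ and $\mathrm{KS}\le W_1/h$ as the two clean regimes, and note that these are exactly the two forms in which the derived variables of Theorem~\ref{thm:w1_main} arise.
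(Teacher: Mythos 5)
Your proposal is correct, and it supplies something the paper itself does not: despite the pointer ``Proofs are provided in Appendix~\ref{app:theory_full}'', the appendix contains no statement or proof corresponding to this corollary---it is stated only informally and never established, so there is no paper proof to compare against. Your route is the natural one: the one-dimensional identity $W_1(P_f,Q_f)=\int_{\mathbb{R}}|F_P(u)-F_Q(u)|\,du$, followed by an upgrade from $L^1$ to $L^\infty$ control of the CDF gap. Both of your regimes are sound---in the density-bound regime, monotonicity of $F_P$ plus an $L$-Lipschitz $F_Q$ forces a gap of size $\Delta$ to persist at level $\Delta/4$ on an interval of width proportional to $\Delta/L$, giving $W_1\ge c\,\Delta^2/L$ and hence $\mathrm{KS}\le C\sqrt{L\,W_1}$; in the lattice regime, step-function CDFs with spacing $h$ give $W_1\ge h\cdot\mathrm{KS}$, i.e., $\mathrm{KS}\le W_1/h$, which is exactly the regime relevant to $\mathrm{Vis}(x)=T$. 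Most importantly, your counterexample is decisive: two point masses at distance $\eta$ inside $[0,1]$ have $W_1=\eta\to 0$ while $\mathrm{KS}=1$, so bounded support alone cannot yield the claimed implication, and no constant ``depending on the support radius'' can repair it---the only inequality that holds under bounded support alone is the reverse one, $W_1\le D\cdot\mathrm{KS}$. The corollary as written is therefore false without an added regularity hypothesis, and your corrected formulation (density bound for the continuous derived variables, lattice spacing for the discrete ones) is what the paper should state. One caveat worth flagging in your write-up: Assumption~\ref{ass:bounds} does not by itself guarantee that $\mathrm{Tot}$ or $\mathrm{Avg}$ have bounded densities under $p_{\mathrm{data}}$ or $p_G$, so the Lipschitz-CDF hypothesis is a genuinely new assumption rather than a consequence of the paper's standing ones.
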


\begin{lemma}[Bucket-only (length-only) critics are a null space]
Let $K(x)$ denote the length bucket index used by LAS. Any critic that depends only on $K(x)$ has identical expectation under the data and generator when evaluated \emph{within a fixed bucket}, and therefore cannot provide an ``easy'' within-batch shortcut signal for the discriminator under LAS.
\end{lemma}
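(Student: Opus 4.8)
The plan is to reduce the statement to the elementary observation that a critic depending only on the bucket index is, within any fixed bucket, a constant function, and that constants carry no discriminative signal. First I would formalize a \emph{bucket-only critic} as $D(x)=h(K(x))$ for some function $h\colon\{1,\dots,K\}\to\mathbb{R}$. Under LAS a mini-batch is drawn entirely from a single bucket $k$ (Algorithm~\ref{alg:las}), and the matched generated batch is constrained to the same bucket, so $K(x)=K(\hat{x})=k$ for every example in play. Hence $D(x)=h(k)$ for all real $x$ in the batch and $D(\hat{x})=h(k)$ for all generated $\hat{x}$, with no dependence on anything else about the trajectory.

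The second step is to compute the within-bucket expectations. Conditioning on $K=k$, the computation yields
\[
\E_{x\sim p_{\mathrm{data}}}\big[D(x)\mid K(x)=k\big]
= h(k)
= \E_{\hat{x}\sim p_G}\big[D(\hat{x})\mid K(\hat{x})=k\big],
\]
so the within-bucket discriminator gap is identically zero. In the IPM language of Section~\ref{sec:theory}, writing the within-bucket discrepancy as $\sup_{D}\big|\E_{x\mid k}[D(x)]-\E_{\hat{x}\mid k}[D(\hat{x})]\big|$, every bucket-only critic lies in its null space: restricted to bucket $k$ it is a constant, and constants annihilate any difference of expectations. This is the precise sense in which length-only critics provide no within-batch shortcut under LAS.

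To make the ``shortcut'' contrast rigorous I would then examine the unrestricted (random-sampling) case, where a mini-batch mixes buckets. There the same bucket-only critic achieves, by the law of total expectation,
\[
\E_{p_{\mathrm{data}}}[D]-\E_{p_G}[D]
= \sum_{k=1}^{K} h(k)\big(p_{\mathrm{data}}(K=k)-p_G(K=k)\big),
\]
which is generically nonzero whenever the length marginals differ, $p_{\mathrm{data}}(K)\neq p_G(K)$. Thus under random sampling a critic can earn a large objective purely by exploiting length-marginal mismatch---an ``easy'' signal orthogonal to within-length behavioral structure---whereas LAS removes exactly this degree of freedom by conditioning on a single bucket. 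This display certifies that what LAS suppresses is the length-marginal shortcut, not genuine within-length discrepancy.

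The argument is algebraically immediate; the only real care is in the setup rather than the computation, so I do not expect a substantive obstacle. The main subtlety is to pin down the conditioning precisely---ensuring the matched generated batch is constrained to the active bucket so that $K(\hat{x})=k$ holds for generated examples as well, and stating the conclusion as a claim about the \emph{within-bucket} contribution to the objective rather than the global one. With that bookkeeping in place, the null-space claim follows directly from the constancy of $D$ on each bucket, and the cross-bucket display above pinpoints the signal that is being removed.
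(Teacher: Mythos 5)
Your proof is correct and takes essentially the same approach as the paper's: within a fixed bucket $k$, the bucket index $K(x)=k$ holds almost surely under both bucket-conditional distributions, so a bucket-only critic $h(K(\cdot))$ is the constant $h(k)$ and its expectation gap vanishes identically. Your added contrast with random sampling (the law-of-total-expectation display showing the critic earns signal proportional to the length-marginal mismatch) is not needed for this lemma, but it mirrors the paper's separate lemma lower-bounding the global Wasserstein distance by $B\,\mathrm{TV}(w,\hat w)$, so it is consistent with the paper's overall mechanism.
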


\begin{lemma}[Global Wasserstein dominated by length mismatch + within-bucket discrepancy]
Let $p_{\mathrm{data}}=\sum_k w_k p_k$ and $p_G=\sum_k \hat{w}_k q_k$ be mixtures over length buckets.
Then the global $W_1$ distance decomposes into (i) a term proportional to $\mathrm{TV}(w,\hat w)$ capturing length-marginal mismatch and (ii) a weighted sum of within-bucket discrepancies $W_1(p_k,q_k)$.
\end{lemma}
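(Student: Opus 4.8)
The plan is to bound the global distance $W_1(p_{\mathrm{data}}, p_G)$ by interposing an auxiliary mixture that keeps the data weights but the generator's components, so that the triangle inequality for $W_1$ cleanly separates a pure within-bucket part from a pure weight-mismatch part. Concretely, set $\tilde p := \sum_k w_k q_k$ and write
\[
W_1(p_{\mathrm{data}}, p_G) \le W_1\!\Big(\textstyle\sum_k w_k p_k,\ \sum_k w_k q_k\Big) + W_1\!\Big(\textstyle\sum_k w_k q_k,\ \sum_k \hat w_k q_k\Big).
\]
The first summand will yield the weighted within-bucket discrepancy and the second the length-marginal (TV) term. Crucially the bucket index sets coincide: $p_{\mathrm{data}}$ and $p_G$ are both decomposed over the \emph{same} length-quantile partition used by LAS, so the components are aligned by $k$ and no relabeling is needed.

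For the first summand I would invoke joint convexity of $W_1$ via gluing. For each $k$, take an optimal coupling $\pi_k$ of $(p_k, q_k)$ under the trajectory ground semi-metric $\rho$, and form $\pi := \sum_k w_k \pi_k$. Its marginals are exactly $\sum_k w_k p_k$ and $\sum_k w_k q_k$, and its cost satisfies $\int \rho\, d\pi = \sum_k w_k \int \rho\, d\pi_k = \sum_k w_k W_1(p_k, q_k)$; hence $W_1(\sum_k w_k p_k, \sum_k w_k q_k) \le \sum_k w_k W_1(p_k, q_k)$, the advertised weighted sum of within-bucket discrepancies (with the data weights $w_k$).

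For the second summand the two mixtures share the components $q_k$ and differ only in the mixing weights, so I would build an explicit transport plan that leaves the shared mass in place and moves only the excess. Keep $\min(w_k, \hat w_k)$ of each $q_k$'s mass fixed (zero cost), and transport the residual source mass $\sum_k (w_k - \hat w_k)_+$ onto the residual target mass $\sum_k (\hat w_k - w_k)_+$ by any coupling of these two residual measures. Since the weights sum to one, the total moved mass equals $\sum_k (w_k - \hat w_k)_+ = \mathrm{TV}(w, \hat w)$, and each transported unit moves between two points of the trajectory support, incurring cost at most the diameter $\mathrm{diam}(\mathcal{X})$ under $\rho$. Thus $W_1(\sum_k w_k q_k, \sum_k \hat w_k q_k) \le \mathrm{diam}(\mathcal{X})\,\mathrm{TV}(w, \hat w)$, and Assumption~\ref{ass:bounds} (length $T \le T_{\max}$, per-step times bounded by $B$) makes $\mathrm{diam}(\mathcal{X})$ finite, giving
\[
W_1(p_{\mathrm{data}}, p_G) \le \sum_k w_k\,W_1(p_k, q_k) + \mathrm{diam}(\mathcal{X})\,\mathrm{TV}(w, \hat w).
\]

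The triangle inequality and the gluing step are routine; the main obstacle is the weight-mismatch term. I would need to verify carefully that the ``common mass stays, excess mass moves'' construction is a genuine coupling (its marginals are $\sum_k w_k q_k$ and $\sum_k \hat w_k q_k$) and that its cost is charged at $\mathrm{diam}(\mathcal{X})$ per unit of moved mass without double counting; the identity $\sum_k (w_k - \hat w_k)_+ = \mathrm{TV}(w, \hat w)$ rests on $\sum_k w_k = \sum_k \hat w_k = 1$. A secondary point is to fix the trajectory semi-metric $\rho$ with respect to which both $W_1(p_k, q_k)$ and $\mathrm{diam}(\mathcal{X})$ are measured, and to confirm its diameter is controlled by $T_{\max}$ and $B$; once $\rho$ is pinned down, the constant can, if desired, be replaced by an explicit expression in $T_{\max}$ and $B$ rather than the abstract $\mathrm{diam}(\mathcal{X})$.
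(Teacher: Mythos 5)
Your proof is correct, and it arrives at exactly the inequality the paper intends: $W_1(p_{\mathrm{data}},p_G)\le\sum_k w_k\,W_1(p_k,q_k)+\mathrm{diam}(\mathcal{X})\,\mathrm{TV}(w,\hat w)$. The difference from the paper is completeness rather than strategy: in the appendix this lemma is realized inside the proof of Corollary~\ref{cor:las_derived_match}, where your entire argument is compressed into a single invocation of ``a standard mixture bound on $W_1$,'' with the constant instantiated as $\mathrm{diam}(f(\mathcal{X}))\le C_f$ because the paper states the bound for pushforwards $f_{\#}p$ of the derived variables rather than for trajectory-space $W_1$ directly. You supply the proof of that standard bound: the triangle inequality through the intermediate mixture $\sum_k w_k q_k$, gluing/convexity of $W_1$ for the within-bucket term, and the common-mass-stays coupling together with the identity $\sum_k(w_k-\hat w_k)_+=\mathrm{TV}(w,\hat w)$ for the weight-mismatch term; this is the textbook derivation and it is sound. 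What your route buys is a self-contained argument with an explicit transport plan, stated directly on trajectory space exactly as the main-text lemma is phrased; what the paper's route buys is brevity and a form immediately applicable to $\mathrm{Tot}$, $\mathrm{Avg}$, $\mathrm{Vis}$. Two minor caveats. First, your opening step uses the triangle inequality for $W_1$, which requires the ground cost $d_{\mathrm{traj}}$ itself to satisfy the triangle inequality; the paper only calls it a semi-metric, though one can verify the triangle inequality holds on the bounded support guaranteed by Assumption~\ref{ass:bounds}, or avoid the issue entirely by assembling one combined coupling of the two mixtures (optimal within-bucket couplings on the shared weight mass $\min(w_k,\hat w_k)$, arbitrary coupling of the residuals), which proves the bound without ever chaining $W_1$ distances. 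Second, the paper pairs this upper bound with a converse lower bound, Lemma~\ref{lem:rs_length_lb} ($W_1\ge B\,\mathrm{TV}(w,\hat w)$); your proposal does not address it, but the statement as given does not require it.
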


\noindent Proofs are provided in Appendix~\ref{app:theory_full}.

\subsection{Why LAS improves distribution matching in practice}
\label{subsec:theory_las}
LAS changes mini-batch construction so that each discriminator/generator update is computed on a \emph{single} length regime.
This reduces within-batch length heterogeneity and prevents length from becoming an easy within-batch shortcut feature for the discriminator.
Crucially for our empirical goal (derived-variable \emph{distribution matching}), LAS also controls \emph{exposure} to different lengths via the bucket weights $w$:
over $S$ updates, each length bucket is selected about $w_k S$ times, ensuring all length regimes contribute training signal.
Since many derived variables are length-dependent (Theorem~\ref{thm:w1_main}), improved coverage and length-level supervision translate into better distribution matching in our evaluations.
To make this mechanism explicit, we provide a simple structural statement:
under LAS, any \emph{length-only} (bucket-only) discriminator feature becomes uninformative within an update, forcing the critic to focus on within-bucket structure.
A more detailed IPM/Wasserstein view is given in Appendix~\ref{subsec:las_app}.

\begin{proposition}[LAS removes length-only shortcut critics]
\label{prop:las_projection_main}
Let \(K(x)\in\{1,\dots,K\}\) denote the length bucket of a trajectory \(x\).
For any function \(a:\{1,\dots,K\}\to\mathbb{R}\), define the bucket-only term \(\phi(x):=a(K(x))\).
In a LAS update conditioned on bucket \(k\), we have \(\phi(x)=a(k)\) almost surely under both the real and generated bucket-conditional distributions, and thus
\[
\E\!\big[\phi(X)\mid K(X)=k\big]-\E\!\big[\phi(\hat X)\mid K(\hat X)=k\big] = 0.
\]
Therefore bucket-only (length-only) components lie in a null space of the LAS discriminator objective within a bucket and cannot provide an ``easy'' within-batch shortcut signal.
\end{proposition}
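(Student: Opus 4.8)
The plan is to treat this as a direct consequence of the defining property of LAS together with the elementary fact that a random variable which is almost surely equal to a constant has that constant as its (conditional) expectation. In particular, no structural comparison between the real and generated bucket-conditional laws is needed: the argument uses only that both are supported on trajectories whose bucket index equals the conditioning value $k$.

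First I would record that the bucket map $K(\cdot)$ is a deterministic, measurable function of the trajectory, since it is obtained by applying fixed quantile cut-points to the length $\ell(x)=T$; hence for any $a:\{1,\dots,K\}\to\mathbb{R}$ the composition $\phi=a\circ K$ is a well-defined measurable function. Next I would fix a bucket $k$ and pass to the two bucket-conditional distributions, namely the law of $X$ given $K(X)=k$ and the law of $\hat X$ given $K(\hat X)=k$. On the conditioning event $\{K=k\}$ the map $K$ is identically $k$, so $\phi=a(K)=a(k)$ holds almost surely under each conditional law. Taking expectations then gives $\E[\phi(X)\mid K(X)=k]=a(k)$ and $\E[\phi(\hat X)\mid K(\hat X)=k]=a(k)$, and subtracting the two yields the claimed identity, with difference exactly $0$.

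The only point requiring care---and the closest thing to an obstacle---is the well-definedness of the conditioning, i.e.\ that each event $\{K=k\}$ has positive probability so that the conditional expectations exist. On the real side this holds because LAS only samples from nonempty buckets, with empirical mass $p_k=|\mathcal{D}_k|/|\mathcal{D}|>0$; on the generated side it suffices that within the update the generated trajectories land in bucket $k$ with positive probability, which holds either by the length-matched sampling used to form the minibatch or simply by conditioning on the event $\{K(\hat X)=k\}$ itself. Given this, the identity shows that adding any bucket-only term $\phi$ to a within-bucket critic leaves the within-bucket discriminant objective unchanged, so bucket-only (length-only) features lie in its null space and cannot furnish an easy within-batch shortcut, exactly as claimed and independently of any remaining within-bucket differences between $p_k$ and $q_k$.
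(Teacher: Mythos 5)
Your proof is correct and takes essentially the same route as the paper's own argument (the appendix lemma on bucket-only critics, Lemma~\ref{lem:las_nullspace}): conditioning on bucket $k$ forces $K(x)=k$ almost surely under both the real and generated bucket-conditional laws, hence $\phi = a(k)$ almost surely and the conditional expectations cancel. Your added remarks on measurability of $a\circ K$ and positivity of the conditioning event are a small refinement the paper leaves implicit, but they do not change the substance of the argument.
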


\section{Experiments}
\label{sec:experiments}

We evaluate \textbf{random sampling (RS)} versus \textbf{length-aware sampling (LAS)} in adversarial training for sequential trajectory data.
Across all experiments, we keep the \emph{model architecture} and \emph{optimization hyperparameters} fixed. For each dataset, RS and LAS also share the same training objective; only the mini-batch construction rule changes. Note that the objective can be \emph{dataset-specific} for public benchmarks (e.g., Wasserstein for Amazon for stability); see Appendix~\ref{app:loss_full}. 
Our primary goal is \emph{distributional fidelity} of \textbf{derived variables} (e.g., trajectory length, total time, diversity) that are used downstream for planning, simulation, and analytics.

\subsection{Datasets and derived variables}
\label{sec:datasets}

Table~\ref{tab:dataset_overview} summarizes the evaluation datasets and the derived variables we compare between ground-truth and generated samples.
For the mall datasets, each trajectory is a sequence of store visits with associated \emph{intra-store} and \emph{inter-store} durations; for the public datasets, each user trajectory is a variable-length sequence (e.g., ratings, GPS points, or question attempts), optionally with continuous attributes.

\begin{table}[t]
\caption{Evaluation datasets and derived variables. Mall identifiers are anonymized.}
\label{tab:dataset_overview}
\centering
\small
\setlength{\tabcolsep}{3pt}
\renewcommand{\arraystretch}{0.98}
\begin{tabularx}{\columnwidth}{@{}l l X@{}}
\toprule
Dataset & Domain & Derived variables (distributional evaluation) \\
\midrule
Mall A--D & Indoor mobility &
Total time in mall; number of store visits (trajectory length); avg/total intra-store time; avg/total inter-store time; store-type mix; time spent per category; floor distribution; store diversity. \\
Amazon & E-commerce ratings &
Sequence length; item diversity; mean inter-event days; duration (days); mean rating. \\
Movie & Movie ratings &
Trajectory length; inter-rating time (minutes); mean rating; rating std. \\
Education & Student learning &
Trajectory length (\#questions); mean correctness; std correctness. \\
GPS & GPS mobility &
Trajectory length; total distance (km); average speed (km/h). \\
\bottomrule
\end{tabularx}
\end{table}
\paragraph{Derived variables and evaluation metric.}
For each real or generated trajectory \\
$\pi=\{(j_t,\tau_t^{(\text{intra})},\tau_t^{(\text{inter})})\}_{t=1}^{T}$,
we compute a set of scalar summaries (``derived variables'') and compare their \emph{empirical distributions}
between real and synthetic data on the held-out test set.
Our primary distributional metric is the Kolmogorov--Smirnov (KS) distance:
\[
\mathrm{KS}(P,Q)=\sup_{x}\left|F_{P}(x)-F_{Q}(x)\right|,
\]
where $F_{P}$ and $F_{Q}$ are the empirical CDFs of the derived variable under real and generated trajectories,
respectively (for discrete variables we apply KS to the cumulative mass function under a fixed ordering).

For the mall datasets, we report KS for the following derived variables:
\begin{itemize}
    \item Total intra-store time: $M_{\text{intra}}^{\text{tot}}=\sum_{t=1}^{T}\tau_t^{(\text{intra})}$.
    \item Total inter-store time: $M_{\text{inter}}^{\text{tot}}=\sum_{t=1}^{T}\tau_t^{(\text{inter})}$.
    \item Avg.\ intra-store time: $M_{\text{avg-intra}}=\frac{1}{T}\sum_{t=1}^{T}\tau_t^{(\text{intra})}$.
    \item Avg.\ inter-store time: $M_{\text{avg-inter}}=\frac{1}{\max(T-1,1)}\sum_{t=1}^{T}\tau_t^{(\text{inter})}$.
    \item Total time in mall: $M_{\text{tot}}=M_{\text{intra}}^{\text{tot}}+M_{\text{inter}}^{\text{tot}}$.
    \item Trajectory length (\#visits): $M_{\text{len}}=T$.
    \item Store diversity: $M_{\text{div-store}}=\big|\{j_t\}_{t=1}^{T}\big|$.
\end{itemize}
For category/floor summaries, with $c(j_t)$ the store category and $f(j_t)$ the floor, we form
per-trajectory histograms such as visit counts $N_{c}=\sum_{t=1}^{T}\mathbf{1}[c(j_t)=c]$ and
floor counts $N_{f}=\sum_{t=1}^{T}\mathbf{1}[f(j_t)=f]$, as well as time-by-category
$T^{(\text{intra})}_{c}=\sum_{t=1}^{T}\tau_t^{(\text{intra})}\mathbf{1}[c(j_t)=c]$,
and compare their induced marginals across trajectories.
Analogous trajectory-level summaries are used for the public datasets (Table~\ref{tab:dataset_overview}).

\subsection{Implementation details}
\label{sec:impl_details}
\paragraph{Model configuration (notation $\rightarrow$ value).}
For the mall experiments, dataset-specific constants are set from the data (e.g., number of stores/floors/categories),
while embedding sizes and network widths are shared across experiments.
For one representative mall, we use:
\begin{center}
\small
\setlength{\tabcolsep}{4pt}
\renewcommand{\arraystretch}{0.95}
\begin{tabular}{l l l}
\toprule
Symbol & Description & Value \\
\midrule
$|\mathcal{S}|$ & number of stores & $202$ \\
$F$ & number of floors & $3$ \\
$C$ & number of store categories & $19$ \\
\midrule
$d_e$ & store embedding dimension & $32$ \\
$h$ & LSTM hidden size & $128$ \\
$z$ & latent dimension (generator) & $16$ \\
$d_{\text{type}}$ & store--type embedding dimension & $16$ \\
$d_{\text{floor}}$ & floor embedding dimension & $8$ \\
\bottomrule
\end{tabular}
\end{center}

\paragraph{Training protocol.}
Training follows the procedure described in the algorithmic section, with the same loss notation and objectives:
the adversarial loss for realism and $\ell_1$ losses for time heads (intra/inter) weighted as in the loss section.
We use Adam optimizers ($\beta_1{=}0.5,\ \beta_2{=}0.999$) with learning rate $10^{-4}$ for both generator and discriminator,
batch size $128$, spectral normalization on linear layers, and Gumbel–Softmax sampling for store selection with an annealed temperature
from $1.5$ down to $0.1$.
Training runs for up to $18$ epochs with early stopping (patience $=3$) based on generator loss.

\subsection{Evaluation protocol}
\label{sec:eval_protocol}

For each dataset, we train two models with identical architectures and hyperparameters: one using \textbf{random sampling (RS)} and one using \textbf{length-aware sampling (LAS)}; the only difference is how mini-batches are constructed during training.
We evaluate on held-out test data.
For the mall domain, we split by \emph{unique days} (80\%/20\%) to prevent temporal leakage and generate trajectories under the same day-level context as the test set (Appendix~\ref{app:exp_full}).
For public datasets, we use a held-out split as described in Appendix~\ref{app:exp_full}.

Our goal is \emph{distributional fidelity} of the derived variables in Table~\ref{tab:dataset_overview}.
On the test set, we compare empirical distributions between real and generated samples and report the \textbf{Kolmogorov--Smirnov statistic (KS)} for each derived variable (lower is better).
We also visualize distribution overlays for representative variables; additional diagnostics (e.g., $t$-tests, KL divergence) and full plots are provided in Appendix~\ref{app:exp_full}.

\subsection{Mall digital-twin results}
\label{sec:mall_results}

Table~\ref{tab:mall_4_results} reports KS statistics on six key derived variables across four proprietary mall datasets.
LAS consistently improves the \emph{length-related} and \emph{time-related} marginals (e.g., \#visits and total time), and reduces the overall mean KS across these metrics from \textbf{0.737} (RS) to \textbf{0.253} (LAS), a \textbf{65.7\%} relative reduction.
Figure~\ref{fig:mall_main} visualizes representative distributions on Mall~D.

\begin{table*}[!t]
\caption{Mall datasets: goodness-of-fit for derived variables. We report KS statistics between ground-truth and generated distributions (lower is better).}
\label{tab:mall_4_results}
\centering
\scriptsize
\setlength{\tabcolsep}{3pt}
\renewcommand{\arraystretch}{0.95}
\begin{tabular}{lcccccccc}
\toprule
 & \multicolumn{2}{c}{Mall A} & \multicolumn{2}{c}{Mall B} & \multicolumn{2}{c}{Mall C} & \multicolumn{2}{c}{Mall D} \\
\cmidrule(lr){2-3}\cmidrule(lr){4-5}\cmidrule(lr){6-7}\cmidrule(lr){8-9}
Derived variable & RS & LAS & RS & LAS & RS & LAS & RS & LAS \\
\midrule
Total time in mall & 0.528 & 0.056 & 0.538 & 0.152 & 0.630 & 0.269 & 0.661 & 0.072 \\
Trajectory length / \#visits & 0.955 & 0.047 & 0.947 & 0.048 & 0.953 & 0.048 & 0.951 & 0.044 \\
Avg intra-store time & 0.975 & 0.005 & 0.978 & 0.066 & 0.975 & 0.382 & 0.959 & 0.034 \\
Avg inter-store time & 0.622 & 0.289 & 0.645 & 0.380 & 0.684 & 0.404 & 0.767 & 0.456 \\
Store category mix & 0.278 & 0.333 & 0.506 & 0.287 & 0.467 & 0.333 & 0.477 & 0.303 \\
Floor distribution & 1.000 & 0.667 & 1.000 & 0.333 & 1.000 & 0.667 & 0.200 & 0.400 \\
\midrule
Mean across metrics & 0.726 & 0.233 & 0.769 & 0.211 & 0.785 & 0.350 & 0.669 & 0.218 \\
\bottomrule
\end{tabular}
\end{table*}

\begin{figure}[t]
\centering
\begin{subfigure}{0.48\columnwidth}
\centering
\includegraphics[width=\linewidth]{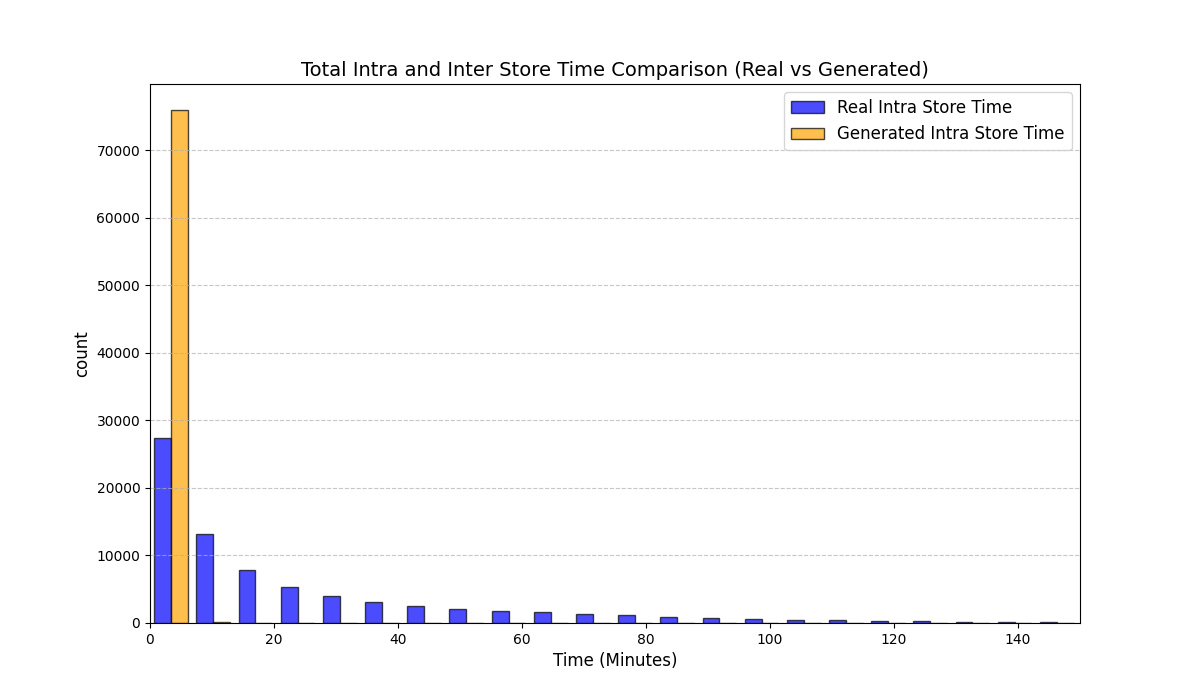}
\caption{RS: total time in mall (Mall D).}
\end{subfigure}\hfill
\begin{subfigure}{0.48\columnwidth}
\centering
\includegraphics[width=\linewidth]{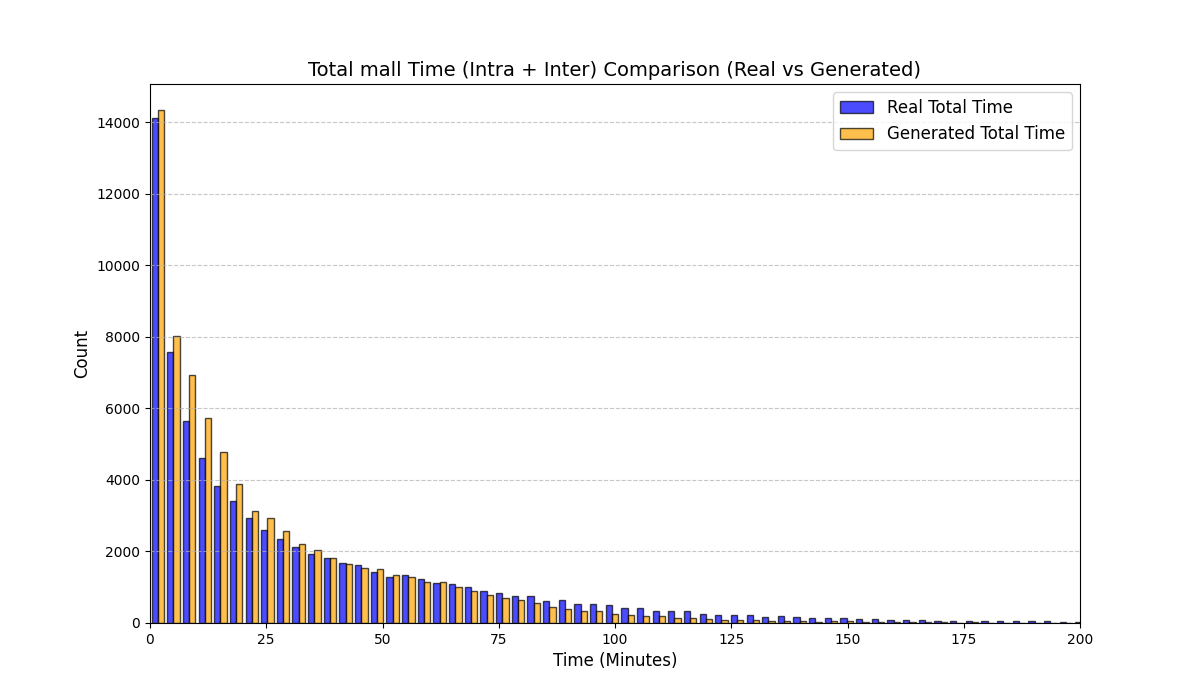}
\caption{LAS: total time in mall (Mall D).}
\end{subfigure}

\vspace{0.2em}

\begin{subfigure}{0.48\columnwidth}
\centering
\includegraphics[width=\linewidth]{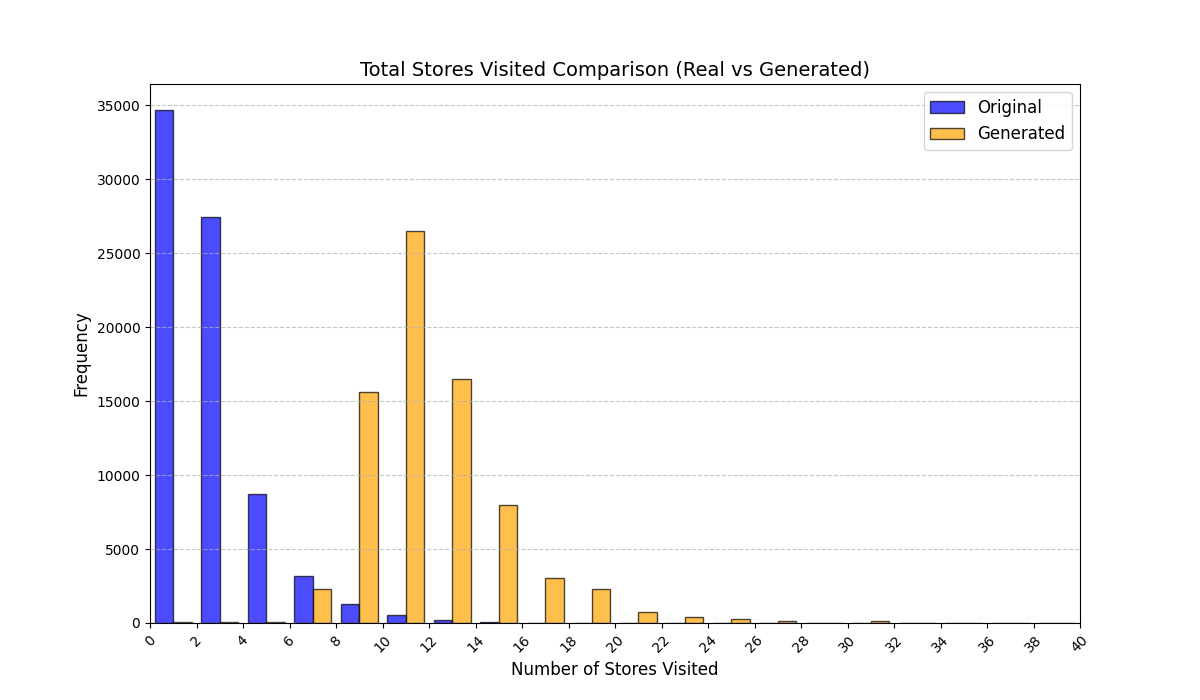}
\caption{RS: \#store visits / trajectory length (Mall D).}
\end{subfigure}\hfill
\begin{subfigure}{0.48\columnwidth}
\centering
\includegraphics[width=\linewidth]{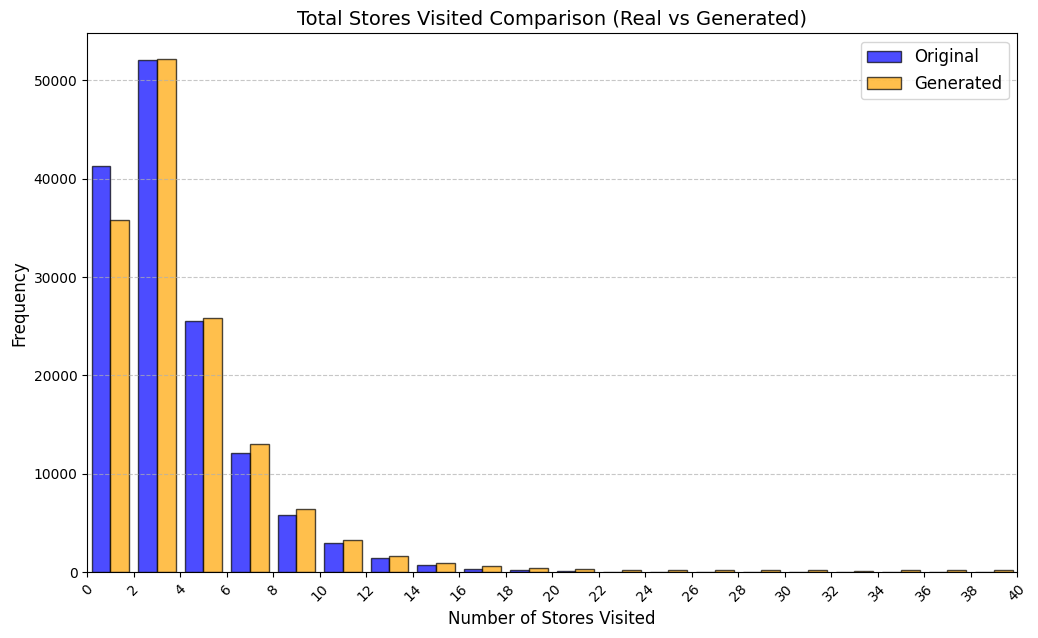}
\caption{LAS: \#store visits / trajectory length (Mall D).}
\end{subfigure}
\caption{Representative mall distributions. LAS improves agreement with the ground-truth marginals without changing the GAN objective.}
\label{fig:mall_main}
\end{figure}

To summarize performance across a broader set of mall-derived variables, Table~\ref{tab:mall_metric_avgs} reports the \emph{average} KS across \textbf{all ten} mall metrics (timing, diversity, and categorical marginals). 
On average, LAS reduces mean KS from \textbf{0.697} (RS) to \textbf{0.247} (LAS), a \textbf{64.5\%} reduction.
Figure~\ref{fig:mall_length_all} further shows trajectory-length distributions across all four malls.

\begin{table}[t]
\caption{Mall datasets: mean KS across malls for each derived variable. The final row is the mean across all metrics.}
\label{tab:mall_metric_avgs}
\centering
\footnotesize
\setlength{\tabcolsep}{3pt}
\renewcommand{\arraystretch}{0.95}
\begin{tabularx}{\columnwidth}{@{}>{\raggedright\arraybackslash}X c c c@{}}
\toprule
Derived variable & \shortstack{Mean KS\\(RS)} & \shortstack{Mean KS\\(LAS)} & \shortstack{Relative\\reduction} \\
\midrule
Avg intra-store time & 0.972 & 0.122 & 87.5\% \\
Number of visits & 0.951 & 0.047 & 95.1\% \\
Floor distribution & 0.800 & 0.517 & 35.4\% \\
Total intra-store time & 0.796 & 0.119 & 85.1\% \\
Total inter-store time & 0.763 & 0.380 & 50.3\% \\
Avg inter-store time & 0.679 & 0.382 & 43.8\% \\
Total time in mall & 0.589 & 0.137 & 76.7\% \\
Store diversity & 0.556 & 0.066 & 88.1\% \\
Store type distribution & 0.432 & 0.314 & 27.3\% \\
Time spent per category & 0.426 & 0.390 & 8.4\% \\
\midrule
All metrics (mean) & 0.697 & 0.247 & 64.5\% \\
\bottomrule
\end{tabularx}
\end{table}

See Appendix~\ref{app:extra_mall_plots} for per-mall trajectory-length (\#visits) distribution plots under RS and LAS.

\subsection{Public sequential datasets}
\label{sec:public_results}

Table~\ref{tab:public_results} reports KS statistics on four public datasets: Amazon (e-commerce ratings), Movie (movie ratings), Education (student learning sequences), and GPS (mobility trajectories).
We observe the largest gains on \emph{duration} and \emph{diversity} related metrics on Amazon, consistent improvements on Movie inter-event timing, and clear reductions on GPS and Education derived-variable mismatches (especially length-related marginals).
Figures~\ref{fig:amazon_main}--\ref{fig:gps_main} visualize representative marginals.

\begin{table}[t]
\caption{Public datasets: KS statistics (lower is better). For each dataset, RS and LAS share the same model and objective; only the batching strategy differs.}
\label{tab:public_results}
\centering
\small
\begin{tabular}{llcc}
\toprule
Dataset & Derived variable & RS & LAS \\
\midrule
Amazon & Sequence length & 0.002 & 0.002 \\
Amazon & Item diversity & 0.338 & 0.020 \\
Amazon & Inter-event days & 0.456 & 0.170 \\
Amazon & Duration (days) & 0.413 & 0.046 \\
Amazon & Mean rating & 0.632 & 0.590 \\
\addlinespace
Movie & Trajectory length & 0.120 & 0.067 \\
Movie & Inter-rating time (min) & 0.466 & 0.294 \\
Movie & Mean rating & 0.155 & 0.106 \\
Movie & Rating std & 0.754 & 0.669 \\
\addlinespace
Education & Trajectory length & 0.411 & 0.164 \\
Education & Mean correctness & 0.9997 & 0.529 \\
Education & Std correctness & 0.9994 & 0.350 \\
\addlinespace
GPS & Trajectory length & 0.243 & 0.0287 \\
GPS & Total distance (km) & 0.284 & 0.142 \\
GPS & Average speed (km/h) & 0.312 & 0.108 \\
\midrule
Amazon & Mean across metrics & 0.368 & 0.166 \\
Movie & Mean across metrics & 0.373 & 0.284 \\
Education & Mean across metrics & 0.803 & 0.348 \\
GPS & Mean across metrics & 0.280 & 0.093 \\
\bottomrule
\end{tabular}
\end{table}

\begin{figure}[t]
\centering
\begin{subfigure}{0.48\columnwidth}
\centering
\includegraphics[width=\linewidth]{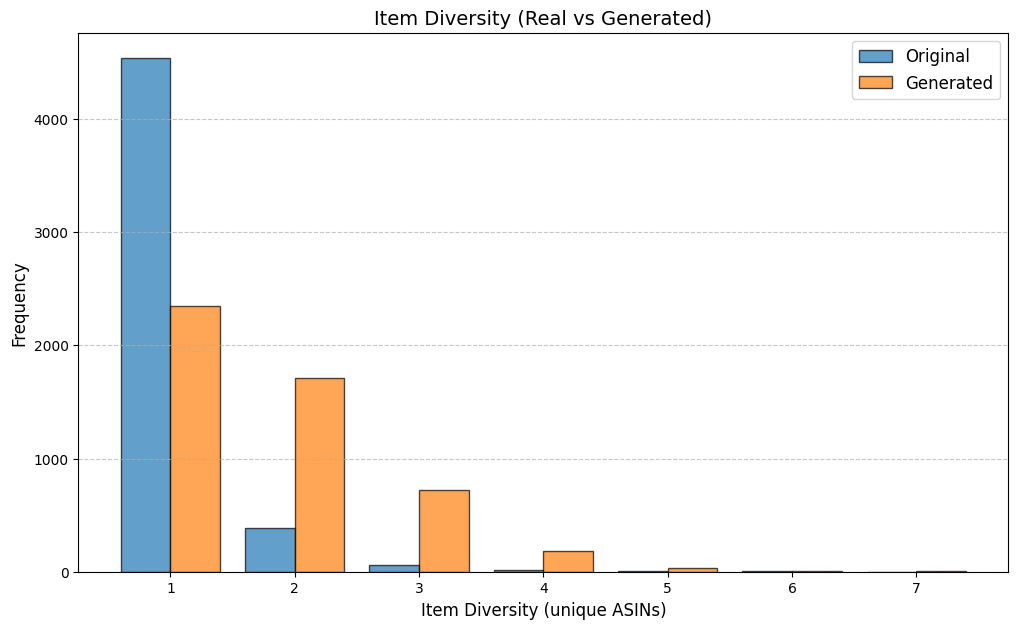}
\caption{RS: item diversity.}
\end{subfigure}\hfill
\begin{subfigure}{0.48\columnwidth}
\centering
\includegraphics[width=\linewidth]{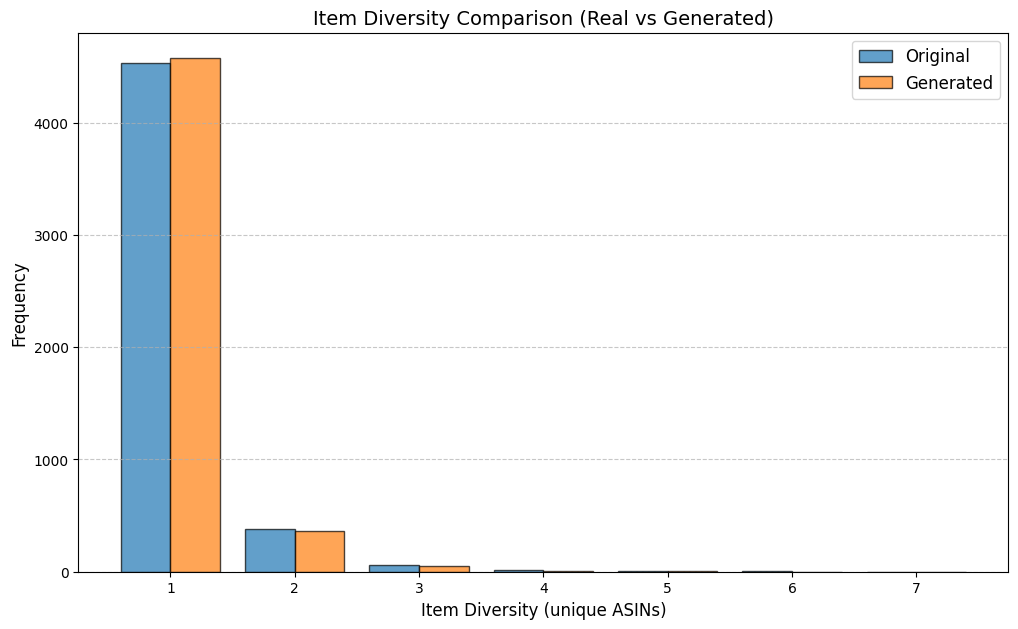}
\caption{LAS: item diversity.}
\end{subfigure}

\vspace{0.5em}

\begin{subfigure}{0.48\columnwidth}
\centering
\includegraphics[width=\linewidth]{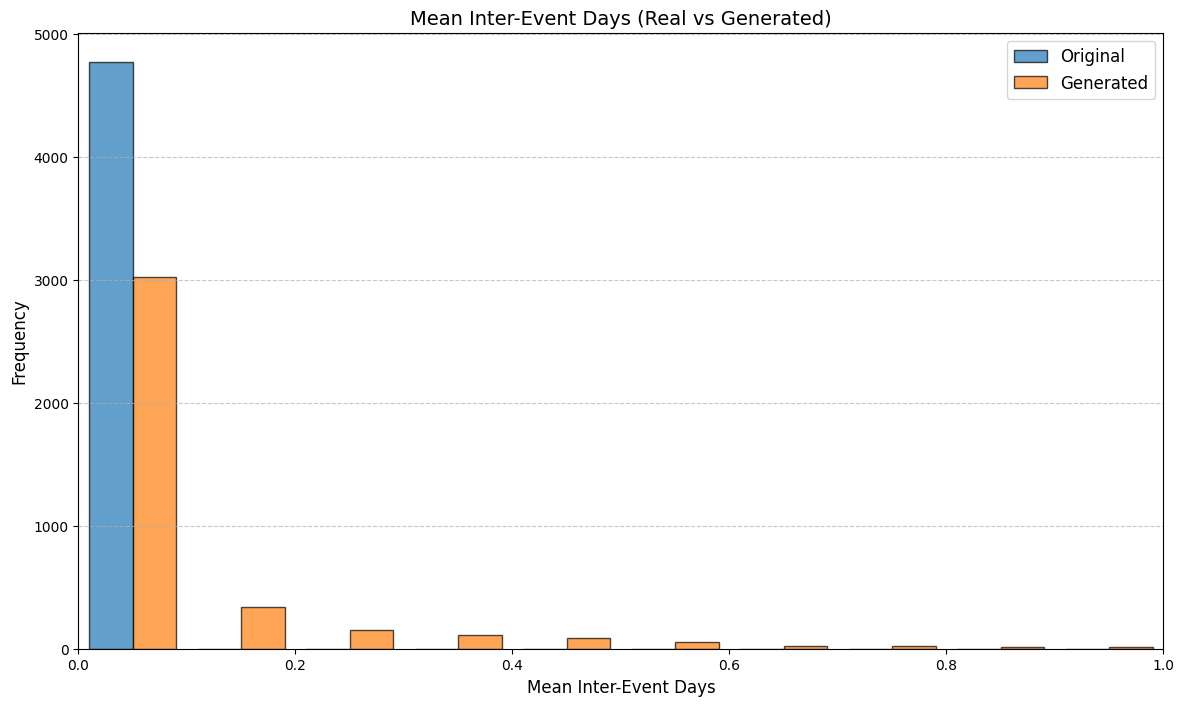}
\caption{RS: inter-event days.}
\end{subfigure}\hfill
\begin{subfigure}{0.48\columnwidth}
\centering
\includegraphics[width=\linewidth]{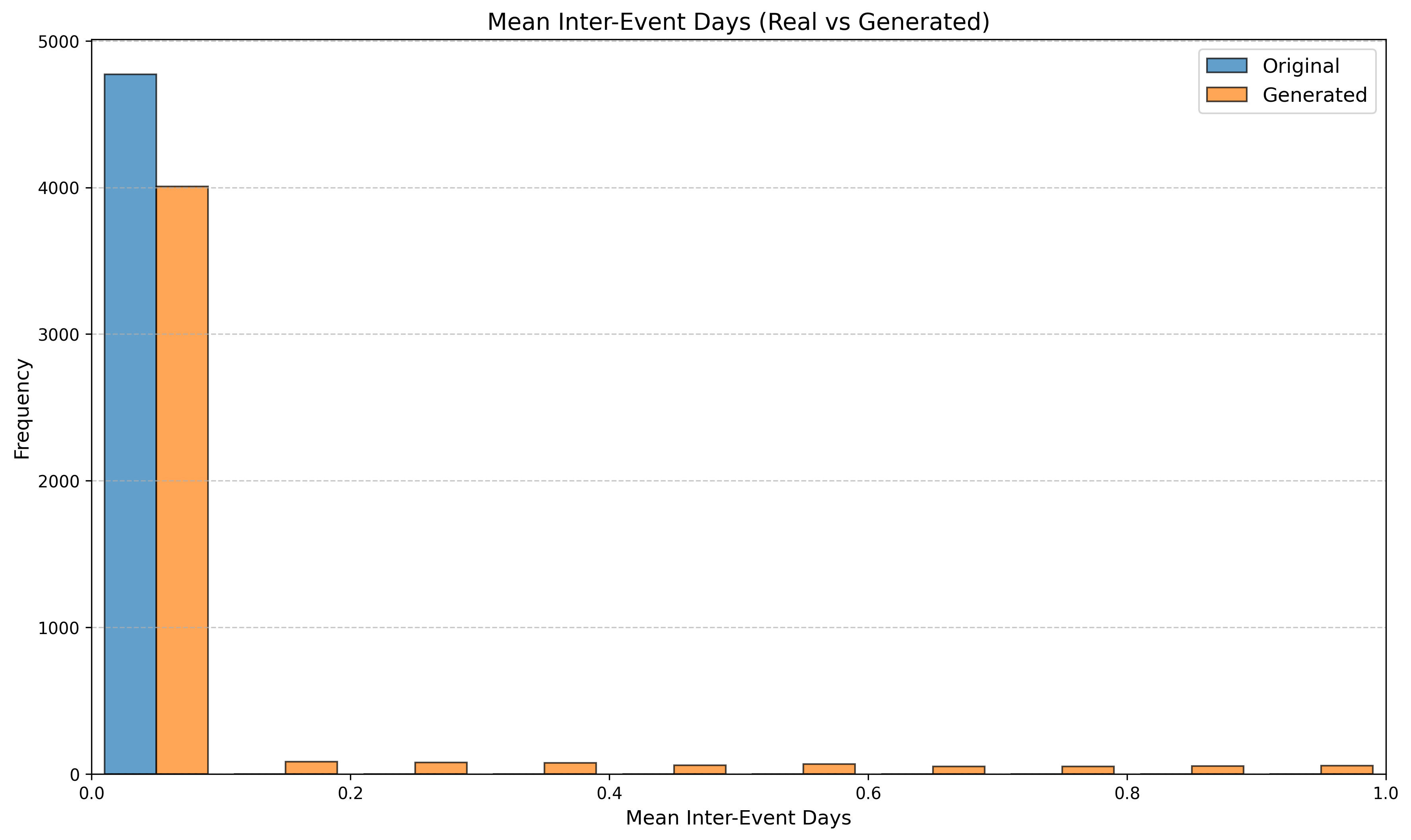}
\caption{LAS: inter-event days.}
\end{subfigure}
\caption{Amazon marginals. LAS improves agreement on diversity and timing-related derived variables.}
\label{fig:amazon_main}
\end{figure}

\begin{figure}[t]
\centering
\begin{subfigure}{0.48\columnwidth}
\centering
\includegraphics[width=\linewidth]{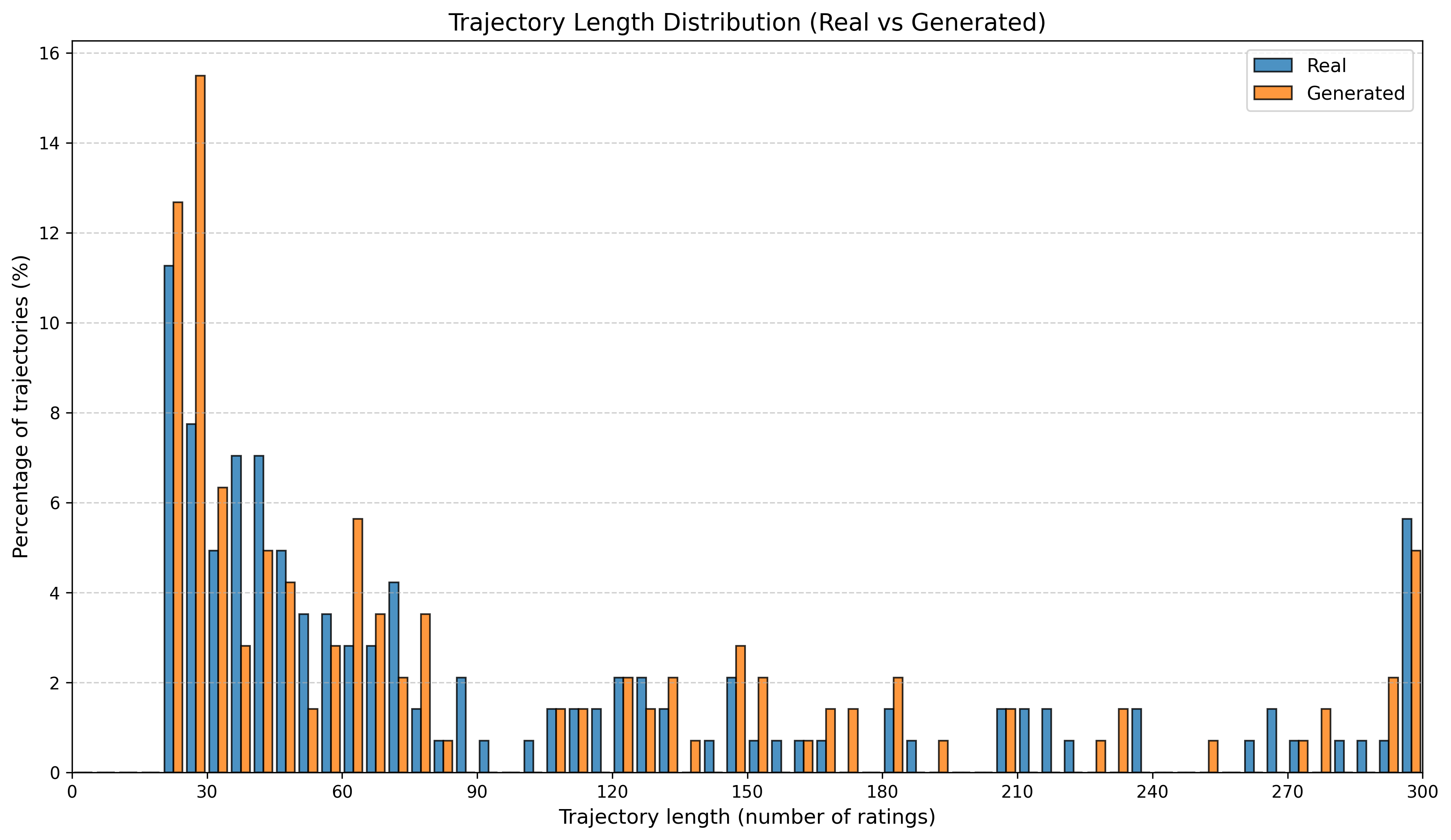}
\caption{RS: trajectory length.}
\end{subfigure}\hfill
\begin{subfigure}{0.48\columnwidth}
\centering
\includegraphics[width=\linewidth]{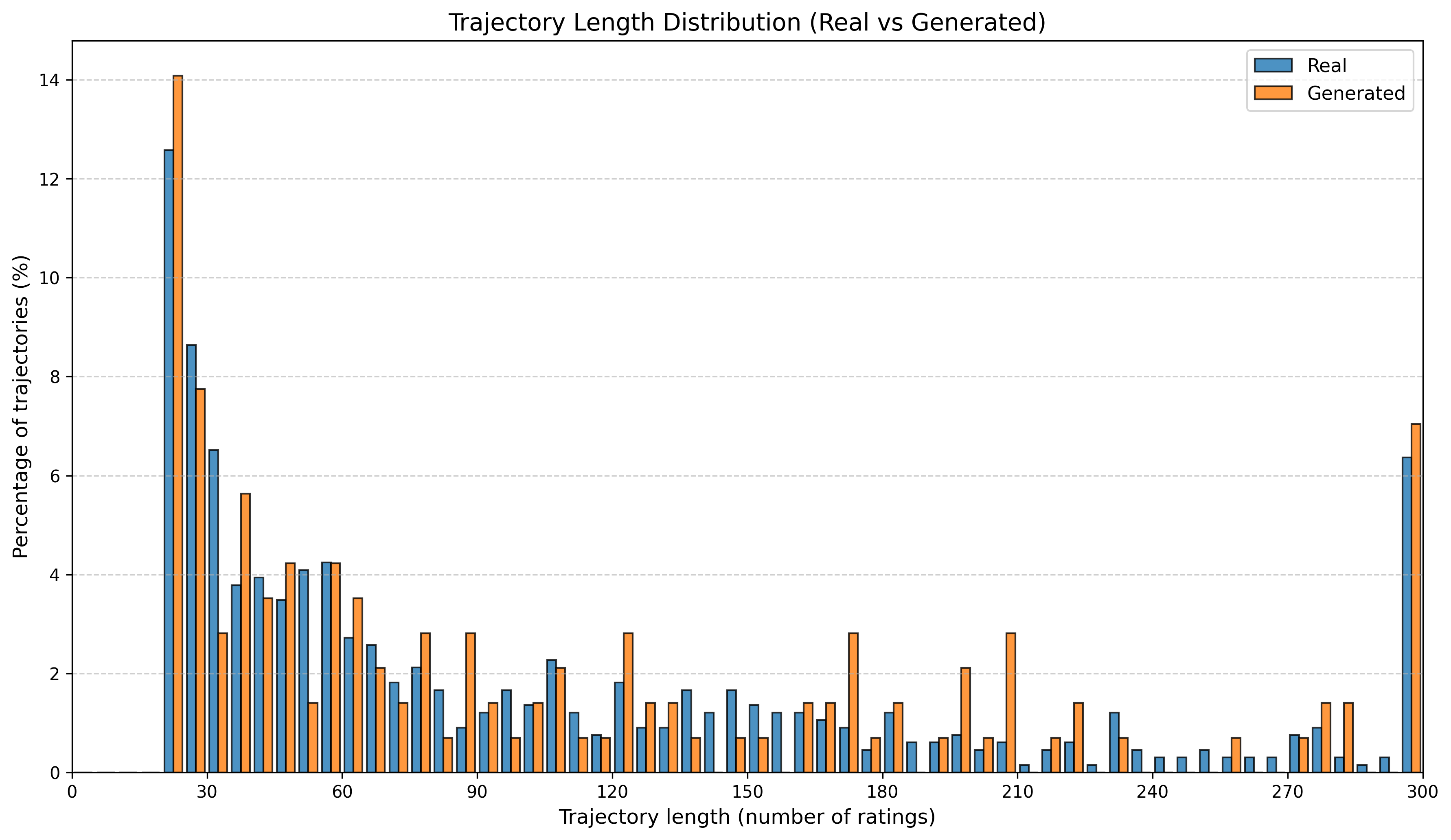}
\caption{LAS: trajectory length.}
\end{subfigure}

\vspace{0.5em}

\begin{subfigure}{0.48\columnwidth}
\centering
\includegraphics[width=\linewidth]{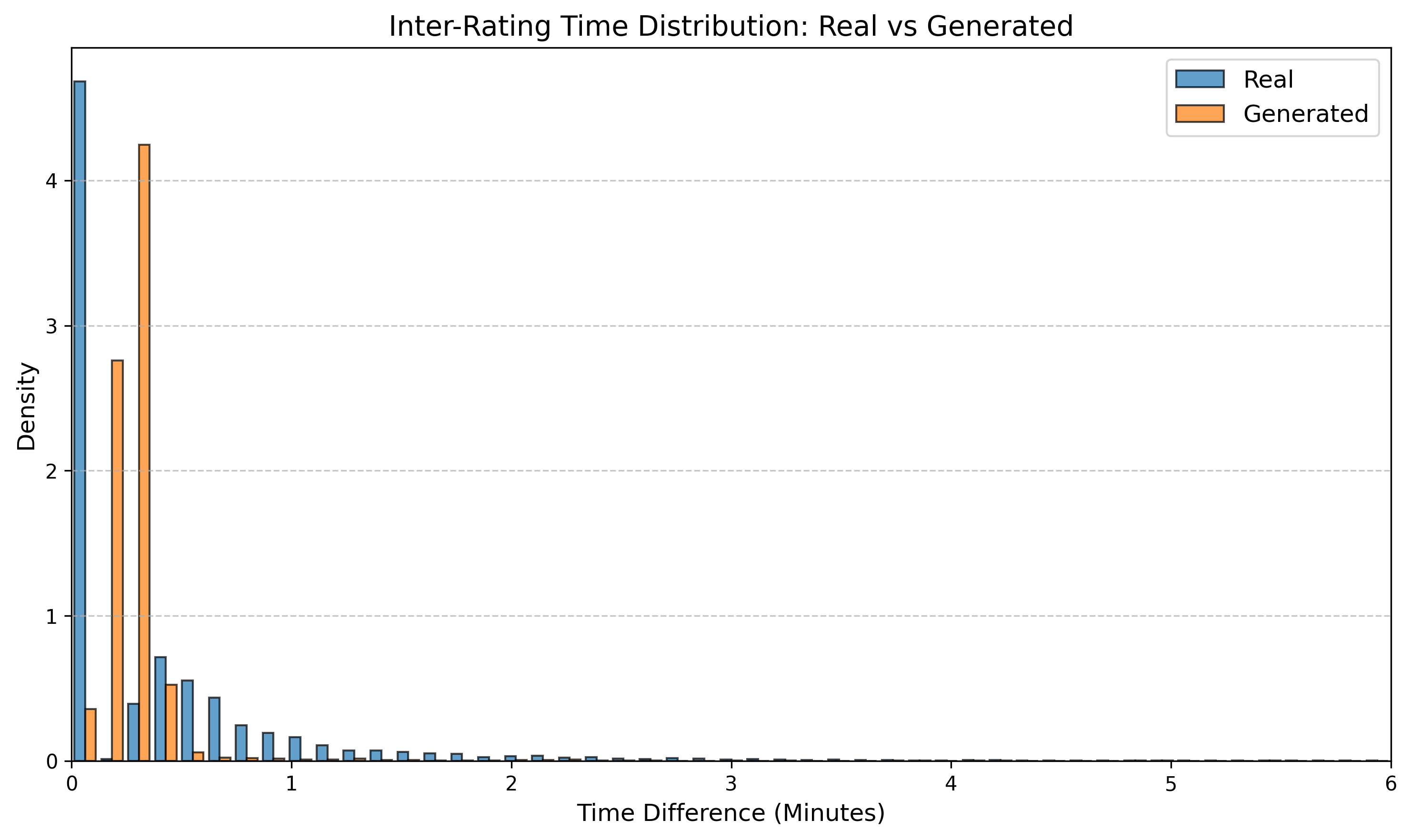}
\caption{RS: inter-rating time.}
\end{subfigure}\hfill
\begin{subfigure}{0.48\columnwidth}
\centering
\includegraphics[width=\linewidth]{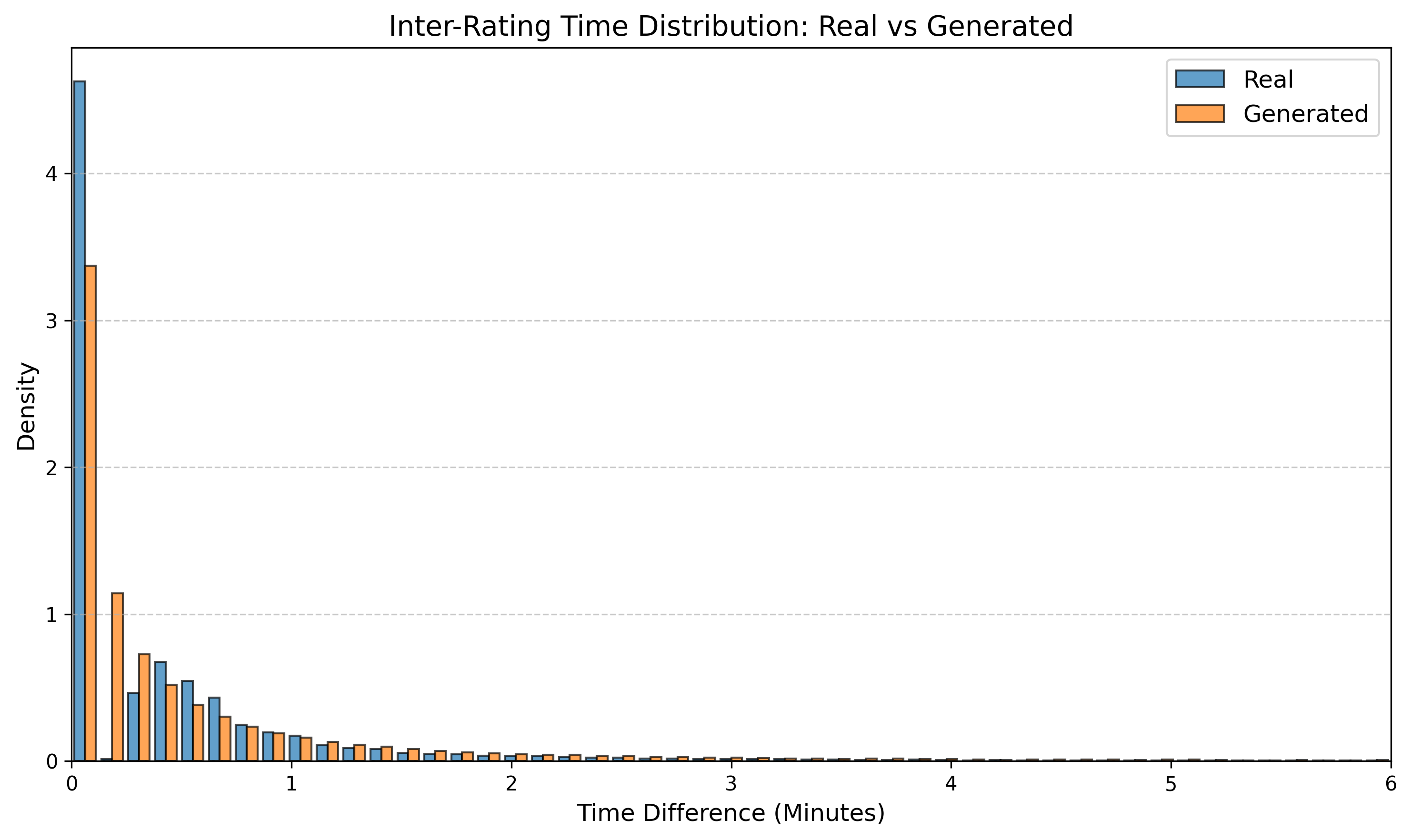}
\caption{LAS: inter-rating time.}
\end{subfigure}
\caption{Movie marginals. LAS improves both trajectory-length and inter-event timing distributions.}
\label{fig:movie_main}
\end{figure}
\begin{figure}[t]
\centering
\begin{subfigure}[b]{0.48\linewidth}
\centering
\includegraphics[width=\linewidth]{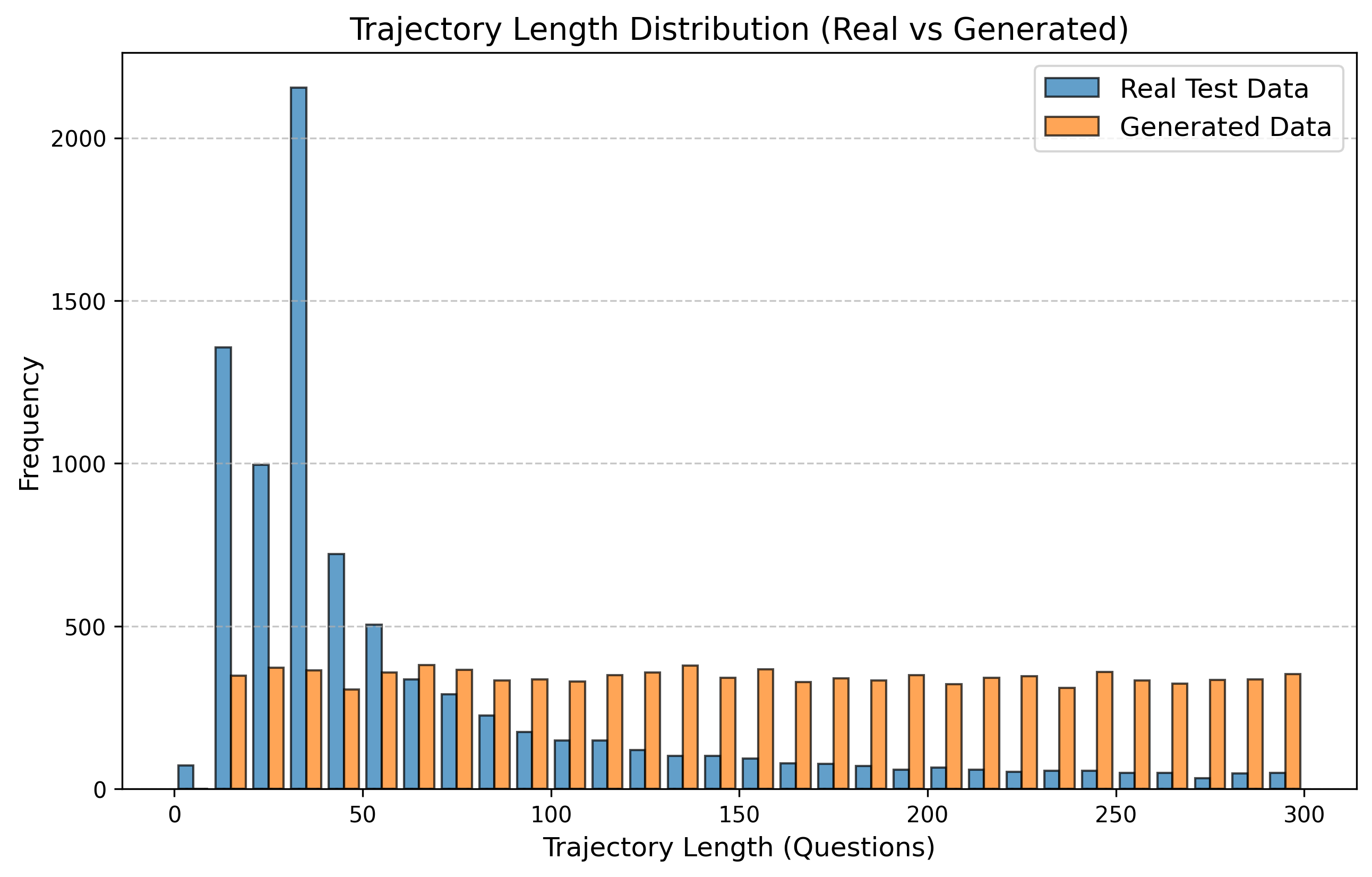}
\caption{RS: trajectory length}
\end{subfigure}
\hfill
\begin{subfigure}[b]{0.48\linewidth}
\centering
\includegraphics[width=\linewidth]{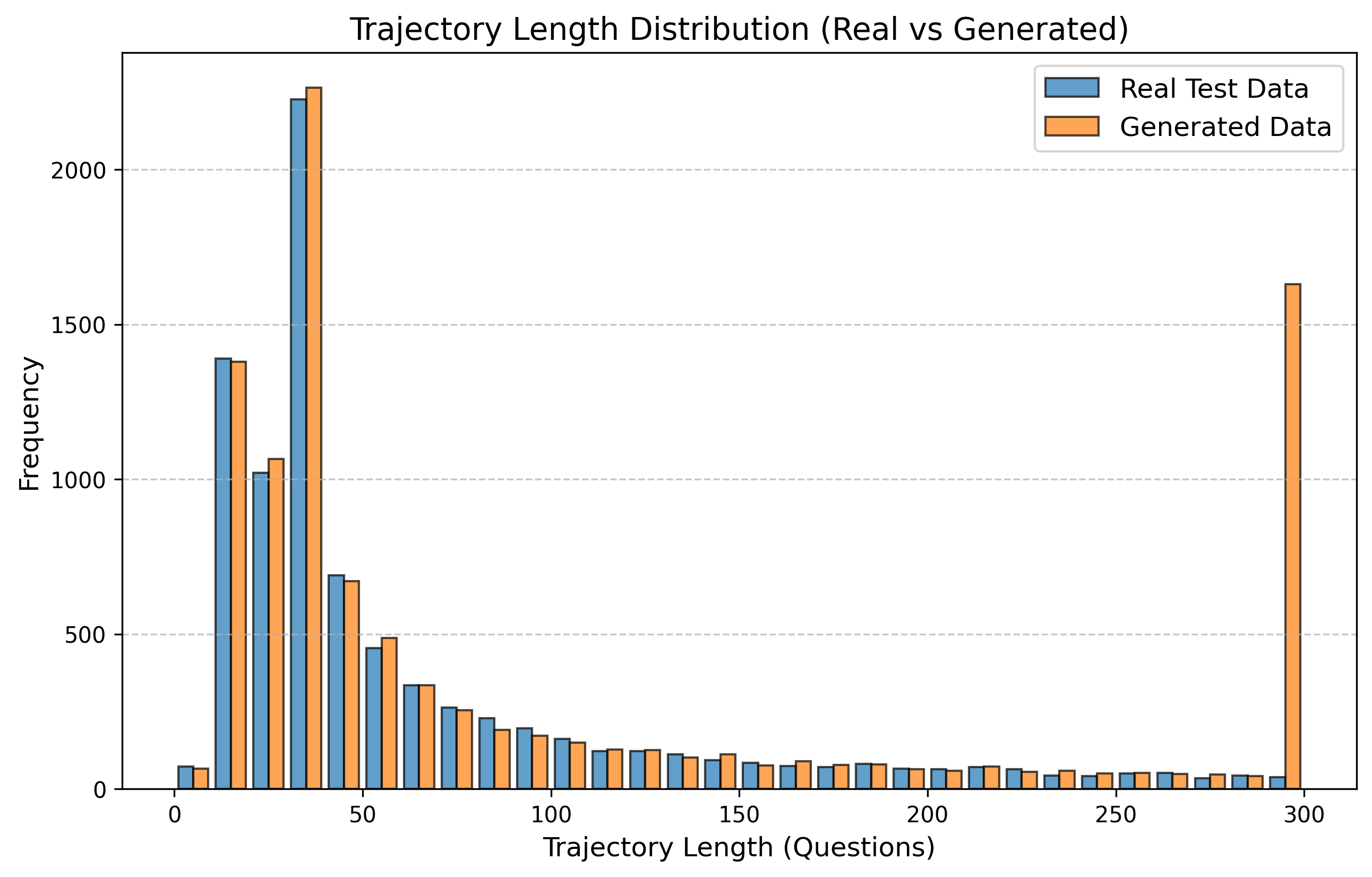}
\caption{LAS: trajectory length}
\end{subfigure}

\begin{subfigure}[b]{0.48\linewidth}
\centering
\includegraphics[width=\linewidth]{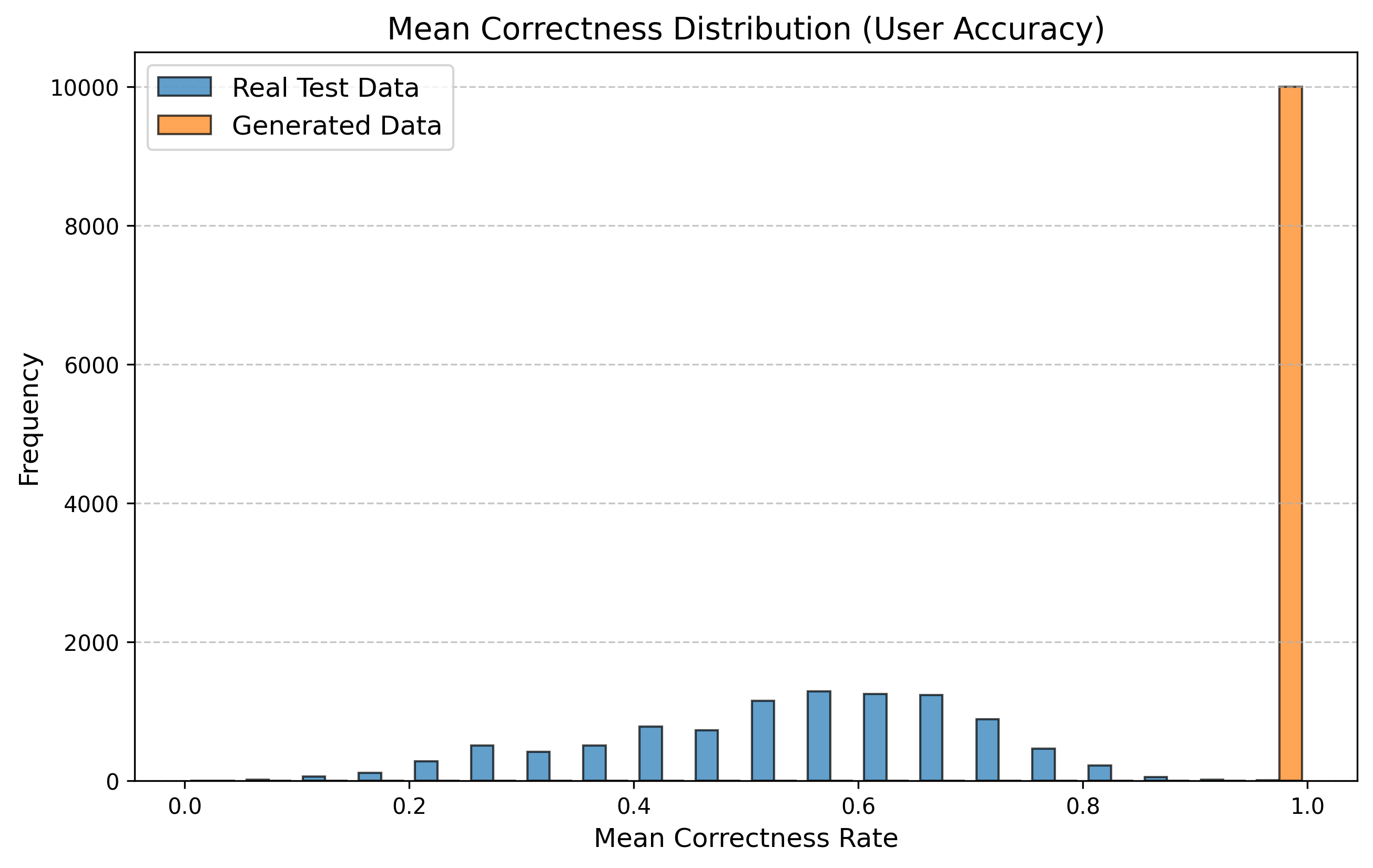}
\caption{RS: mean correctness}
\end{subfigure}
\hfill
\begin{subfigure}[b]{0.48\linewidth}
\centering
\includegraphics[width=\linewidth]{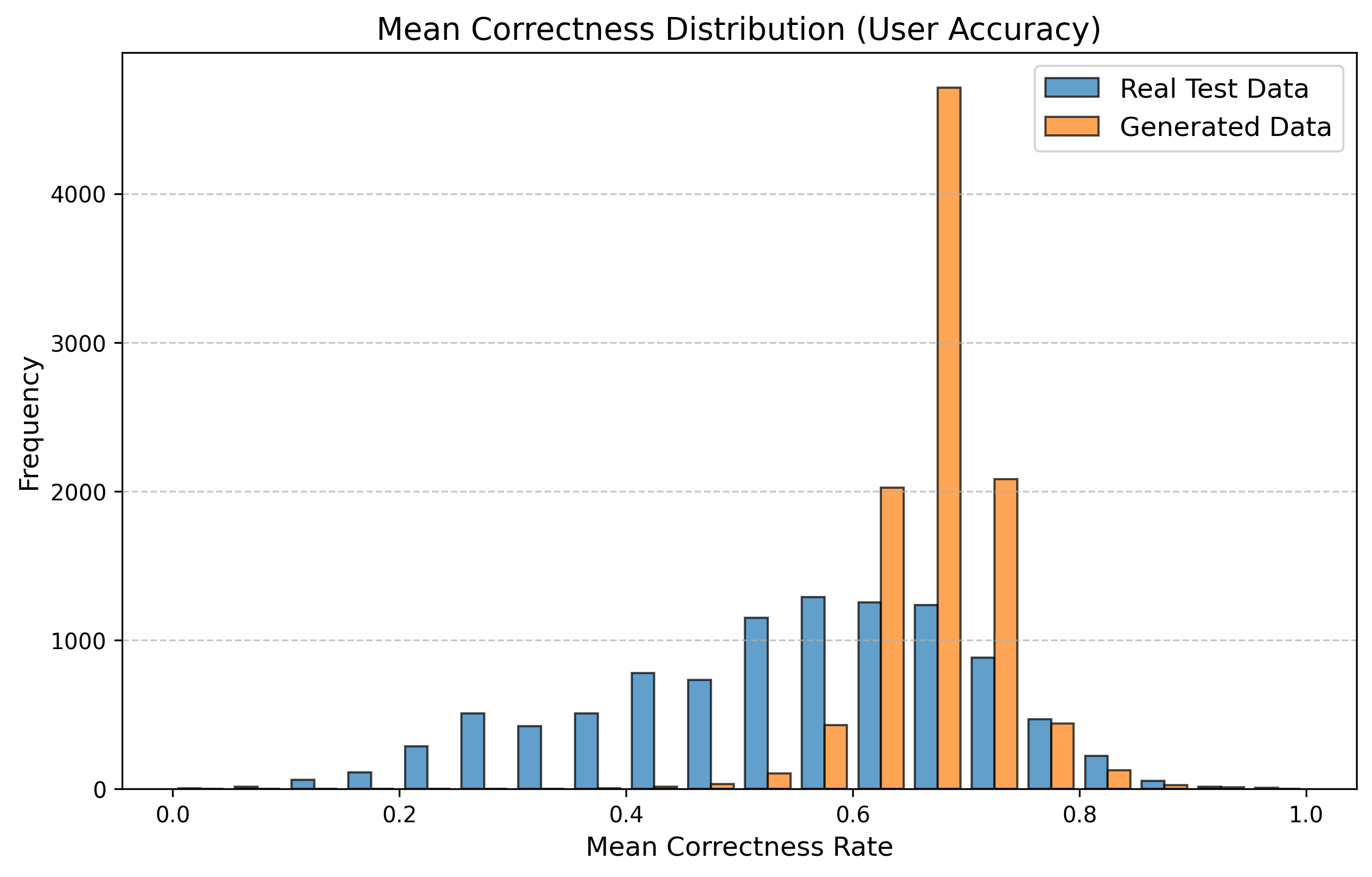}
\caption{LAS: mean correctness}
\end{subfigure}
\caption{Education dataset: representative marginals under RS and LAS.}
\label{fig:edu_main}
\end{figure}

\begin{figure}[t]
\centering
\begin{subfigure}[b]{0.48\linewidth}
\centering
\includegraphics[width=\linewidth]{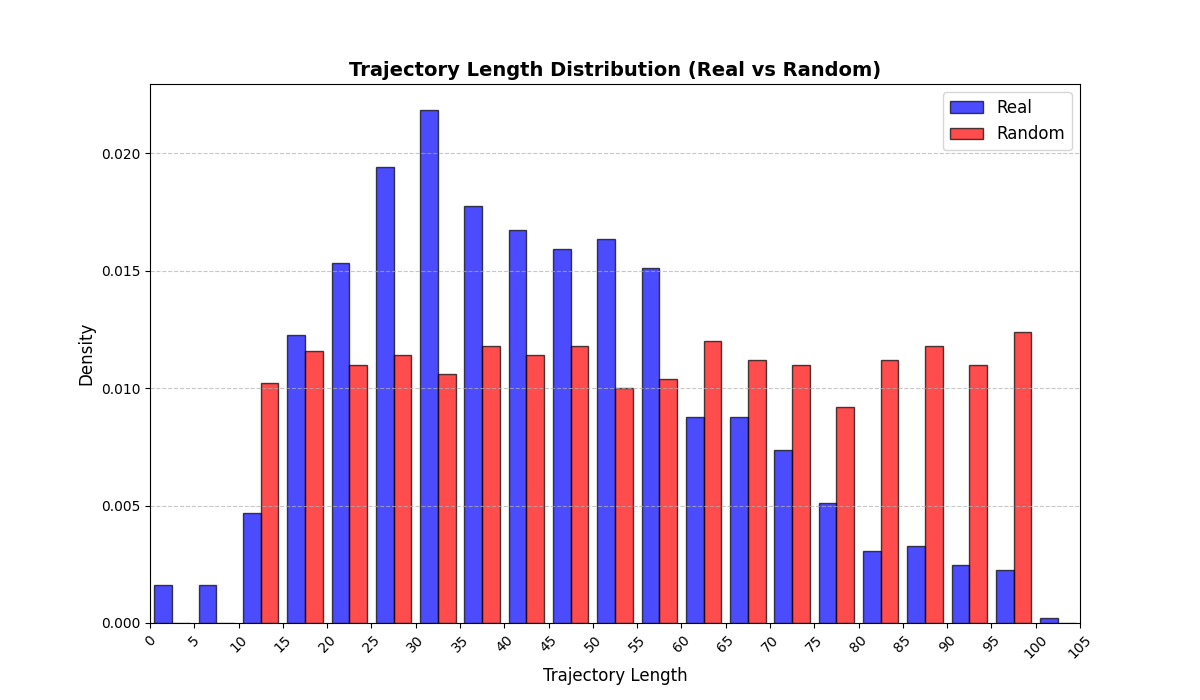}
\caption{RS: trajectory length}
\end{subfigure}
\hfill
\begin{subfigure}[b]{0.48\linewidth}
\centering
\includegraphics[width=\linewidth]{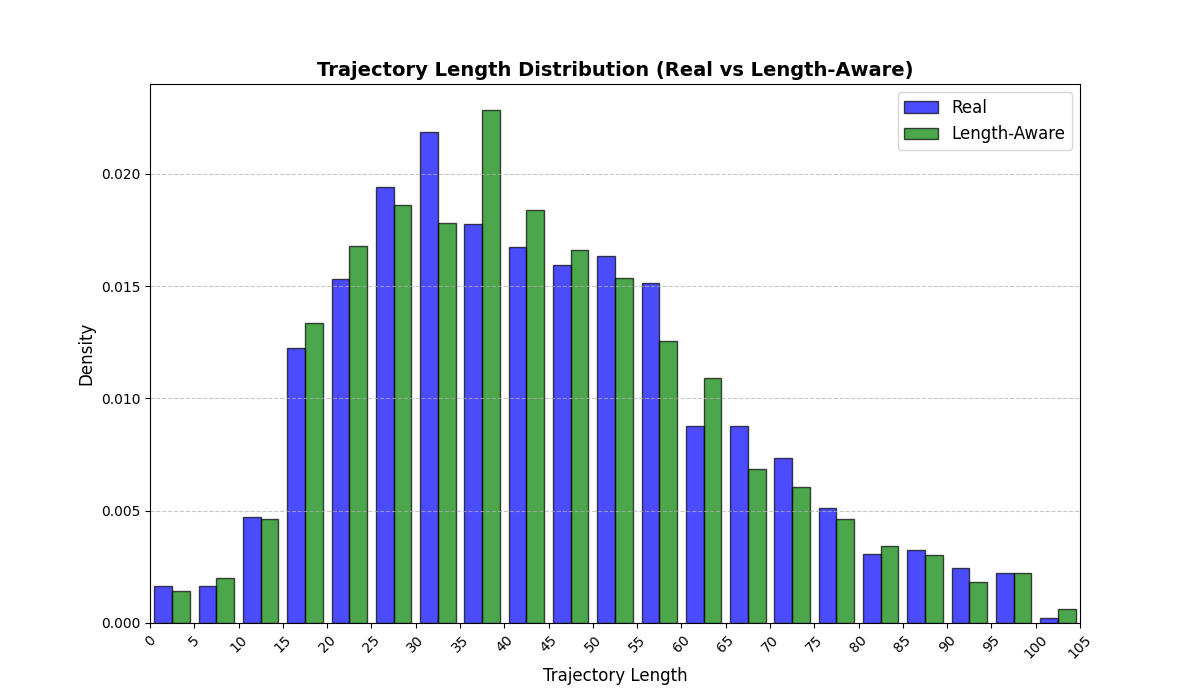}
\caption{LAS: trajectory length}
\end{subfigure}

\begin{subfigure}[b]{0.48\linewidth}
\centering
\includegraphics[width=\linewidth]{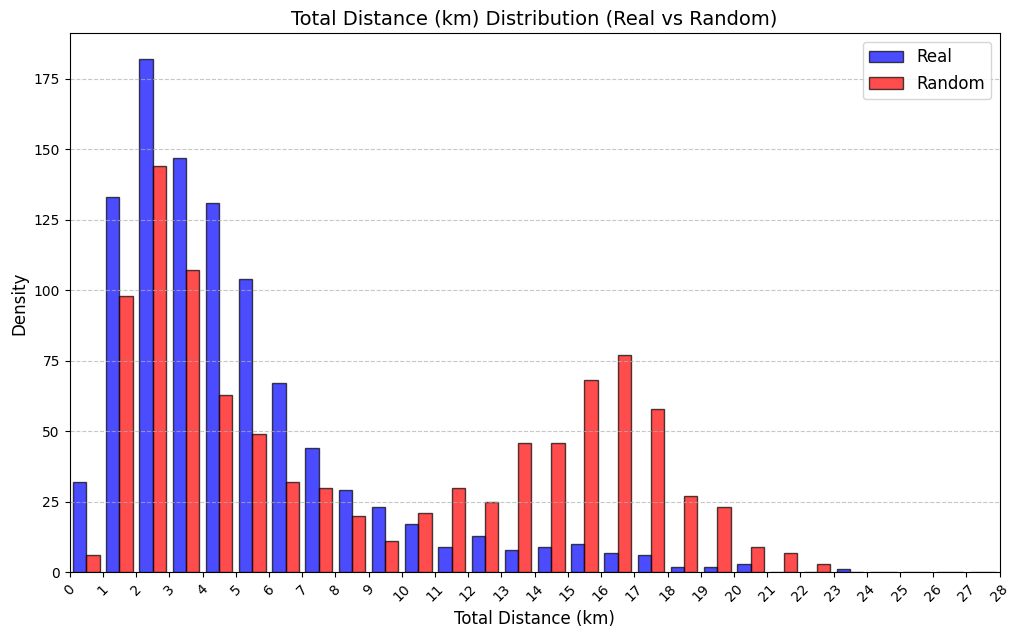}
\caption{RS: total distance}
\end{subfigure}
\hfill
\begin{subfigure}[b]{0.48\linewidth}
\centering
\includegraphics[width=\linewidth]{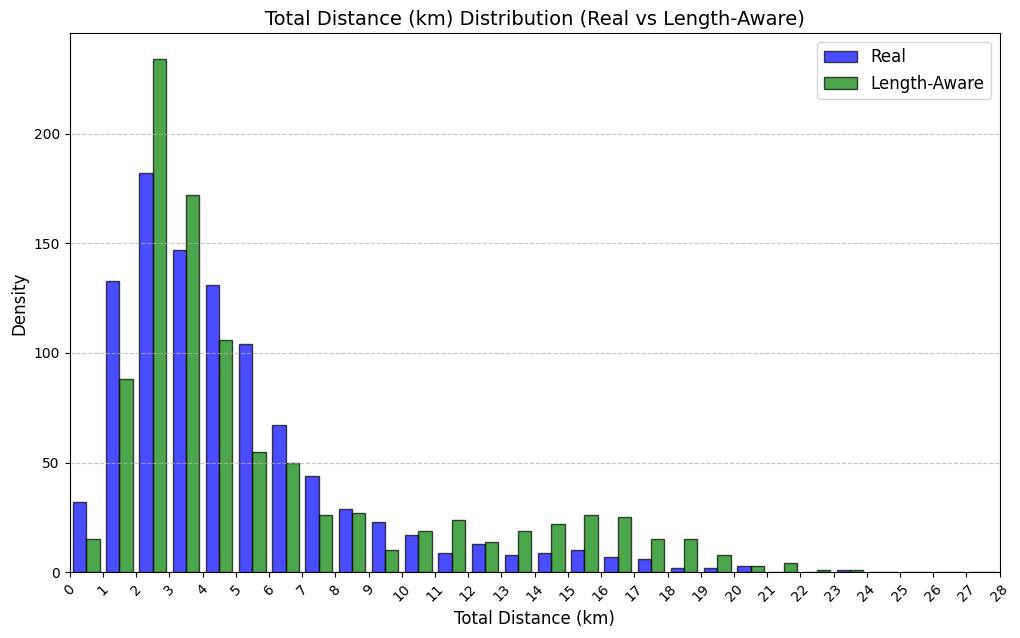}
\caption{LAS: total distance}
\end{subfigure}
\caption{GPS dataset: representative marginals under RS and LAS.}
\label{fig:gps_main}
\end{figure}

\subsection{Discussion}
LAS is most effective when the dataset exhibits \emph{substantial length heterogeneity}, where RS mixes short and long trajectories within a mini-batch and can make length an easy shortcut feature for the discriminator.
In such settings, we find LAS often yields more consistent adversarial updates and better distribution matching for length- and time-related derived variables.
We occasionally observe smaller gains (or mild regressions) on certain categorical marginals (e.g., store-type or floor distributions in some malls), suggesting that controlling length alone may not fully resolve capacity or representation limits of the underlying generator/discriminator.
Overall, LAS provides a strong ``drop-in'' improvement that improves distributional fidelity of key derived variables, and in many cases leads to more stable training behavior in practice.

\subsection{Controllability and what-if analyses}
\label{sec:controllability_main}
Beyond unconditional distributional fidelity, we also test whether the trained generator responds in intuitive directions to \emph{context} and \emph{spatial} perturbations.
These checks support downstream ``what-if'' analyses while remaining lightweight (full figures and additional details are in Appendix~\ref{app:full_eval}).

\paragraph{Conditional store influence (ON/OFF).}
We condition on whether a focal store $s^\ast$ is open and compare the generated distributions of visitation and dwell-time variables on ON days versus OFF days (Appendix~\ref{sec:store-influence}).
The model shifts visitation and time-allocation in the expected direction: when $s^\ast$ is open, trajectories exhibit increased propensity to include $s^\ast$ and reallocate time budget accordingly, whereas OFF days behave more like shorter, targeted trips.
This indicates the generator can reflect exogenous availability constraints rather than merely matching unconditional marginals.

\paragraph{Swapping experiments with gate distance.}
To probe sensitivity to mall layout, we ``swap'' (relocate) a target brand across stores at varying distances to the nearest gate and re-generate trajectories under the modified mapping (Appendix~\ref{sec:swap-gate}).
We observe smooth, distance-dependent changes in visitation and time-related metrics, consistent with learned spatial attractiveness rather than brittle length-only artifacts.
Together, these analyses suggest LAS-stabilized training yields a model that is not only accurate on marginal distributions but also meaningfully controllable under structured perturbations.

\FloatBarrier
\section{Conclusion}

We introduced length-aware sampling (LAS) for stabilizing variable-length trajectory generation and evaluated distributional fidelity on dataset-specific derived variables across proprietary mall data and additional trajectory datasets. Full model/training details, theory proofs, and additional plots are provided in the appendix.

\clearpage
\bibliography{uai2026-template}

\newpage
\appendix

\section{Model Architecture (Full Details)}
\label{app:arch_full}
\noindent Section~\ref{sec:method} provides a compact architecture summary; this appendix gives full details for reproducibility.

We model the mall environment and shopper behavior using a three-stage architecture:  
1) Store Feature Embedding with Attention Fusion,  
2) LSTM-based Conditional Trajectory Generator,  
3) Bidirectional LSTM-based Discriminator.

\subsection{Store Feature Embedding with Attention Fusion}

We represent the mall as a graph \( G = (V, E) \), where:
\begin{itemize}
    \item \( V = \{ v_1, \ldots, v_N \} \) is the set of stores;
    \item \( E \) is the adjacency set representing spatial connections between stores.
\end{itemize}

\paragraph{Store features and preprocessing.}  
Each store \( v_i \) is associated with a feature vector \( \mathbf{x}_i \in \mathbb{R}^{F_{\text{store}}} \).  
All store vectors form the matrix:
\[
\mathbf{X} = 
\begin{bmatrix}
\mathbf{x}_1^\top \\
\vdots \\
\mathbf{x}_N^\top
\end{bmatrix}
\in \mathbb{R}^{N \times F_{\text{store}}}
\]

The feature vector for store \(v_i\) is constructed as:
\[
\mathbf{x}_i =
\Big[
\begin{aligned}[t]
&\mathbf{x}_i^{(\text{id})};\ \mathbf{x}_i^{(\text{floor})};\ \mathbf{x}_i^{(\text{traffic})};\ \mathbf{x}_i^{(\text{open})};\\
&\mathbf{x}_i^{(\text{degree})};\ \mathbf{x}_i^{(\text{neighbor-counts})};\ \mathbf{x}_i^{(\text{neighbor-pct})};\\
&\mathbf{x}_i^{(\text{hop})};\ \mathbf{x}_i^{(\text{scope})}
\end{aligned}
\Big].
\]

Preprocessing steps:
\begin{itemize}
    \item \(\mathbf{x}_i^{(\text{id})}\): one-hot encoding of store identity (\(N\)-dim).
    \item \(\mathbf{x}_i^{(\text{floor})}\): one-hot encoding of floor identity.
    \item \(\mathbf{x}_i^{(\text{traffic})}\): daily visitor count, clipped at the 95th percentile, log-transformed, and standardized:
    \[
    x_{\text{traffic}} = 
    \frac{\log(1 + \min(\text{count}, p_{95})) - \mu}{\sigma}
    \]
    where \(p_{95}\) is the 95th percentile, \(\mu, \sigma\) are mean and std.
    \item \(\mathbf{x}_i^{(\text{open})}\): binary open/closed indicator.
    \item \(\mathbf{x}_i^{(\text{degree})}\): graph degree of store \(v_i\) (the number of directly connected neighboring stores), min-max scaled to \([0,1]\).
    \item \(\mathbf{x}_i^{(\text{neighbor-counts})}\): log-scaled raw counts of neighboring categories, aggregated by category type rather than the total number of neighbors.
    \item \(\mathbf{x}_i^{(\text{neighbor-pct})}\): normalized percentages of neighboring categories, where the proportions across all neighboring categories sum to 1.
    \item \(\mathbf{x}_i^{(\text{hop})}\): shortest hop distance to key facilities (elevators, escalators, gates), normalized to \([0,1]\).
    \item \(\mathbf{x}_i^{(\text{scope})}\): binary nationwide vs. regional scope indicator.
\end{itemize}

\paragraph{Mall-level context features.}  
At day \(\delta\), the mall-level context vector is defined as:
\[
\begin{aligned}
\mathbf{m}^{(\delta)} &=
\left[
\begin{aligned}[t]
&\mathbf{m}^{(\text{theme},\delta)};\ \mathbf{m}^{(\text{campaign},\delta)};\ \mathbf{m}^{(\text{temp},\delta)};\\
&\mathbf{m}^{(\text{precip},\delta)};\ \mathbf{m}^{(\text{sunshine},\delta)};\ \mathbf{m}^{(\text{wind},\delta)};\ 
\mathbf{m}^{(\text{weather},\delta)}
\end{aligned}
\right],\\
\mathbf{m}^{(\delta)} &\in \mathbb{R}^{F_{\text{mall}}}.
\end{aligned}
\]

\begin{table}[t]
\centering
\footnotesize
\setlength{\tabcolsep}{4pt}
\begin{tabularx}{\columnwidth}{l|>{\raggedright\arraybackslash}X|c}
\toprule
\textbf{Feature group} & \textbf{Description} & \textbf{Dim.}\\
\midrule
Store ID & One-hot identity & \(N\) \\
Floor & One-hot floor encoding & \#floors \\
Traffic & Daily visitor count (log-scaled and standardized) & 1 \\
Open status & Binary open/closed & 1 \\
Degree & Graph degree (number of neighboring stores) & 1 \\
Neighbor counts & Counts of neighboring categories (log-scaled) & \#categories \\
Neighbor percentages & Normalized proportions of neighboring categories & \#categories \\
Hop distance & Shortest path to key facilities (normalized) & \#facilities \\
Scope & Nationwide vs.\ regional indicator & 1 \\
\midrule
Mall context & Theme, campaigns, weather, and related mall-level factors & \(F_{\text{mall}}\) \\
\bottomrule
\end{tabularx}
\caption{Summary of feature categories and their dimensions.}
\end{table}

\paragraph{Attention-based Feature Fusion.}
We project the three feature groups—store, neighbor, and mall—into a shared embedding space using linear transformations followed by ReLU activations:

\[
\begin{aligned}
\mathbf{s}_i^{(\text{store})}
&= \operatorname{ReLU}\!\left(\mathbf{W}_{\text{store-emb}}\,\mathbf{x}_i^{(\text{store})}\right),\\
\mathbf{W}_{\text{store-emb}} &\in \mathbb{R}^{d_{\text{embed}}\times d_{\text{store}}},\quad
\mathbf{s}_i^{(\text{store})}\in \mathbb{R}^{d_{\text{embed}}},\\[0.4em]
\mathbf{s}_i^{(\text{neighbor})}
&= \operatorname{ReLU}\!\left(\mathbf{W}_{\text{neighbor-emb}}\,\mathbf{x}_i^{(\text{neighbor})}\right),\\
\mathbf{W}_{\text{neighbor-emb}} &\in \mathbb{R}^{d_{\text{embed}}\times d_{\text{neighbor}}},\quad
\mathbf{s}_i^{(\text{neighbor})}\in \mathbb{R}^{d_{\text{embed}}},\\[0.4em]
\mathbf{s}_i^{(\text{mall})}
&= \operatorname{ReLU}\!\left(\mathbf{W}_{\text{mall-emb}}\,\mathbf{m}^{(\delta)}\right),\\
\mathbf{W}_{\text{mall-emb}} &\in \mathbb{R}^{d_{\text{embed}}\times d_{\text{mall}}},\quad
\mathbf{s}_i^{(\text{mall})}\in \mathbb{R}^{d_{\text{embed}}}.
\end{aligned}
\]

These projected embeddings are stacked to form a matrix:
\[
\mathbf{S}_i = 
\begin{bmatrix}
\mathbf{s}_i^{(\text{store})} \\
\mathbf{s}_i^{(\text{neighbor})} \\
\mathbf{s}_i^{(\text{mall})}
\end{bmatrix} 
\in \mathbb{R}^{3 \times d_{\text{embed}}}
\]

We then compute attention weights using a shared linear transformation followed by LeakyReLU and Softmax:
\[
\begin{aligned}
\boldsymbol{\alpha}_i^{(\text{attn})}
&= \operatorname{Softmax}\!\left(
\operatorname{LeakyReLU}\!\left(\mathbf{S}_i\,\mathbf{w}_{\text{attn}}\right)\right),\\
\mathbf{w}_{\text{attn}} &\in \mathbb{R}^{d_{\text{embed}}},\quad
\boldsymbol{\alpha}_i^{(\text{attn})}\in \mathbb{R}^{3}.
\end{aligned}
\]

The final attention-fused embedding is the weighted sum:
\[
\mathbf{s}_i^{\text{embed}} = \sum_{j \in \{\text{store}, \text{neighbor}, \text{mall}\}} \alpha_i^{(\text{attn}, j)} \cdot \mathbf{s}_i^{(j)} 
\in \mathbb{R}^{d_{\text{embed}}}
\]

\subsection{Conditional Sequence Generator (LSTM)}

To avoid notation clashes, we explicitly denote the store index at timestep \(t\) as \(j_{t}\).  
The generator is modeled as a conditional LSTM that recursively produces the next store in the trajectory until it outputs a special end-of-trajectory token. The generated trajectory can have variable length \(\hat{T}\).

At each timestep \( t \):
\[
\mathbf{h}_t, \mathbf{c}_t = \text{LSTMCell}(\mathbf{u}_t, \mathbf{h}_{t-1}, \mathbf{c}_{t-1})
\]

where \(\mathbf{u}_t\) is the input vector:
\[
\begin{aligned}
\mathbf{u}_t &=
\left[
\mathbf{s}_{j_{t-1}}^{\text{embed}};\ \mathbf{z}_{\text{latent}};\ \mathbf{v};\ 
\tau_t^{(\text{intra})};\ \tau_t^{(\text{inter})}
\right],\\
\mathbf{u}_t &\in \mathbb{R}^{d_{\text{embed}} + d_{\text{latent}} + d_{\text{visitor}} + 2}.
\end{aligned}
\]

\paragraph{Input components.}  
\begin{itemize}
    \item \( \mathbf{s}_{j_{t-1}}^{\text{embed}} \in \mathbb{R}^{d_{\text{embed}}} \) — attention-fused embedding of the previously visited store;
    \item \( \mathbf{z}_{\text{latent}} \in \mathbb{R}^{d_{\text{latent}}} \) — latent noise vector sampled from a prior distribution (e.g., \(\mathcal{N}(0, I)\)) to introduce diversity in the generated sequences;
    \item \( \mathbf{v} \in \mathbb{R}^{d_{\text{visitor}}} \) — visitor demographic or context embedding;
    \item \( \tau_t^{(\text{intra})},\ \tau_t^{(\text{inter})} \in \mathbb{R}^1 \) — intra- and inter-visit time intervals.
\end{itemize}

\paragraph{LSTM cell details.}
We use an LSTM with hidden state dimension \(H\). Its gate equations are:
\begin{align*}
\mathbf{i}_t &= \sigma \!\left( \mathbf{W}_i \mathbf{u}_t + \mathbf{U}_i \mathbf{h}_{t-1} + \mathbf{b}_i \right), \\
\mathbf{f}_t &= \sigma \!\left( \mathbf{W}_f \mathbf{u}_t + \mathbf{U}_f \mathbf{h}_{t-1} + \mathbf{b}_f \right), \\
\mathbf{o}_t^{(\text{gate})} &= \sigma \!\left( \mathbf{W}_o \mathbf{u}_t + \mathbf{U}_o \mathbf{h}_{t-1} + \mathbf{b}_o \right), \\
\tilde{\mathbf{c}}_t &= \tanh \!\left( \mathbf{W}_c \mathbf{u}_t + \mathbf{U}_c \mathbf{h}_{t-1} + \mathbf{b}_c \right), \\
\mathbf{c}_t &= \mathbf{f}_t \odot \mathbf{c}_{t-1} + \mathbf{i}_t \odot \tilde{\mathbf{c}}_t, \\
\mathbf{h}_t &= \mathbf{o}_t^{(\text{gate})} \odot \tanh(\mathbf{c}_t).
\end{align*}
Here \(\sigma(\cdot)\) is the sigmoid function, \(\odot\) denotes element-wise multiplication, and
\(\mathbf{h}_t,\mathbf{c}_t \in \mathbb{R}^H\) are the hidden and cell states.

Because the generator stops when the end-of-trajectory token is sampled, the length \(\hat{T}\) of the generated sequence may vary from sample to sample.

where:
\begin{itemize}
    \item \( \sigma(\cdot) \) is the sigmoid activation function,
    \item \( \odot \) denotes element-wise multiplication,
    \item \( \mathbf{h}_t, \mathbf{c}_t \in \mathbb{R}^H \) are the hidden and cell states.
\end{itemize}

Because the generator stops when the end-of-trajectory token is sampled, the length \(\hat{T}\) of the generated sequence may vary from sample to sample.  

\paragraph{Output layer.}  
At each timestep \( t \), the generator produces three outputs from the hidden state \(\mathbf{h}_t \in \mathbb{R}^{H}\):  

1. Store index prediction (plus end-of-trajectory token):  
\[
\begin{aligned}
\mathbf{o}_t^{\text{store}} &= \mathbf{W}_{\text{store-out}} \mathbf{h}_t + \mathbf{b}_{\text{store}},\\
\mathbf{W}_{\text{store-out}} &\in \mathbb{R}^{(N+1)\times H},\quad
\mathbf{b}_{\text{store}} \in \mathbb{R}^{N+1},\quad
\mathbf{o}_t^{\text{store}} \in \mathbb{R}^{N+1}.
\end{aligned}
\]
A softmax is applied:
\[
\mathbf{p}_t^{\text{store}} = \text{Softmax}(\mathbf{o}_t^{\text{store}}),
\quad \mathbf{p}_t^{\text{store}} \in [0,1]^{N+1},
\quad \sum_{k=1}^{N+1} p_{t,k}^{\text{store}} = 1
\]
where the \((N+1)\)-th entry is a special token indicating the end of the trajectory.  

2. Intra-visit time prediction:  
\[
\begin{aligned}
\hat{\tau}_t^{(\text{intra})} &= \mathbf{w}_{\text{intra}}^\top \mathbf{h}_t + b_{\text{intra}},\\
\mathbf{w}_{\text{intra}} &\in \mathbb{R}^{H},\quad
b_{\text{intra}} \in \mathbb{R}.
\end{aligned}
\]

3. Inter-visit time prediction:  
\[
\begin{aligned}
\hat{\tau}_t^{(\text{inter})} &= \mathbf{w}_{\text{inter}}^\top \mathbf{h}_t + b_{\text{inter}},\\
\mathbf{w}_{\text{inter}} &\in \mathbb{R}^{H},\quad
b_{\text{inter}} \in \mathbb{R}.
\end{aligned}
\]

Thus, the generator outputs a predicted next-store distribution, along with intra- and inter-visit time estimates, for as many timesteps as needed until the special end-of-trajectory token is predicted, resulting in a variable-length generated sequence.

\subsection{Discriminator Architecture}
At each timestep \(t\), the discriminator receives the store embedding and the corresponding time intervals:
\[
\mathbf{x}_t^{\text{disc}}
=\big[\,\mathbf{s}_{j_t}^{\text{embed}};\ \tau_t^{(\text{intra})};\ \tau_t^{(\text{inter})}\,\big]
\in \mathbb{R}^{d_{\text{embed}} + 2}.
\]
\vspace{-0.35em}

The input sequence is variable-length,
\[
x = \big(\mathbf{x}_1^{\text{disc}}, \ldots, \mathbf{x}_L^{\text{disc}}\big),
\]
where \(L=T\) for real trajectories and \(L=\hat{T}\) for generated trajectories.
This sequence is processed by a bidirectional LSTM with hidden size \(H_D\):
\vspace{-0.25em}
\begin{align*}
\mathbf{h}_t^{\rightarrow} &= \operatorname{LSTM}_{\text{fwd}}\!\left(\mathbf{x}_t^{\text{disc}},\ \mathbf{h}_{t-1}^{\rightarrow}\right),\\
\mathbf{h}_t^{\leftarrow} &= \operatorname{LSTM}_{\text{bwd}}\!\left(\mathbf{x}_t^{\text{disc}},\ \mathbf{h}_{t+1}^{\leftarrow}\right).
\end{align*}

\paragraph{Intuition of bidirectionality and variable-length handling.}
Unlike the generator, the discriminator is not constrained to operate sequentially in one direction. 
Using a bidirectional LSTM allows it to incorporate both past and future context when evaluating each timestep. 
For example, whether a visit to a particular store is realistic may depend not only on the previous visits but also on the subsequent visits. 
This holistic view of the entire sequence enables the discriminator to more effectively detect unrealistic transitions or inconsistencies that might otherwise be missed if it only processed the sequence forward in time.

Because LSTMs process sequences one timestep at a time, they naturally support variable-length inputs: they unroll for as many timesteps as are available in the input trajectory and then stop.  
For real trajectories of length \(T\) and generated trajectories of length \(\hat{T}\), the bidirectional LSTM runs until the end of each sequence without requiring padding or truncation.  
At the sequence level, the discriminator uses the forward hidden state at the last valid timestep \(\mathbf{h}_L^{\rightarrow}\) and the backward hidden state at the first timestep \(\mathbf{h}_1^{\leftarrow}\), ensuring that the entire trajectory is fully represented regardless of its length.

The final feature vector concatenates the last forward and backward states with the visitor context vector \(\mathbf{v}\):
\[
\mathbf{f}^{\text{disc}} = 
\left[ \mathbf{h}_L^{\rightarrow};\ \mathbf{h}_1^{\leftarrow};\ \mathbf{v} \right]
\in \mathbb{R}^{2H_D + d_{\text{visitor}}}
\]

Finally, the discriminator outputs the real/fake probability:
\[
\begin{aligned}
\hat{y} &= \sigma\!\left(\mathbf{W}_{d\text{-out}}\,\mathbf{f}^{\text{disc}} + b_{d\text{-out}}\right),\\
\mathbf{W}_{d\text{-out}} &\in \mathbb{R}^{1 \times (2H_D + d_{\text{visitor}})},\quad
b_{d\text{-out}} \in \mathbb{R},\quad
\hat{y}\in(0,1).
\end{aligned}
\]

\section{Loss Functions (Full Details)}
\label{app:loss_full}
\noindent The main text states the training objective at a high level; we collect full loss definitions and dataset-specific variants here.
The training objective consists of separate losses for the generator and the discriminator.\par
\smallskip\noindent
The generator loss \(\mathcal{L}_G\) combines an adversarial term with optional auxiliary terms, while the discriminator loss \(\mathcal{L}_D\) is purely adversarial.

\paragraph{Mall objective (timing-aligned adversarial training).}
For the mall experiments, we use an adversarial objective augmented with explicit intra-/inter-event timing alignment:
\[
\mathcal{L}_G 
\;=\; \mathcal{L}_{\mathrm{adv}}
\;+\; \lambda_{\mathrm{time}} \big( \mathcal{L}_{\mathrm{intra}} + \mathcal{L}_{\mathrm{inter}} \big),
\]
where \(\lambda_{\mathrm{time}}>0\) controls the relative weight of the temporal alignment terms.
Here \(\mathcal{L}_{\mathrm{intra}}\) penalizes mismatches in intra-store (within-stop) timing, and \(\mathcal{L}_{\mathrm{inter}}\) penalizes mismatches in inter-store transition timing.

\paragraph{Public dataset objectives.}
For the public datasets, we use dataset-specific adversarial objectives (and keep the objective fixed when comparing RS vs.\ LAS within each dataset; only the batching strategy differs):
\textbf{Education}: adversarial loss only (treating the data as a generic sequence; the discriminator implicitly learns timing/structure);
\textbf{GPS}: adversarial loss only;
\textbf{Movie}: adversarial loss with an additional feature-matching term \(\mathcal{L}_{\mathrm{fm}}\);
\textbf{Amazon}: Wasserstein (WGAN-style) loss for improved training stability.

\noindent We explored additional auxiliary terms in preliminary experiments; unless stated otherwise, the results in the paper use the objectives specified above.

\subsection{Adversarial Loss}

The adversarial loss encourages the generator to produce realistic trajectories that fool the discriminator:
\[
\mathcal{L}_{\text{adv}} = 
- \mathbb{E}_{\hat{x} \sim G} 
\left[ \log D(\hat{x}) \right]
\]

Here, \(\hat{x}\) represents a generated trajectory, defined as the sequence:
\[
\hat{x} = \left\{ 
\mathbf{s}_{j_t}^{\text{embed}}, 
\hat{\tau}_t^{(\text{intra})}, 
\hat{\tau}_t^{(\text{inter})}
\right\}_{t=1}^{\hat{T}}
\]
where \(\mathbf{s}_{j_t}^{\text{embed}}\) is the embedding of the predicted store index \(j_t\), 
\(\hat{\tau}_t^{(\text{intra})}, \hat{\tau}_t^{(\text{inter})}\) are the generator-predicted time intervals, 
and \(\hat{T}\) is the (possibly variable) generated sequence length.  
This matches the discriminator input described in the previous section: the discriminator never sees the raw store index \(j_t\) directly but instead receives the corresponding embeddings and predicted time intervals.

\subsection{Intra-Store Time Prediction Loss}

This term enforces accurate prediction of intra-store visit durations:
\[
\mathcal{L}_{\text{intra}} = 
\frac{1}{\min(T, \hat{T})}
\sum_{t=1}^{\min(T, \hat{T})} 
\left| \hat{\tau}_t^{(\text{intra})} - \tau_t^{(\text{intra})} \right|
\]
where \(T\) is the length of the real trajectory and \(\hat{T}\) is the length of the generated trajectory.

\subsection{Inter-Store Time Prediction Loss}

Similarly, the inter-store travel time loss is:
\[
\mathcal{L}_{\text{inter}} = 
\frac{1}{\min(T, \hat{T})}
\sum_{t=1}^{\min(T, \hat{T})} 
\left| \hat{\tau}_t^{(\text{inter})} - \tau_t^{(\text{inter})} \right|
\]

\paragraph{Note.}  
For \(\mathcal{L}_{\text{intra}}\) and \(\mathcal{L}_{\text{inter}}\), if the generated sequence length \(\hat{T}\) does not match the real sequence length \(T\), the losses are only computed up to \(\min(T, \hat{T})\).  
This avoids penalizing valid early stopping (when the generator predicts the end-of-trajectory token earlier) and ensures that sequence misalignment does not dominate the loss.

\subsection{Discriminator Loss}

The discriminator is trained with the standard binary cross-entropy loss:
\[
\mathcal{L}_D = 
- \mathbb{E}_{x \sim \text{real}} \left[ \log D(x) \right]
- \mathbb{E}_{\hat{x} \sim G} \left[ \log \left( 1 - D(\hat{x}) \right) \right]
\]
where:
\[
\begin{aligned}
x &=
\left\{
\mathbf{s}_{j_t}^{\text{embed}},
\tau_t^{(\text{intra})},
\tau_t^{(\text{inter})}
\right\}_{t=1}^T,\\
\hat{x} &=
\left\{
\mathbf{s}_{j_t}^{\text{embed}},
\hat{\tau}_t^{(\text{intra})},
\hat{\tau}_t^{(\text{inter})}
\right\}_{t=1}^{\hat{T}}.
\end{aligned}
\]

Importantly, the discriminator loss does \textbf{not} include the intra and inter-time; those are used exclusively for the generator.

\section{Training Algorithm (Full Details)}
\label{app:training_full}
\noindent This appendix provides pseudocode details complementing the main-text protocol summary.


\begin{algorithm}[t]
\caption{Adversarial Training with Time Loss (Length-Aware Sampling)}
\KwIn{Real trajectories $\mathcal{D}_{\text{real}}$, batch size $B$, learning rates $\eta_G, \eta_D$, Gumbel-Softmax parameters $(\tau_{\text{init}}, \tau_{\min}, \alpha_{\text{anneal}})$}
\KwOut{Trained generator $G_\theta$ and discriminator $D_\phi$}
Initialize $\theta, \phi$, temperature $\tau \leftarrow \tau_{\text{init}}$ \;
\While{not converged}{
    \tcp{--- Length-aware sampling of real trajectories ---}
    Sample $\{\mathbf{x}^i_{\text{real}}\}_{i=1}^B \sim p_{\text{len}}(\mathcal{D}_{\text{real}})$, 
    where $p_{\text{len}}$ is a length-aware distribution weighting trajectories by their length (see the convergence analysis in Section~\ref{sec:theory}) \;

    \tcp{--- Discriminator update ---}
    Sample latent vectors $\{\mathbf{z}^i\}_{i=1}^B \sim p(\mathbf{z})$ \;
    Generate fake trajectories $\hat{\mathbf{x}}^i \sim G_\theta(\mathbf{z}^i)$ using $\text{GumbelSoftmax}(\tau)$ \;
    \[
    \mathcal{L}_D = -\frac{1}{B} \sum_{i=1}^B 
    \left[ 
    \log D_\phi(\mathbf{x}^i_{\text{real}}) + 
    \log (1 - D_\phi(\hat{\mathbf{x}}^i))
    \right]
    \]
    $\phi \leftarrow \phi - \eta_D \nabla_\phi \mathcal{L}_D$ \;

    \tcp{--- Generator update ---}
    Generate new fake trajectories $\hat{\mathbf{x}}^i \sim G_\theta(\mathbf{z}^i)$ \;
    \[
    \mathcal{L}_{\text{adv}} = -\frac{1}{B} \sum_{i=1}^B \log D_\phi(\hat{\mathbf{x}}^i)
    \]
    
{\footnotesize
\[
\mathcal{L}_{\text{time}}
=\frac{1}{B}\sum_{i=1}^B \frac{1}{T_{\min}^i}\sum_{t=1}^{T_{\min}^i}
\Big(
\big|\hat{\tau}_{t}^{(\text{intra})}-\tau_{t}^{(\text{intra})}\big|
+\big|\hat{\tau}_{t}^{(\text{inter})}-\tau_{t}^{(\text{inter})}\big|
\Big).
\]
}

    \[
    \mathcal{L}_G = \mathcal{L}_{\text{adv}} + \lambda_{\text{time}} \mathcal{L}_{\text{time}}
    \]
    $\theta \leftarrow \theta - \eta_G \nabla_\theta \mathcal{L}_G$ \;

    \tcp{--- Temperature annealing ---}
    $\tau \leftarrow \max(\tau_{\min}, \alpha_{\text{anneal}} \cdot \tau)$ \;
}
\end{algorithm}

\section{Theory (Full Statements and Proofs)}
\label{app:theory_full}
\label{sec:theory_app}
\noindent The main text presents the key bound and intuition; this appendix contains full statements and proofs.

We give \emph{distribution-level} guarantees for the \textbf{derived variables} we ultimately report:
\noindent
\(\mathrm{Tot}(x)=\sum_{t=1}^{T}\tau_t^{(\mathrm{intra})}+\sum_{t=1}^{T-1}\tau_t^{(\mathrm{inter})}\),
\(\mathrm{Avg}(x)=\frac{1}{T}\sum_{t=1}^{T}\tau_t^{(\mathrm{intra})}\),
\(\mathrm{Vis}(x)=T\).

Each \(f\in\{\mathrm{Tot},\mathrm{Avg},\mathrm{Vis}\}\) is a deterministic, scalar-valued \emph{post-processing map} from a full trajectory to a summary statistic; it is \emph{not} the model architecture. We will compare the distributions of these derived variables under the data and generator.\par
\smallskip\noindent
\clearpage
A (random) customer trajectory is
\[
x=\{(j_t,\tau_t^{(\mathrm{intra})},\tau_t^{(\mathrm{inter})})\}_{t=1}^{T},
\]
where \(j_t\) is the store at step \(t\), \(\tau_t^{(\mathrm{intra})}\ge 0\) is in–store time, \(\tau_t^{(\mathrm{inter})}\ge 0\) is inter–store time, and \(T\) is the (random) visit length. We let \(T_{\max} \in \mathbb{N}\) denote a fixed upper bound on possible visit lengths. Let \(p_{\mathrm{data}}\) denote the data distribution over trajectories and \(p_G\) the generator distribution.

\paragraph{Training objective (matches implementation).}
The generator loss is
\[
\mathcal{L}_G\;=\;\mathcal{L}_{\mathrm{adv}}\;+\;\lambda_{\mathrm{time}}\!\left(\mathcal{L}_{\mathrm{intra}}+\mathcal{L}_{\mathrm{inter}}\right),
\]
with
\[
\begin{aligned}
\mathcal{L}_{\mathrm{intra}}
&=\mathbb{E}\!\left[\frac{1}{T_{\min}}
\sum_{t=1}^{T_{\min}}
\big|\hat{\tau}^{(\mathrm{intra})}_t-\tau^{(\mathrm{intra})}_t\big|\right],\\
\mathcal{L}_{\mathrm{inter}}
&=\mathbb{E}\!\left[\frac{1}{T_{\min}}
\sum_{t=1}^{T_{\min}}
\big|\hat{\tau}^{(\mathrm{inter})}_t-\tau^{(\mathrm{inter})}_t\big|\right].
\end{aligned}
\]

where \(T_{\min}=\min(T,\hat{T})\). The adversarial term is the standard generator BCE against the discriminator. In practice, we also use \emph{length-aware sampling} (LAS), which buckets sequences by length (defined precisely below).

\paragraph{Standing assumptions.}
(i) \(T\le T_{\max}\) almost surely;\ \ (ii) per-step contribution is bounded: \(0\le \tau_t^{(\mathrm{intra})}+\tau_t^{(\mathrm{inter})}\le B\);\ \ 
(iii) after training, the losses are controlled:
\[
\mathrm{JS}(p_{\mathrm{data}}\Vert p_G)\le \delta,\qquad
\mathcal{L}_{\mathrm{intra}}\le \epsilon_{\mathrm{intra}},\qquad
\mathcal{L}_{\mathrm{inter}}\le \epsilon_{\mathrm{inter}}.
\]
We will use a generic constant \(C_{\mathrm{JS}}\) for the inequality \(\mathrm{TV}(P,Q)\le C_{\mathrm{JS}}\sqrt{\mathrm{JS}(P\Vert Q)}\) (Pinsker-type control).

\subsection{Wasserstein Setup and the Derived-Variable Distributions}

For a given derived variable \(f:\mathcal{X}\to\mathbb{R}\), define the \emph{induced  distributions}
\[
\begin{aligned}
P_f &:= \text{law of } f(x)\ \text{for } x\sim p_{\mathrm{data}},\\
Q_f &:= \text{law of } f(\hat{x})\ \text{for } \hat{x}\sim p_G.
\end{aligned}
\]

We measure distributional closeness via the 1-Wasserstein distance
\[
W_1(P_f,Q_f)
=\sup_{\|g\|_{\mathrm{Lip}}\le 1}
\Big|
\mathbb{E}_{x\sim p_{\mathrm{data}}}\!\big[g(f(x))\big]
-\mathbb{E}_{\hat{x}\sim p_G}\!\big[g(f(\hat{x}))\big]
\Big|.
\]

where the supremum is over 1–Lipschitz \(g:\mathbb{R}\to\mathbb{R}\) (Kantorovich–Rubinstein duality) and
\(\|g\|_{\mathrm{Lip}}:=\sup_{u\neq v}\frac{|g(u)-g(v)|}{|u-v|}\).

\subsection{A Trajectory Semi-Metric and Lipschitz Transfers}
Let \(B>0\) denote a uniform per-step bound on the sum of intra- and inter-trajectory quantities, i.e.,
\[
\tau^{(\mathrm{intra})}_t + \tau^{(\mathrm{inter})}_t \;\le\; B,
\quad
\hat{\tau}^{(\mathrm{intra})}_t + \hat{\tau}^{(\mathrm{inter})}_t \;\le\; B,
\]
for all steps \(t\). This bound represents the maximum possible per-step contribution to the derived variables considered below.

Define the trajectory semi-metric
\[
d_{\mathrm{traj}}(x,\hat{x})
:=\sum_{t=1}^{T_{\min}}\!\Big(
\big|\tau^{(\mathrm{intra})}_t-\hat{\tau}^{(\mathrm{intra})}_t\big|
+\big|\tau^{(\mathrm{inter})}_t-\hat{\tau}^{(\mathrm{inter})}_t\big|
\Big)
+ B\,|T-\hat{T}|.
\]

\begin{lemma}[Lipschitz control of derived variables]
\label{lem:lipschitz-w1}
For any trajectories \(x,\hat{x}\),
\[
|\mathrm{Tot}(x)-\mathrm{Tot}(\hat{x})| \;\le\; d_{\mathrm{traj}}(x,\hat{x}),
\]
\noindent Let \(M:=\max(T,\hat{T},1)\).
\[
\big|\mathrm{Avg}(x)-\mathrm{Avg}(\hat{x})\big|
\le \frac{1}{M}\sum_{t=1}^{T_{\min}}
\big|\tau^{(\mathrm{intra})}_t-\hat{\tau}^{(\mathrm{intra})}_t\big|
+ \frac{B}{M}\,|T-\hat{T}|.
\]

\end{lemma}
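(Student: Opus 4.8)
The plan is to prove both inequalities with the same two-part template: an \emph{aligned} contribution over the first $T_{\min}=\min(T,\hat T)$ steps, handled by the triangle inequality, plus a \emph{tail} contribution from the $|T-\hat T|$ unmatched steps of the longer trajectory, handled by per-step boundedness. Throughout I would assume without loss of generality that $T\le\hat T$ (so $M=\hat T$ and $T_{\min}=T$); the right-hand sides are symmetric in $x$ and $\hat x$ (since $|u-v|=|v-u|$ and $|T-\hat T|$, $M$, $T_{\min}$ are all symmetric), so the opposite case follows by relabeling.

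For $\mathrm{Tot}$, I would first split the difference into an intra part and an inter part. Writing each as an aligned sum over $t\le T_{\min}$ minus a tail sum over the extra steps of $\hat x$, the triangle inequality bounds the aligned piece by $\sum_{t=1}^{T_{\min}}\big(|\tau_t^{(\mathrm{intra})}-\hat\tau_t^{(\mathrm{intra})}|+|\tau_t^{(\mathrm{inter})}-\hat\tau_t^{(\mathrm{inter})}|\big)$, which is exactly the first part of $d_{\mathrm{traj}}$. Grouping the intra and inter tails step-by-step and invoking $\hat\tau_t^{(\mathrm{intra})}+\hat\tau_t^{(\mathrm{inter})}\le B$ bounds the tail by $B|T-\hat T|$. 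The only bookkeeping subtlety is that the inter sum in $\mathrm{Tot}$ runs to $T-1$ rather than $T_{\min}$; this is benign, since dropping a term only shrinks the aligned sum, and the single boundary inter term is itself $\le B$ and can be absorbed into the tail. Adding the two parts yields $d_{\mathrm{traj}}(x,\hat x)$.

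The $\mathrm{Avg}$ bound is the delicate step, precisely because the two averages carry different normalizations $1/T$ and $1/\hat T$. I would use the identity
\[
\mathrm{Avg}(x)-\mathrm{Avg}(\hat x)=\Big(\tfrac1T-\tfrac1{\hat T}\Big)\!\sum_{t=1}^{T}\tau_t^{(\mathrm{intra})}+\tfrac1{\hat T}\!\sum_{t=1}^{T}\!\big(\tau_t^{(\mathrm{intra})}-\hat\tau_t^{(\mathrm{intra})}\big)-\tfrac1{\hat T}\!\sum_{t=T+1}^{\hat T}\hat\tau_t^{(\mathrm{intra})}.
\]
The middle term is the aligned per-step difference and is bounded by $\tfrac1M\sum_{t=1}^{T_{\min}}|\tau_t^{(\mathrm{intra})}-\hat\tau_t^{(\mathrm{intra})}|$, matching the target exactly.

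The main obstacle — and the point where a naive argument loses a factor of two — is the first and third terms above. Bounding them separately by the triangle inequality would give $\tfrac{2B}{M}|T-\hat T|$. The fix is to observe that both terms are \emph{nonnegative} (the first because $\hat T\ge T$, the third as a sum of nonnegative quantities) and that each is individually at most $\tfrac{B}{M}|T-\hat T|$: the first because $\sum_{t\le T}\tau_t^{(\mathrm{intra})}\le TB$, the third because it has $\hat T-T$ summands each $\le B$. Two nonnegative numbers lying in $[0,\tfrac{B}{M}|T-\hat T|]$ have a \emph{difference} of absolute value at most $\tfrac{B}{M}|T-\hat T|$, so these two terms together contribute only a single tail term rather than two. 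Combining with the middle term by the triangle inequality gives the stated bound; the extremal configurations (all aligned mass at $B$ versus $0$) confirm the constant is tight.
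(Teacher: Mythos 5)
Your overall template (aligned steps via the triangle inequality, unmatched tail via per-step boundedness, reduction to $T\le\hat T$ by symmetry) matches the paper's, but your $\mathrm{Tot}$ argument has a genuine gap at exactly the step you label a benign bookkeeping subtlety. With the stated definition $\mathrm{Tot}(x)=\sum_{t=1}^{T}\tau_t^{(\mathrm{intra})}+\sum_{t=1}^{T-1}\tau_t^{(\mathrm{inter})}$ and $T\le\hat T$, the unmatched tail terms are $\sum_{t=T+1}^{\hat T}\hat\tau_t^{(\mathrm{intra})}$ and $\sum_{t=T}^{\hat T-1}\hat\tau_t^{(\mathrm{inter})}$. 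These two sums start at different indices, so same-index pairing---the only way to invoke $\hat\tau_t^{(\mathrm{intra})}+\hat\tau_t^{(\mathrm{inter})}\le B$---covers only $\hat T-T-1$ pairs and leaves two orphan terms, $\hat\tau_{\hat T}^{(\mathrm{intra})}$ and $\hat\tau_{T}^{(\mathrm{inter})}$, which can both equal $B$ simultaneously. The tail can therefore reach $B\,(|T-\hat T|+1)$, and there is no room left in the $B\,|T-\hat T|$ budget to ``absorb'' the boundary term. Indeed, under this literal reading the inequality is false: take $T=1$, $\hat T=2$, $\tau_1^{(\mathrm{intra})}=0$, $\tau_1^{(\mathrm{inter})}=B$, and $\hat\tau_1^{(\mathrm{intra})}=0$, $\hat\tau_1^{(\mathrm{inter})}=B$, $\hat\tau_2^{(\mathrm{intra})}=B$, $\hat\tau_2^{(\mathrm{inter})}=0$; then $\mathrm{Tot}(x)=0$ and $\mathrm{Tot}(\hat x)=2B$, while $d_{\mathrm{traj}}(x,\hat x)=0+B\cdot1=B$. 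The paper's proof sidesteps this by working with $\mathrm{Tot}(x)=\sum_{t=1}^{T}\bigl(\tau_t^{(\mathrm{intra})}+\tau_t^{(\mathrm{inter})}\bigr)$, i.e., by the (implicit) convention that the trailing inter time of a trajectory is zero. Under that convention the two tails share the same index range, your pairing goes through cleanly, and the leftover $\hat\tau_T^{(\mathrm{inter})}$ is absorbed by the \emph{aligned} slot $|\tau_T^{(\mathrm{inter})}-\hat\tau_T^{(\mathrm{inter})}|$ of $d_{\mathrm{traj}}$ (which $\mathrm{Tot}$'s aligned part leaves unused), not by the tail budget. You need to state that convention or redefinition explicitly; as written, the absorption step fails.

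Your $\mathrm{Avg}$ argument, by contrast, is correct and is actually sharper than the paper's own proof. The paper decomposes the difference into five terms, bounds the two length-mismatch groups separately, arrives at $\frac{1}{M}\sum_{t=1}^{T_{\min}}|\tau_t^{(\mathrm{intra})}-\hat\tau_t^{(\mathrm{intra})}|+\frac{2B}{M}|T-\hat T|$, and then reaches the stated bound only by ``absorbing constants into $B$.'' Your observation---that the two mismatch terms $\bigl(\tfrac1T-\tfrac1{\hat T}\bigr)\sum_{t\le T}\tau_t^{(\mathrm{intra})}$ and $\tfrac1{\hat T}\sum_{t>T}\hat\tau_t^{(\mathrm{intra})}$ enter with opposite signs, are each nonnegative, and each lie in $[0,\tfrac{B}{M}|T-\hat T|]$, so their difference is at most $\tfrac{B}{M}|T-\hat T|$ in absolute value---recovers the constant $B$ honestly. (As in the paper, the degenerate case $T=0$ or $\hat T=0$ needs a one-line separate check, since $1/T$ appears in your identity.)
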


\begin{proof}
Write $T_{\min}:=\min\{T,\hat{T}\}$ and denote the stepwise differences
$\Delta^{(\mathrm{intra})}_t:=\tau^{(\mathrm{intra})}_t-\hat{\tau}^{(\mathrm{intra})}_t$ and
$\Delta^{(\mathrm{inter})}_t:=\tau^{(\mathrm{inter})}_t-\hat{\tau}^{(\mathrm{inter})}_t$ for $t\le T_{\min}$.
We also use the shorthand $(a)_+:=\max\{a,0\}$ so that
$T-T_{\min}=(T-\hat{T})_+$ and $\hat{T}-T_{\min}=(\hat{T}-T)_+$.

\medskip
\noindent\textbf{(i) The case $f=\mathrm{Tot}$.}
By definition,
\[
\begin{aligned}
\mathrm{Tot}(x)
&=\sum_{t=1}^{T}\big(\tau^{(\mathrm{intra})}_t+\tau^{(\mathrm{inter})}_t\big),\\
\mathrm{Tot}(\hat{x})
&=\sum_{t=1}^{\hat{T}}\big(\hat{\tau}^{(\mathrm{intra})}_t+\hat{\tau}^{(\mathrm{inter})}_t\big).
\end{aligned}
\]

Then
\[
\begin{aligned}
\mathrm{Tot}(x)-\mathrm{Tot}(\hat{x})
&=\sum_{t=1}^{T_{\min}}\!\big(\Delta^{(\mathrm{intra})}_t+\Delta^{(\mathrm{inter})}_t\big)
 +\sum_{t=T_{\min}+1}^{T}\!\big(\tau^{(\mathrm{intra})}_t+\tau^{(\mathrm{inter})}_t\big) \\
&\quad -\sum_{t=T_{\min}+1}^{\hat{T}}\!\big(\hat{\tau}^{(\mathrm{intra})}_t+\hat{\tau}^{(\mathrm{inter})}_t\big).
\end{aligned}
\]

Taking absolute values and applying the triangle inequality gives
\[
\begin{aligned}
\big|\mathrm{Tot}(x)-\mathrm{Tot}(\hat{x})\big|
&\le \sum_{t=1}^{T_{\min}}\!\Big(|\Delta^{(\mathrm{intra})}_t|+|\Delta^{(\mathrm{inter})}_t|\Big) \\
&\quad + \sum_{t=T_{\min}+1}^{T}\!\big(\tau^{(\mathrm{intra})}_t+\tau^{(\mathrm{inter})}_t\big)
 + \sum_{t=T_{\min}+1}^{\hat{T}}\!\big(\hat{\tau}^{(\mathrm{intra})}_t+\hat{\tau}^{(\mathrm{inter})}_t\big).
\end{aligned}
\]

By the standing per-step bound, each tail term is at most $B$. Therefore,
\[
\sum_{t=T_{\min}+1}^{T}\big(\tau^{(\mathrm{intra})}_t+\tau^{(\mathrm{inter})}_t\big)
\le B\,(T-T_{\min})=B\,(T-\hat{T})_+,
\]
\[
\sum_{t=T_{\min}+1}^{\hat{T}}
\big(\hat{\tau}^{(\mathrm{intra})}_t+\hat{\tau}^{(\mathrm{inter})}_t\big)
\le B\,(\hat{T}-T_{\min})=B\,(\hat{T}-T)_+.
\]

Adding the two tails yields $B\,(T-\hat{T})_+ + B\,(\hat{T}-T)_+ = B\,|T-\hat{T}|$. Thus
\[
\big|\mathrm{Tot}(x)-\mathrm{Tot}(\hat{x})\big|
\le \sum_{t=1}^{T_{\min}}\!\Big(
\big|\tau^{(\mathrm{intra})}_t-\hat{\tau}^{(\mathrm{intra})}_t\big|
+\big|\tau^{(\mathrm{inter})}_t-\hat{\tau}^{(\mathrm{inter})}_t\big|
\Big)+B|T-\hat{T}|
= d_{\mathrm{traj}}(x,\hat{x}).
\]

\medskip
\noindent\textbf{(ii) The case $f=\mathrm{Avg}$.}
Recall
\[
\mathrm{Avg}(x)=\frac{1}{T}\sum_{t=1}^{T}\tau^{(\mathrm{intra})}_t,\qquad
\mathrm{Avg}(\hat{x})=\frac{1}{\hat{T}}\sum_{t=1}^{\hat{T}}\hat{\tau}^{(\mathrm{intra})}_t.
\]
Let $\overline{T}:=\max\{T,\hat{T},1\}$. Add and subtract the same ``matched-length'' terms to align denominators:\par
\smallskip\noindent
\begin{align*}
\mathrm{Avg}(x)-\mathrm{Avg}(\hat{x})
&= \frac{1}{T}\sum_{t=1}^{T}\tau_t^{(\mathrm{intra})}
   \;-\; \frac{1}{\hat{T}}\sum_{t=1}^{\hat{T}}\hat{\tau}_t^{(\mathrm{intra})} \\[2pt]
&= \frac{1}{\overline{T}}\Bigg(\sum_{t=1}^{T}\tau_t^{(\mathrm{intra})}-\sum_{t=1}^{\hat{T}}\hat{\tau}_t^{(\mathrm{intra})}\Bigg)
 + \Big(\tfrac{1}{T}-\tfrac{1}{\overline{T}}\Big)\sum_{t=1}^{T}\tau_t^{(\mathrm{intra})}
 - \Big(\tfrac{1}{\hat{T}}-\tfrac{1}{\overline{T}}\Big)\sum_{t=1}^{\hat{T}}\hat{\tau}_t^{(\mathrm{intra})} \\[2pt]
&= \Big(\tfrac{1}{T}-\tfrac{1}{\overline{T}}\Big)\sum_{t=1}^{T}\tau_t^{(\mathrm{intra})}
 + \frac{1}{\overline{T}}\sum_{t=1}^{T_{\min}}\!\big(\tau_t^{(\mathrm{intra})}-\hat{\tau}_t^{(\mathrm{intra})}\big)
 + \frac{1}{\overline{T}}\sum_{t=T_{\min}+1}^{T}\tau_t^{(\mathrm{intra})} \\
&\quad - \Big(\tfrac{1}{\hat{T}}-\tfrac{1}{\overline{T}}\Big)\sum_{t=1}^{\hat{T}}\hat{\tau}_t^{(\mathrm{intra})}
 - \frac{1}{\overline{T}}\sum_{t=T_{\min}+1}^{\hat{T}}\hat{\tau}_t^{(\mathrm{intra})} \\[2pt]
&= \underbrace{\Big(\tfrac{1}{T}-\tfrac{1}{\overline{T}}\Big)\sum_{t=1}^{T}\tau_t^{(\mathrm{intra})}}_{\text{(A)}}
 + \underbrace{\tfrac{1}{\overline{T}}\sum_{t=1}^{T_{\min}}\!\big(\tau_t^{(\mathrm{intra})}-\hat{\tau}_t^{(\mathrm{intra})}\big)}_{\text{(B)}} \\
&\quad + \underbrace{\tfrac{1}{\overline{T}}\sum_{t=T_{\min}+1}^{T}\tau_t^{(\mathrm{intra})}}_{\text{(C)}}
 - \underbrace{\Big(\tfrac{1}{\hat{T}}-\tfrac{1}{\overline{T}}\Big)\sum_{t=1}^{\hat{T}}\hat{\tau}_t^{(\mathrm{intra})}}_{\text{(D)}}
 - \underbrace{\tfrac{1}{\overline{T}}\sum_{t=T_{\min}+1}^{\hat{T}}\hat{\tau}_t^{(\mathrm{intra})}}_{\text{(E)}}.
\end{align*}

\clearpage

We bound each term:
\begin{itemize}
\item[(B)] Matched steps:
\[
\big|\text{(B)}\big|
\;\le\;
\frac{1}{\overline{T}}\sum_{t=1}^{T_{\min}}\!\big|\tau^{(\mathrm{intra})}_t-\hat{\tau}^{(\mathrm{intra})}_t\big|.
\]

\item[(C)+(E)] Tails: by nonnegativity and the per-step bound,
\[
\begin{aligned}
\big|\text{(C)}\big|+\big|\text{(E)}\big|
&\le \frac{1}{\overline{T}}\!\left(
\sum_{t=T_{\min}+1}^{T}\tau^{(\mathrm{intra})}_t
+\sum_{t=T_{\min}+1}^{\hat{T}}\hat{\tau}^{(\mathrm{intra})}_t\right) \\
&\le \frac{B}{\overline{T}}\big((T-\hat{T})_+ + (\hat{T}-T)_+\big)
= \frac{B}{\overline{T}}\,|T-\hat{T}|.
\end{aligned}
\]

\item[(A)+(D)] 
We treat (A) and (D) symmetrically and work with explicit algebra.  
For $T\ge 1$,
\[
\big|\text{(A)}\big|
=\left|\Big(\tfrac{1}{T}-\tfrac{1}{\overline{T}}\Big)\sum_{t=1}^{T}\tau^{(\mathrm{intra})}_t\right|
\le \Big|\tfrac{1}{T}-\tfrac{1}{\overline{T}}\Big|\,BT
= \frac{B}{\overline{T}}\,|\,\overline{T}-T\,|.
\]

Since $\overline{T}=\max\{T,\hat{T},1\}$ and $T\ge 1$, either $\overline{T}=T$ or $\overline{T}=\hat{T}$. Hence
\[
\begin{aligned}
|\,\overline{T}-T\,|
&=(\hat{T}-T)_+,\\
\text{and therefore}\qquad
\big|\text{(A)}\big|
&\le\frac{B}{\overline{T}}\,(\hat{T}-T)_+.
\end{aligned}
\]

Similarly, for $\hat{T}\ge 1$,
\[
\big|\text{(D)}\big|
=\left|\Big(\tfrac{1}{\hat{T}}-\tfrac{1}{\overline{T}}\Big)\sum_{t=1}^{\hat{T}}\hat{\tau}^{(\mathrm{intra})}_t\right|
\le \frac{B}{\overline{T}}\,|\,\overline{T}-\hat{T}\,|
= \frac{B}{\overline{T}}\,(T-\hat{T})_+.
\]

Adding the two gives
\[
\big|\text{(A)}\big|+\big|\text{(D)}\big|
\le \frac{B}{\overline{T}}\big((\hat{T}-T)_+ + (T-\hat{T})_+\big)
= \frac{B}{\overline{T}}\,|T-\hat{T}|.
\]

\end{itemize}

Combining the three parts (B), (C)+(E), and (A)+(D) yields

\[
\big|\mathrm{Avg}(x)-\mathrm{Avg}(\hat{x})\big|
\le \frac{1}{\overline{T}}\sum_{t=1}^{T_{\min}}
\big|\tau^{(\mathrm{intra})}_t-\hat{\tau}^{(\mathrm{intra})}_t\big|
+\frac{2B}{\overline{T}}\,|T-\hat{T}|.
\]

Finally, absorbing constants into $B$ if desired and recalling $\overline{T}=\max\{T,\hat{T},1\}$, we get
\[
\text{Let } M:=\max(T,\hat{T},1).\qquad
\big|\mathrm{Avg}(x)-\mathrm{Avg}(\hat{x})\big|
\le \frac{1}{M}\sum_{t=1}^{T_{\min}}
\big|\tau^{(\mathrm{intra})}_t-\hat{\tau}^{(\mathrm{intra})}_t\big|
+\frac{B}{M}\,|T-\hat{T}|.
\]

\end{proof}

\subsection{From Training Losses to Expected Trajectory Discrepancy}

\begin{lemma}[Matched-step control via L1 losses]
\label{lem:matched}
With \(\mathcal{L}_{\mathrm{intra}}\le \epsilon_{\mathrm{intra}}\) and \(\mathcal{L}_{\mathrm{inter}}\le \epsilon_{\mathrm{inter}}\),
\[
\begin{aligned}
\mathbb{E}\!\left[\sum_{t=1}^{T_{\min}}
|\tau^{(\mathrm{intra})}_t-\hat{\tau}^{(\mathrm{intra})}_t|\right]
&\le T_{\max}\,\epsilon_{\mathrm{intra}},\\
\mathbb{E}\!\left[\sum_{t=1}^{T_{\min}}
|\tau^{(\mathrm{inter})}_t-\hat{\tau}^{(\mathrm{inter})}_t|\right]
&\le T_{\max}\,\epsilon_{\mathrm{inter}}.
\end{aligned}
\]

\end{lemma}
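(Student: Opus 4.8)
The plan is to unwind the definition of \(\mathcal{L}_{\mathrm{intra}}\) and exploit the single structural fact that makes the statement work: the per-sample summand in the loss is normalized by \(T_{\min}\), and \(T_{\min}\) is bounded above by \(T_{\max}\). Concretely, recall that
\[
\mathcal{L}_{\mathrm{intra}}
=\mathbb{E}\!\left[\frac{1}{T_{\min}}\sum_{t=1}^{T_{\min}}\big|\hat{\tau}^{(\mathrm{intra})}_t-\tau^{(\mathrm{intra})}_t\big|\right],
\]
and abbreviate the (nonnegative) matched-step sum inside the expectation by \(S:=\sum_{t=1}^{T_{\min}}\big|\tau^{(\mathrm{intra})}_t-\hat{\tau}^{(\mathrm{intra})}_t\big|\ge 0\). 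The quantity we wish to bound is exactly \(\mathbb{E}[S]\), so the task reduces to relating \(\mathbb{E}[S]\) to \(\mathbb{E}[S/T_{\min}]=\mathcal{L}_{\mathrm{intra}}\).

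The key step is a deterministic pointwise comparison. Since trajectories have length at least one, \(T_{\min}=\min(T,\hat{T})\ge 1\), so the division is well defined; and by Assumption~\ref{ass:bounds}(i) we have \(T_{\min}\le T\le T_{\max}\) almost surely. Because \(S\ge 0\), this gives the almost-sure inequality \(S/T_{\min}\ge S/T_{\max}\). Taking expectations and using monotonicity of the integral,
\[
\mathcal{L}_{\mathrm{intra}}
=\mathbb{E}\!\left[\frac{S}{T_{\min}}\right]
\ge \mathbb{E}\!\left[\frac{S}{T_{\max}}\right]
=\frac{1}{T_{\max}}\,\mathbb{E}[S],
\]
so that \(\mathbb{E}[S]\le T_{\max}\,\mathcal{L}_{\mathrm{intra}}\le T_{\max}\,\epsilon_{\mathrm{intra}}\), which is the first claimed bound. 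The inter-store bound is obtained by the identical argument with \(\tau^{(\mathrm{intra})}\) replaced by \(\tau^{(\mathrm{inter})}\) and \(\epsilon_{\mathrm{intra}}\) by \(\epsilon_{\mathrm{inter}}\), since the normalization by \(T_{\min}\) is shared between the two loss terms.

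There is no real obstacle here: the proof is a one-line consequence of replacing the random denominator \(T_{\min}\) by its uniform upper bound \(T_{\max}\), which inflates the loss and hence yields an upper bound on the unnormalized expected sum. The only points that must be checked for rigor are that \(T_{\min}\ge 1\) almost surely (so no summand is divided by zero) and that the almost-sure bound \(T_{\min}\le T_{\max}\) indeed follows from the boundedness assumption. I would state both explicitly and then present the three-line chain of inequalities above.
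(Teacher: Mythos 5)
Your proof is correct and is essentially identical to the paper's own argument: both bound the expected matched-step sum by replacing the random normalizer \(T_{\min}\) with its uniform upper bound \(T_{\max}\), using nonnegativity of the summand, and then take expectations and rearrange. Your explicit check that \(T_{\min}\ge 1\) (so the normalization is well defined) is a minor point of added rigor the paper leaves implicit.
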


\begin{proof}
Let
\[
S_{\mathrm{intra}} := \sum_{t=1}^{T_{\min}}
|\tau^{(\mathrm{intra})}_t-\hat{\tau}^{(\mathrm{intra})}_t|.
\]
By definition,
\[
\mathcal{L}_{\mathrm{intra}}
=
\mathbb{E}\!\left[\frac{1}{T_{\min}} S_{\mathrm{intra}}\right]
\le \epsilon_{\mathrm{intra}}.
\]
Since $T_{\min}\le T_{\max}$, we have $\frac{1}{T_{\min}}\ge \frac{1}{T_{\max}}$, hence
\[
\frac{1}{T_{\max}}\,S_{\mathrm{intra}}
\le
\frac{1}{T_{\min}}\,S_{\mathrm{intra}}.
\]
Taking expectations gives
\[
\frac{1}{T_{\max}}\mathbb{E}[S_{\mathrm{intra}}]
\le
\mathcal{L}_{\mathrm{intra}}
\le
\epsilon_{\mathrm{intra}},
\]
so $\mathbb{E}[S_{\mathrm{intra}}]\le T_{\max}\epsilon_{\mathrm{intra}}$.
The inter-time bound is identical.
\end{proof}

\begin{lemma}[Length tail controlled by divergence]
\label{lem:tail}
Let $\pi^\star$ be a maximal coupling of $p_{\mathrm{data}}$ and $p_G$. 
Here $p_{\mathrm{data}}(T)$ and $p_G(T)$ denote the marginal distributions over sequence length $T$ under $p_{\mathrm{data}}$ and $p_G$, respectively.
Then
\[
\mathbb{E}_{\pi^\star}\!\left[B|T-\hat{T}|\right]
\le BT_{\max}\,\mathbb{P}_{\pi^\star}(T\neq \hat{T})
= BT_{\max}\,\mathrm{TV}\!\big(p_{\mathrm{data}}(T),p_G(T)\big)
\le BT_{\max}C_{\mathrm{JS}}\sqrt{\delta}.
\]

\end{lemma}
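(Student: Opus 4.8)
The plan is to establish the three relations in the displayed chain one at a time, moving left to right. First I would obtain the opening inequality from an almost-sure pointwise bound on the integrand. Since both lengths lie in $\{1,\dots,T_{\max}\}$ under Assumption~\ref{ass:bounds}(i), we have $|T-\hat{T}|\le T_{\max}$ deterministically, and the quantity $|T-\hat{T}|$ is exactly zero on the event $\{T=\hat{T}\}$. Hence pointwise
\[
|T-\hat{T}|\;\le\;T_{\max}\,\mathbf{1}[\,T\neq\hat{T}\,],
\]
and multiplying by $B$ and taking $\mathbb{E}_{\pi^\star}$ yields $\mathbb{E}_{\pi^\star}[B|T-\hat{T}|]\le BT_{\max}\,\mathbb{P}_{\pi^\star}(T\neq\hat{T})$, which is the first step.

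For the central equality I would invoke the coupling characterization of total variation. The key point is that $\pi^\star$ is a maximal coupling, so (projecting onto the length coordinate) its induced coupling of the length marginals $p_{\mathrm{data}}(T)$ and $p_G(T)$ attains the minimal disagreement probability. Concretely, for any coupling $\pi$ of two laws $\mu,\nu$ one has $\mathbb{P}_{\pi}(X\neq Y)\ge \mathrm{TV}(\mu,\nu)$, with equality under a maximal coupling; applied to $\mu=p_{\mathrm{data}}(T)$ and $\nu=p_G(T)$ this gives exactly $\mathbb{P}_{\pi^\star}(T\neq\hat{T})=\mathrm{TV}(p_{\mathrm{data}}(T),p_G(T))$. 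If instead one reads $\pi^\star$ as a maximal coupling of the full trajectory laws, then $\{T\neq\hat{T}\}\subseteq\{x\neq\hat{x}\}$ supplies the inequality direction we need; I would state the length-marginal reading explicitly so that the equality is exact rather than merely an upper bound.

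Finally, for the closing inequality I would transfer the sequence-level divergence bound to the length marginal. The only nonroutine part is justifying that controlling $\mathrm{JS}(p_{\mathrm{data}}\Vert p_G)\le\delta$ on full trajectories controls the divergence of the length marginals: this follows because $\mathrm{JS}$ is an $f$-divergence and is therefore nonincreasing under the pushforward $x\mapsto T$ (the data-processing/monotonicity property), so $\mathrm{JS}(p_{\mathrm{data}}(T)\Vert p_G(T))\le \mathrm{JS}(p_{\mathrm{data}}\Vert p_G)\le\delta$. Combining this with the Pinsker-type inequality $\mathrm{TV}(P,Q)\le C_{\mathrm{JS}}\sqrt{\mathrm{JS}(P\Vert Q)}$ at the level of the length marginals gives $\mathrm{TV}(p_{\mathrm{data}}(T),p_G(T))\le C_{\mathrm{JS}}\sqrt{\delta}$, and chaining the three steps completes the bound. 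I expect this monotonicity-under-marginalization step to be the main thing to get right, since the assumed control is stated for the joint trajectory law while the bound is ultimately needed for the length coordinate alone.
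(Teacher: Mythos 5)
Your proposal is correct and follows essentially the same three-step route as the paper's proof: the pointwise bound $|T-\hat{T}|\le T_{\max}\,\mathbf{1}[T\neq\hat{T}]$, the maximal-coupling characterization of total variation, and the Pinsker-type inequality. You are in fact more careful than the paper in two places it glosses over: you pin down that the middle \emph{equality} requires reading $\pi^\star$ as (inducing) a maximal coupling of the length marginals rather than of the full trajectory laws, and you justify via the data-processing property of $f$-divergences that the trajectory-level assumption $\mathrm{JS}(p_{\mathrm{data}}\Vert p_G)\le\delta$ controls $\mathrm{JS}\big(p_{\mathrm{data}}(T)\Vert p_G(T)\big)$ --- a step the paper's proof uses without comment.
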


\begin{proof}
Since $0\le T,\hat{T}\le T_{\max}$, we have the pointwise bound
\[
|T-\hat{T}|
\;\le\; T_{\max}\,\mathbf{1}_{\{T\neq \hat{T}\}}.
\]
Multiplying by $B$ and taking expectations under $\pi^\star$ yields
\[
\mathbb{E}_{\pi^\star}[B|T-\hat{T}|]
\le BT_{\max}\,\mathbb{E}_{\pi^\star}\!\big[\mathbb{I}\{T\neq \hat{T}\}\big]
= BT_{\max}\,\mathbb{P}_{\pi^\star}(T\neq \hat{T}).
\]

By the defining property of a maximal coupling,
\[
\mathbb{P}_{\pi^\star}(T\neq \hat{T}) \;=\; \mathrm{TV}\!\big(p_{\mathrm{data}}(T),p_G(T)\big).
\]
Finally, by the Pinsker-type control we assume (with constant $C_{\mathrm{JS}}$),
\[
\mathrm{TV}\!\big(p_{\mathrm{data}}(T),p_G(T)\big)
\le C_{\mathrm{JS}}\sqrt{\mathrm{JS}\!\left(p_{\mathrm{data}}(T)\Vert p_G(T)\right)}
\le C_{\mathrm{JS}}\sqrt{\delta}.
\]

Combining the displays gives the stated bound.
\end{proof}

\subsection{Wasserstein-1 Bounds for the Derived Variables}

\begin{theorem}[Distributional closeness for derived variables]
\label{thm:w1}
Under the standing assumptions in Section~\ref{sec:theory}, for each 
$f \in \{\mathrm{Tot}, \mathrm{Avg}, \mathrm{Vis}\}$ let
$P_f$ and $Q_f$ denote the distributions of $f(x)$ when $x$ is drawn from
$p_{\mathrm{data}}$ and $p_G$, respectively (as in the previous subsection).
Then
\[
W_1(P_f,Q_f)\le
\begin{cases}
T_{\max}\big(\epsilon_{\mathrm{intra}}+\epsilon_{\mathrm{inter}}\big)
+BT_{\max}C_{\mathrm{JS}}\sqrt{\delta}, & f=\mathrm{Tot},\\
\epsilon_{\mathrm{intra}}+BT_{\max}C_{\mathrm{JS}}\sqrt{\delta}, & f=\mathrm{Avg},\\
2T_{\max}\,\mathrm{TV}\!\big(p_{\mathrm{data}}(T),p_G(T)\big), & f=\mathrm{Vis}.
\end{cases}
\]

\end{theorem}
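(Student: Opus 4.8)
The plan is to prove all three bounds through one coupling argument resting on the duality already recorded and on the three lemmas of this appendix. Since $P_f$ and $Q_f$ are the pushforwards of $p_{\mathrm{data}}$ and $p_G$ under the deterministic map $f$, for any $1$-Lipschitz $g$ and \emph{any} coupling $\pi$ of the trajectory-level pair $(p_{\mathrm{data}},p_G)$ we have $\big|\E_{p_{\mathrm{data}}}[g(f(x))]-\E_{p_G}[g(f(\hat x))]\big|=\big|\E_\pi[g(f(x))-g(f(\hat x))]\big|\le \E_\pi[|f(x)-f(\hat x)|]$, where the last step uses $|g(u)-g(v)|\le|u-v|$. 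Taking the supremum over $g$ (the right side is $g$-free) gives the uniform upper bound $W_1(P_f,Q_f)\le \E_\pi[|f(x)-f(\hat x)|]$. I would fix one coupling $\pi$ and specialize the integrand to each $f$ via Lemma~\ref{lem:lipschitz-w1}.

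For $f=\mathrm{Tot}$ I apply the first inequality of Lemma~\ref{lem:lipschitz-w1}, so $|\mathrm{Tot}(x)-\mathrm{Tot}(\hat x)|\le d_{\mathrm{traj}}(x,\hat x)$, take $\E_\pi$, and split the semi-metric into its three additive pieces: the two matched-step sums are bounded by Lemma~\ref{lem:matched} as $T_{\max}\epsilon_{\mathrm{intra}}$ and $T_{\max}\epsilon_{\mathrm{inter}}$, while $\E_\pi[B|T-\hat T|]$ is bounded by Lemma~\ref{lem:tail} as $BT_{\max}C_{\mathrm{JS}}\sqrt\delta$, and summing gives the stated bound. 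For $f=\mathrm{Avg}$ I use the second inequality of Lemma~\ref{lem:lipschitz-w1} with $M=\max(T,\hat T,1)$. The crucial point is that $M\ge T_{\min}$, so $\tfrac1M\le\tfrac1{T_{\min}}$ and the matched-step contribution is bounded \emph{pointwise} by $\tfrac1{T_{\min}}\sum_{t\le T_{\min}}|\tau_t^{(\mathrm{intra})}-\hat\tau_t^{(\mathrm{intra})}|$, whose expectation is exactly $\mathcal{L}_{\mathrm{intra}}\le\epsilon_{\mathrm{intra}}$; this is precisely why normalization removes the $T_{\max}$ factor on the timing term. The length piece $\E_\pi[\tfrac{B}{M}|T-\hat T|]\le\E_\pi[B|T-\hat T|]$ (as $M\ge1$) is again handled by Lemma~\ref{lem:tail}, yielding $\epsilon_{\mathrm{intra}}+BT_{\max}C_{\mathrm{JS}}\sqrt\delta$.

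For $f=\mathrm{Vis}$ the derived variable is $f(x)=T$, so $P_f=p_{\mathrm{data}}(T)$ and $Q_f=p_G(T)$, both supported on $\{0,\dots,T_{\max}\}$. I would bound $W_1$ straight from duality: fixing a $1$-Lipschitz $g$ normalized to vanish at a reference length, we get $\|g\|_\infty\le T_{\max}$ on the support, hence $|\E_P g-\E_Q g|\le\|g\|_\infty\int|d(P-Q)|=2\|g\|_\infty\,\mathrm{TV}\le 2T_{\max}\,\mathrm{TV}(p_{\mathrm{data}}(T),p_G(T))$. The factor $2$ is an artifact of the crude $\int|dP-dQ|=2\,\mathrm{TV}$ step; the sharper maximal-coupling route $W_1\le T_{\max}\,\mathbb{P}_{\pi^\star}(T\neq\hat T)=T_{\max}\,\mathrm{TV}$ would remove it, but the theorem states the looser form.

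The genuinely delicate point, rather than the routine triangle-inequality bookkeeping above, is coupling consistency. Lemma~\ref{lem:matched} controls the timing sums under the expectation \emph{defining} the training losses, whereas Lemma~\ref{lem:tail} controls the length term under the \emph{maximal} coupling $\pi^\star$ of the length marginals, and a priori these need not be the same $\pi$. To make the additive combination legitimate I would fix a single coupling $\pi$ and read Assumption~\ref{ass:bounds}(iii) as controlling $\mathcal{L}_{\mathrm{intra}},\mathcal{L}_{\mathrm{inter}}$ under \emph{that} $\pi$: concretely, take $\pi$ to agree with $\pi^\star$ on the length marginals so Lemma~\ref{lem:tail} applies verbatim, and couple the within-length timing so the matched-step losses stay at their assumed levels. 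The step requiring care is verifying that this choice does not inflate the per-step $L_1$ discrepancies beyond $\epsilon_{\mathrm{intra}},\epsilon_{\mathrm{inter}}$; if one instead insists on the literal training coupling, then Lemma~\ref{lem:tail} only guarantees $\mathbb{P}_\pi(T\neq\hat T)\ge\mathrm{TV}$, and the length-dependent terms (including the $\mathrm{Vis}$ bound) must be stated under $\pi^\star$. Reconciling these couplings cleanly is where I expect the real work to lie.
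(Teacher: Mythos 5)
Your proof is correct, and for $f=\mathrm{Tot}$ and $f=\mathrm{Avg}$ it is essentially the paper's own argument: Kantorovich--Rubinstein duality plus the composition bound reduces $W_1(P_f,Q_f)$ to $\mathbb{E}_\pi[|f(x)-f(\hat x)|]$, Lemma~\ref{lem:lipschitz-w1} supplies the pointwise bound, Lemma~\ref{lem:matched} controls the matched-step sums (your observation that $M\ge T_{\min}$ makes the normalized intra term integrate to at most $\epsilon_{\mathrm{intra}}$ is exactly how the paper removes the $T_{\max}$ factor for $\mathrm{Avg}$), and Lemma~\ref{lem:tail} controls the length term. Two points of genuine difference. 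First, for $f=\mathrm{Vis}$ the paper proceeds by discrete summation by parts, bounding $W_1$ by $\sum_k|S_P(k)-S_Q(k)|\le\sum_j j\,|p(j)-q(j)|\le 2T_{\max}\,\mathrm{TV}$, whereas you normalize the test function at a reference length and use $\|g\|_\infty\le T_{\max}$ together with $\sum_j|p(j)-q(j)|=2\,\mathrm{TV}$; both yield the identical constant, yours is shorter, the paper's passes through the sharper intermediate quantity $\sum_j j\,|p(j)-q(j)|$, and your remark that a maximal-coupling argument would give $T_{\max}\,\mathrm{TV}$ without the factor $2$ is also correct. Second, the coupling-consistency issue you flag is real and is precisely where the paper is informal: its proof invokes a single ``matched+tail coupling $\pi^\star$ from Lemmas~\ref{lem:matched} and~\ref{lem:tail}'', yet Lemma~\ref{lem:matched} is stated under the expectation defining the training losses while Lemma~\ref{lem:tail} requires the induced coupling of the length marginals to be maximal, and nothing in the lemmas guarantees these are the same coupling. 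The paper resolves this exactly as you propose---by positing one coupling under which the matched-step losses retain their assumed levels and the length coupling is maximal, i.e., reading Assumption~\ref{ass:bounds}(iii) as a statement about that coupling---so your treatment makes explicit a step the paper leaves implicit rather than introducing a new gap.
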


\begin{proof}

\textbf{Case $f=\mathrm{Tot}$.}
We start from the definition of $W_1$ via Kantorovich–Rubinstein duality for 
$(\mathbb{R},|\cdot|)$:
\[
W_1(P_{\mathrm{Tot}},Q_{\mathrm{Tot}})
=\sup_{\|g\|_{\mathrm{Lip}}\le 1}
\Big|\mathbb{E}_{x\sim p_{\mathrm{data}}}\!\big[g(\mathrm{Tot}(x))\big]
-\mathbb{E}_{\hat{x}\sim p_G}\!\big[g(\mathrm{Tot}(\hat{x}))\big]\Big|.
\]

Let $\pi$ be any coupling of $p_{\mathrm{data}}$ and $p_G$.
We can rewrite the difference inside the supremum as
\[
\mathbb{E}_{(x,\hat{x})\sim \pi}
\!\left[g(\mathrm{Tot}(x))-g(\mathrm{Tot}(\hat{x}))\right].
\]
Since $g$ is $1$–Lipschitz on $\mathbb{R}$ and $\mathrm{Tot}$ is $1$–Lipschitz 
with respect to $d_{\mathrm{traj}}$ (Lemma~\ref{lem:lipschitz-w1}), 
the composition $g\circ\mathrm{Tot}$ is also $1$–Lipschitz on the trajectory space.  
Therefore
\[
|g(\mathrm{Tot}(x))-g(\mathrm{Tot}(\hat{x}))| \le d_{\mathrm{traj}}(x,\hat{x}),
\]
and taking expectations gives
\[
W_1(P_{\mathrm{Tot}},Q_{\mathrm{Tot}}) \le \mathbb{E}_{\pi}[d_{\mathrm{traj}}(x,\hat{x})].
\]

We now choose $\pi = \pi^\star$, the matched+tail coupling from 
Lemmas~\ref{lem:matched} and \ref{lem:tail}, and bound the right-hand side directly.
By definition of $d_{\mathrm{traj}}$,
\[
\Delta_{\mathrm{time}}(x,\hat{x})
:=\sum_{t=1}^{T_{\min}}\!\Big(
\big|\tau^{(\mathrm{intra})}_t-\hat{\tau}^{(\mathrm{intra})}_t\big|
+\big|\tau^{(\mathrm{inter})}_t-\hat{\tau}^{(\mathrm{inter})}_t\big|
\Big),
\]
\[
\mathbb{E}_{\pi^\star}[d_{\mathrm{traj}}(x,\hat{x})]
=\mathbb{E}_{\pi^\star}\!\left[\Delta_{\mathrm{time}}(x,\hat{x})\right]
+\mathbb{E}_{\pi^\star}\!\left[B|T-\hat{T}|\right].
\]

For the matched-step terms, Lemma~\ref{lem:matched} ensures that the expected 
per-step intra-store and inter-store differences are bounded by 
$\epsilon_{\mathrm{intra}}$ and $\epsilon_{\mathrm{inter}}$, respectively,
and there are at most $T_{\max}$ matched steps.  
Thus
\[
\begin{aligned}
\mathbb{E}_{\pi^\star}\!\Big[\sum_{t=1}^{T_{\min}}
|\tau^{(\mathrm{intra})}_t-\hat{\tau}^{(\mathrm{intra})}_t|\Big]
&\le T_{\max}\,\epsilon_{\mathrm{intra}},\\
\mathbb{E}_{\pi^\star}\!\Big[\sum_{t=1}^{T_{\min}}
|\tau^{(\mathrm{inter})}_t-\hat{\tau}^{(\mathrm{inter})}_t|\Big]
&\le T_{\max}\,\epsilon_{\mathrm{inter}}.
\end{aligned}
\]

For the tail term, Lemma~\ref{lem:tail} bounds the expected length difference as
\[
\mathbb{E}_{\pi^\star}[\,|T-\hat{T}|\,] \le T_{\max}\,C_{\mathrm{JS}}\sqrt{\delta},
\]
so multiplying by $B$ gives
\[
\mathbb{E}_{\pi^\star}[B\,|T-\hat{T}|] \le B\,T_{\max}\,C_{\mathrm{JS}}\sqrt{\delta}.
\]

Combining these three bounds, we obtain
\[
\mathbb{E}_{\pi^\star}[d_{\mathrm{traj}}(x,\hat{x})]
\le T_{\max}\,(\epsilon_{\mathrm{intra}}+\epsilon_{\mathrm{inter}})
+ B\,T_{\max}\,C_{\mathrm{JS}}\sqrt{\delta}.
\]
Substituting back into the Wasserstein bound yields
\[
W_1(P_{\mathrm{Tot}},Q_{\mathrm{Tot}})
\le T_{\max}\big(\epsilon_{\mathrm{intra}}+\epsilon_{\mathrm{inter}}\big)
+ B\,T_{\max}\,C_{\mathrm{JS}}\sqrt{\delta},
\]
as claimed.

\textbf{Case $f=\mathrm{Avg}$.} 
We now bound $W_1(P_{\mathrm{Avg}},Q_{\mathrm{Avg}})$.  
By Kantorovich–Rubinstein duality for $(\mathbb{R},|\cdot|)$, we can write
\[
W_1(P_{\mathrm{Avg}},Q_{\mathrm{Avg}})
=\sup_{\|g\|_{\mathrm{Lip}}\le 1}\Phi(g)
\]
\[
\Phi(g):=\Big|\mathbb{E}_{x\sim p_{\mathrm{data}}}\!\big[g(\mathrm{Avg}(x))\big]
-\mathbb{E}_{\hat{x}\sim p_G}\!\big[g(\mathrm{Avg}(\hat{x}))\big]\Big|.
\]

For any coupling $\pi$ of $(x,\hat{x})$ with those marginals, the difference inside the supremum becomes
\[
\mathbb{E}_{(x,\hat{x})\sim\pi}\!\left[g(\mathrm{Avg}(x))-g(\mathrm{Avg}(\hat{x}))\right].
\]
Since $g$ is $1$–Lipschitz on $\mathbb{R}$, we have $|g(u)-g(v)| \le |u-v|$.  
Taking absolute values and the supremum over $g$ yields the bound
\[
W_1(P_{\mathrm{Avg}},Q_{\mathrm{Avg}}) \le
\mathbb{E}_{(x,\hat{x})\sim\pi}\!\left[\,|\mathrm{Avg}(x)-\mathrm{Avg}(\hat{x})|\,\right],
\]
valid for any coupling $\pi$.

Next, we use the pointwise Lipschitz bound for $\mathrm{Avg}$ from Lemma~\ref{lem:lipschitz-w1}: for any trajectories 
\[
W_1(P_{\mathrm{Avg}},Q_{\mathrm{Avg}})
=\sup_{\|g\|_{\mathrm{Lip}}\le 1}
\Big|\mathbb{E}_{x\sim p_{\mathrm{data}}}\!\big[g(\mathrm{Avg}(x))\big]
-\mathbb{E}_{\hat{x}\sim p_G}\!\big[g(\mathrm{Avg}(\hat{x}))\big]\Big|.
\]

Choosing the “matched+tail” coupling $\pi^\star$ from Lemmas~\ref{lem:matched} and \ref{lem:tail}, we take expectations under $\pi^\star$ to obtain
\begin{align*}
M &:= \max(T,\hat{T},1),\\
\Delta_{\mathrm{intra}}
&:= \sum_{t=1}^{T_{\min}}
\big|\tau^{(\mathrm{intra})}_t-\hat{\tau}^{(\mathrm{intra})}_t\big|,\\[2pt]
W_1(P_{\mathrm{Avg}},Q_{\mathrm{Avg}})
&\le
\mathbb{E}_{\pi^\star}\!\left[\frac{1}{M}\,\Delta_{\mathrm{intra}}\right]
+\mathbb{E}_{\pi^\star}\!\left[\frac{B}{M}\,|T-\hat{T}|\right].
\end{align*}

Under $\pi^\star$, the steps $t=1,\dots,T_{\min}$ are perfectly matched.  
By the definition of $\epsilon_{\mathrm{intra}}$ and Lemma~\ref{lem:matched}, the first expectation is at most $\epsilon_{\mathrm{intra}}$:
\[
\mathbb{E}_{\pi^\star}\!\left[
\frac{1}{\max(T,\hat{T},1)}
\sum_{t=1}^{T_{\min}}\!\big|\tau^{(\mathrm{intra})}_t-\hat{\tau}^{(\mathrm{intra})}_t\big|
\right] \le \epsilon_{\mathrm{intra}}.
\]
For the second term, since $\max(T,\hat{T},1) \ge 1$, we have
\[
\mathbb{E}_{\pi^\star}\!\left[
\frac{B}{\max(T,\hat{T},1)}\,|T-\hat{T}|
\right] \le B\,\mathbb{E}_{\pi^\star}[\,|T-\hat{T}|\,].
\]
By Lemma~\ref{lem:tail} (length tail controlled by divergence),
\[
\mathbb{E}_{\pi^\star}\!\left[
\frac{B}{\max(T,\hat{T},1)}\,|T-\hat{T}|
\right]
\;\le\; B\,T_{\max}\,C_{\mathrm{JS}}\,\sqrt{\delta}.
\]

Combining the two contributions, we conclude that
\[
W_1(P_{\mathrm{Avg}},Q_{\mathrm{Avg}}) \le
\epsilon_{\mathrm{intra}} + B\,T_{\max}\,C_{\mathrm{JS}}\,\sqrt{\delta}.
\]
In words, the $1$–Wasserstein distance between the $\mathrm{Avg}$ distributions is controlled by the average intra-store discrepancy plus a tail-length mismatch term at scale $B\,T_{\max}\sqrt{\delta}$.

\textbf{Case $f=\mathrm{Vis}$.}
Here $f(x)=T$ takes values in the finite set $\{0,1,\dots,T_{\max}\}$.
Let $P := p_{\mathrm{data}}(T)$ and $Q := p_G(T)$ be the two discrete
distributions on $\{0,\dots,T_{\max}\}$ with pmfs $p(j),q(j)$, and define the
\emph{tail CDFs}
\[
\begin{aligned}
\Delta_{\mathrm{intra}}(x,\hat{x})
&:=\sum_{t=1}^{T_{\min}}
\big|\tau^{(\mathrm{intra})}_t-\hat{\tau}^{(\mathrm{intra})}_t\big|,\\
|\mathrm{Avg}(x)-\mathrm{Avg}(\hat{x})|
&\le \frac{1}{M}\,\Delta_{\mathrm{intra}}(x,\hat{x})
+\frac{B}{M}\,|T-\hat{T}|.
\end{aligned}
\]

On the integer line with ground metric $|i-j|$, Kantorovich–Rubinstein duality gives
\[
W_1(P,Q) \;=\; \sup_{\|g\|_{\mathrm{Lip}}\le 1}
\Big|\sum_{j=0}^{T_{\max}} g(j)\,(p(j)-q(j))\Big|.
\]
For functions on $\mathbb{Z}$, define the forward difference
$\Delta g(k):=g(k)-g(k-1)$ (with $g(-1)$ arbitrary). If $\|g\|_{\mathrm{Lip}}\le 1$
then $|\Delta g(k)|\le 1$ for all $k$.

We can rewrite the expectation difference by discrete summation by parts:
\[
\sum_{j=0}^{T_{\max}} g(j)\,(p(j)-q(j))
\;=\; \sum_{k=1}^{T_{\max}} \Delta g(k)\,\big(S_P(k)-S_Q(k)\big).
\]
Hence
\[
W_1(P,Q)
=\sup_{|\Delta g(k)|\le 1}\Big|\sum_{k=1}^{T_{\max}}\Delta g(k)\big(S_P(k)-S_Q(k)\big)\Big|
\le \sum_{k=1}^{T_{\max}}\big|S_P(k)-S_Q(k)\big|.
\]

where the last inequality follows by choosing the signs of $\Delta g(k)$ optimally.

For each $k$, expand the tail difference and use the triangle inequality:
\[
\big|S_P(k)-S_Q(k)\big|
= \Big|\sum_{j=k}^{T_{\max}}\!(p(j)-q(j))\Big|
\;\le\; \sum_{j=k}^{T_{\max}}\! |p(j)-q(j)|.
\]
Summing over $k=1,\dots,T_{\max}$ and swapping the order of summation gives
\[
\sum_{k=1}^{T_{\max}}\big|S_P(k)-S_Q(k)\big|
\le \sum_{k=1}^{T_{\max}}\sum_{j=k}^{T_{\max}}|p(j)-q(j)|
= \sum_{j=1}^{T_{\max}} j\,|p(j)-q(j)|.
\]

Since $j \le T_{\max}$ for every $j$, we have
\[
j\,|p(j)-q(j)| \;\le\; T_{\max}\,|p(j)-q(j)|.
\]
Summing over $j=1,\dots,T_{\max}$ gives
\[
\sum_{j=1}^{T_{\max}} j\,|p(j)-q(j)| \;\le\;
T_{\max} \sum_{j=1}^{T_{\max}} |p(j)-q(j)|.
\]
Recall that for discrete distributions $P$ and $Q$ on $\{0,\dots,T_{\max}\}$,
\[
\mathrm{TV}(P,Q)
:=\max_{A\subseteq\{0,\ldots,T_{\max}\}}|P(A)-Q(A)|
= \sum_{j:\,p(j)>q(j)}(p(j)-q(j))
= \tfrac12\sum_{j=0}^{T_{\max}}|p(j)-q(j)|.
\]

The second equality follows because $\sum_j [p(j)-q(j)] = 0$, so the total positive and total negative differences are equal in magnitude, and the subset $A$ that attains the maximum is $\{j : p(j) > q(j)\}$.
Dropping the nonnegative $j=0$ term in the sum only decreases its value, hence
\[
\sum_{j=1}^{T_{\max}} |p(j)-q(j)|
\;\le\; \sum_{j=0}^{T_{\max}} |p(j)-q(j)|
= 2\,\mathrm{TV}(P,Q).
\]
Combining these gives
\[
\sum_{j=1}^{T_{\max}} j\,|p(j)-q(j)|
\;\le\; 2T_{\max}\,\mathrm{TV}(P,Q).
\]

Putting everything together,
\[
W_1(P,Q) \;\le\; 2T_{\max}\,\mathrm{TV}(P,Q).
\]
Applying this with $P=p_{\mathrm{data}}(T)$ and $Q=p_G(T)$ gives the stated bound.

\end{proof}

\subsection{Effect of Length-Aware Sampling (LAS)}
\label{subsec:las_app}

\paragraph{Definition (LAS).}
Partition the set of possible lengths \(\{0,1,\dots,T_{\max}\}\) into disjoint buckets \(\mathcal{B}_1,\dots,\mathcal{B}_K\).
Let \(w_k:=\mathbb{P}_{p_{\mathrm{data}}}(T\in\mathcal{B}_k)\) and \(\hat{w}_k:=\mathbb{P}_{p_G}(T\in\mathcal{B}_k)\) denote the marginal probabilities under data and generator, respectively.
LAS draws training mini-batches by first sampling a bucket \(k\) with probability \(w_k\) (or an empirical estimate \(\tilde{w}_k \approx w_k\)), then sampling examples within that bucket from both data and generator.
Thus, during training, the discriminator receives a mixture whose \emph{bucket weights} closely match the data histogram.

\paragraph{An IPM/Wasserstein view of LAS.}
Let \(d_{\mathrm{traj}}\) be the trajectory semi-metric defined above.
Define \(K(x)\in\{1,\dots,K\}\) as the bucket index such that \(T(x)\in\mathcal{B}_{K(x)}\).

For each bucket \(k\), let \(\mathcal{X}_k:=\{x:\,T(x)\in\mathcal{B}_k\}\) and let \(d_{\mathrm{traj}}^{(k)}\) denote the restriction of \(d_{\mathrm{traj}}\) to \(\mathcal{X}_k\times \mathcal{X}_k\).
Define the within-bucket Wasserstein-1 distance
\[
\begin{aligned}
W_{1,k}\big(p_{\mathrm{data},k},p_{G,k}\big)
&:= \sup_{\phi_k}\ \Big(\E_{p_{\mathrm{data},k}}[\phi_k]-\E_{p_{G,k}}[\phi_k]\Big), \\
&\text{s.t.}\ \phi_k\in \mathrm{Lip}_1(\mathcal{X}_k).
\end{aligned}
\]

where \(\mathrm{Lip}_1(\mathcal{X}_k)\) denotes 1-Lipschitz functions with respect to \(d_{\mathrm{traj}}^{(k)}\).
We also define the \emph{LAS discrepancy}

\[
W_{\mathrm{LAS}}\big(p_{\mathrm{data}},p_G\big)
\;:=\;
\sum_{k=1}^K w_k\, W_{1,k}\big(p_{\mathrm{data},k},p_{G,k}\big).
\]

\begin{proposition}[LAS-aligned objective equals a weighted within-bucket IPM (matched weights)]
\label{prop:wlas_ipm}
If the generator matches the data bucket weights, i.e., \(\hat w_k=w_k\) for all \(k\),
then
\[
W_{\mathrm{LAS}}\big(p_{\mathrm{data}},p_G\big)
\;=\;
\sup_{\substack{\phi(x)=\phi_{K(x)}(x)\\ \phi_k\in\mathrm{Lip}_1(\mathcal{X}_k)}}
\Big(\E_{p_{\mathrm{data}}}[\phi]-\E_{p_G}[\phi]\Big).
\]
\end{proposition}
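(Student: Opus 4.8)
The plan is to reduce both sides to the same $w$-weighted bucket sum, using the tower property of expectation on one side and the product structure of the constrained function class on the other. First I would fix an arbitrary admissible test function $\phi(x)=\phi_{K(x)}(x)$ with each piece $\phi_k\in\mathrm{Lip}_1(\mathcal{X}_k)$, and condition on the bucket index $K(x)$. Since the events $\{K(x)=k\}$ partition the trajectory space and $\phi$ agrees with $\phi_k$ on the $k$-th block, the law of total expectation gives
\[
\E_{p_{\mathrm{data}}}[\phi]=\sum_{k=1}^K w_k\,\E_{p_{\mathrm{data},k}}[\phi_k],
\qquad
\E_{p_G}[\phi]=\sum_{k=1}^K \hat w_k\,\E_{p_{G,k}}[\phi_k],
\]
where $p_{\mathrm{data},k}$ and $p_{G,k}$ are the bucket-conditional laws.

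Next I would invoke the matched-weights hypothesis $\hat w_k=w_k$, which places both mixtures on a common weight vector and yields
\[
\E_{p_{\mathrm{data}}}[\phi]-\E_{p_G}[\phi]
=\sum_{k=1}^K w_k\Big(\E_{p_{\mathrm{data},k}}[\phi_k]-\E_{p_{G,k}}[\phi_k]\Big).
\]
This is the crux of the statement: with unmatched weights a length-marginal term proportional to $\mathrm{TV}(w,\hat w)$ would survive, but under $\hat w_k=w_k$ the objective collapses to an exactly $w$-weighted sum of per-bucket discrepancies, with no residual cross-bucket contribution.

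The final step is to interchange the supremum with the sum. The feasible set for the tuple $(\phi_1,\dots,\phi_K)$ is the Cartesian product $\prod_k \mathrm{Lip}_1(\mathcal{X}_k)$: the pieces carry no cross-bucket coupling, since each $\phi_k$ need only be $1$-Lipschitz with respect to the restricted semi-metric $d_{\mathrm{traj}}^{(k)}$, and the assembled $\phi$ is permitted to jump across bucket boundaries. Because the objective is additively separable across $k$ with nonnegative coefficients $w_k\ge 0$, maximizing each summand independently maximizes the whole sum, so
\[
\sup_{(\phi_k)_k}\sum_{k=1}^K w_k\Big(\E_{p_{\mathrm{data},k}}[\phi_k]-\E_{p_{G,k}}[\phi_k]\Big)
=\sum_{k=1}^K w_k\,\sup_{\phi_k\in\mathrm{Lip}_1(\mathcal{X}_k)}\Big(\E_{p_{\mathrm{data},k}}[\phi_k]-\E_{p_{G,k}}[\phi_k]\Big).
\]
By the Kantorovich--Rubinstein definition of $W_{1,k}$, each inner supremum equals $W_{1,k}(p_{\mathrm{data},k},p_{G,k})$, and the right-hand side is precisely $W_{\mathrm{LAS}}(p_{\mathrm{data}},p_G)$, which establishes the claimed identity.

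I expect the only nontrivial point to be the sup--sum interchange, so the main thing to verify carefully is that the per-bucket pieces are genuinely unconstrained relative to one another---i.e.\ that the feasible set really factors as a product---together with the nonnegativity $w_k\ge 0$ that makes independent bucket-wise maximization optimal. Degenerate buckets with $w_k=0$ contribute zero on both sides regardless of the chosen $\phi_k$ and therefore need no separate treatment.
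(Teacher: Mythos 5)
Your proof is correct and follows essentially the same route as the paper's: decompose both expectations over buckets (the paper phrases this as a mixture identity, you as the law of total expectation), use matched weights to collapse to a single $w$-weighted sum of within-bucket differences, and then exchange the supremum with the sum because the feasible set is a product over buckets and the weights are nonnegative. Your write-up merely makes explicit the sup--sum interchange and the $w_k=0$ edge case, both of which the paper leaves implicit.
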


\begin{proof}
Write \(p_{\mathrm{data}}=\sum_k w_k p_{\mathrm{data},k}\) and \(p_G=\sum_k w_k p_{G,k}\) under the matched-weight assumption.
For any bucket-separable \(\phi(x)=\phi_{K(x)}(x)\),
\[
\E_{p_{\mathrm{data}}}[\phi]-\E_{p_G}[\phi]
=
\sum_{k=1}^K w_k\Big(\E_{p_{\mathrm{data},k}}[\phi_k]-\E_{p_{G,k}}[\phi_k]\Big).
\]
Taking the supremum over \(\phi\) is equivalent to independently maximizing over each \(\phi_k\in\mathrm{Lip}_1(\mathcal{X}_k)\),
yielding \(\sum_k w_k W_{1,k}(p_{\mathrm{data},k},p_{G,k})=W_{\mathrm{LAS}}(p_{\mathrm{data}},p_G)\).
\end{proof}

\begin{lemma}[Bucket-only (length-only) critics are a null space under LAS]
\label{lem:las_nullspace}
Let \(a:\{1,\dots,K\}\to\mathbb{R}\) and define \(\psi(x):=a(K(x))\).
Then for every bucket \(k\),
\[
\E_{p_{\mathrm{data},k}}[\psi]-\E_{p_{G,k}}[\psi]=0.
\]
Equivalently, adding any bucket-only term \(a\circ K\) to a within-bucket critic does not change any \(W_{1,k}(p_{\mathrm{data},k},p_{G,k})\) and thus does not change \(W_{\mathrm{LAS}}(p_{\mathrm{data}},p_G)\).
\end{lemma}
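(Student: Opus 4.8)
The plan is to reduce everything to the single structural fact that the bucket index $K$ is constant on each bucket. First I would observe that, by the definition $\mathcal{X}_k = \{x : T(x)\in\mathcal{B}_k\}$ together with $K(x)$ being the index for which $T(x)\in\mathcal{B}_{K(x)}$, every $x\in\mathcal{X}_k$ satisfies $K(x)=k$. Hence the bucket-only feature collapses on $\mathcal{X}_k$: $\psi(x)=a(K(x))=a(k)$ is a constant function there. Since both $p_{\mathrm{data},k}$ and $p_{G,k}$ are probability measures supported on $\mathcal{X}_k$, integrating the constant $a(k)$ against each gives $\E_{p_{\mathrm{data},k}}[\psi]=a(k)=\E_{p_{G,k}}[\psi]$, and subtracting yields the asserted identity $\E_{p_{\mathrm{data},k}}[\psi]-\E_{p_{G,k}}[\psi]=0$ for every $k$.

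For the equivalence in the second sentence, I would note that the restriction of $\psi$ to $\mathcal{X}_k$ is exactly the constant $a(k)$, so replacing a within-bucket critic $\phi_k\in\mathrm{Lip}_1(\mathcal{X}_k)$ by $\phi_k+\psi$ amounts to adding that constant. Two elementary facts then finish the argument: adding a constant does not change the Lipschitz seminorm, so $\phi_k+a(k)\in\mathrm{Lip}_1(\mathcal{X}_k)$ and the feasible set of the supremum defining $W_{1,k}$ is preserved; and the constant cancels in the difference of expectations, since $\E_{p_{\mathrm{data},k}}[\phi_k+a(k)]-\E_{p_{G,k}}[\phi_k+a(k)]=\E_{p_{\mathrm{data},k}}[\phi_k]-\E_{p_{G,k}}[\phi_k]$. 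Therefore both the constraint and the objective are invariant, so $W_{1,k}(p_{\mathrm{data},k},p_{G,k})$ is unchanged; summing against the fixed weights $w_k$ shows $W_{\mathrm{LAS}}(p_{\mathrm{data}},p_G)=\sum_k w_k W_{1,k}$ is unchanged as well.

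I do not anticipate a genuine technical obstacle here: the whole content is the observation that a length-only feature becomes a constant once we condition on a bucket, and constants are invisible both to differences of expectations (both measures integrate a constant to the same value) and to the Lipschitz seminorm. The single point worth stating carefully is the interaction with the supremum, namely that we are adding a \emph{constant} rather than merely a bounded function, which is precisely what preserves both feasibility in $\mathrm{Lip}_1(\mathcal{X}_k)$ and the value of the within-bucket IPM. This also makes the connection to Proposition~\ref{prop:wlas_ipm} explicit: the LAS-aligned objective is a supremum over bucket-separable $1$-Lipschitz critics, and the present lemma identifies the bucket-only directions $a\circ K$ as lying in the null space of that objective.
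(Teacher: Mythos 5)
Your proof is correct and takes essentially the same route as the paper's: the key observation in both is that on the bucket $\mathcal{X}_k$ the feature $\psi = a\circ K$ collapses to the constant $a(k)$, so both bucket-conditional distributions integrate it to the same value and the difference vanishes. You additionally verify the ``equivalently'' clause explicitly (adding a constant preserves membership in $\mathrm{Lip}_1(\mathcal{X}_k)$ and cancels in the difference of expectations, so each $W_{1,k}$ and hence $W_{\mathrm{LAS}}$ is unchanged), which the paper's proof leaves implicit---a sensible completion rather than a different approach.
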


\begin{proof}
Under \(x\sim p_{\mathrm{data},k}\) or \(x\sim p_{G,k}\), we have \(K(x)=k\) almost surely.
Thus \(\psi(x)=a(k)\) almost surely under both distributions, and the expectation difference is zero.
\end{proof}

\begin{lemma}[Global Wasserstein can be dominated by length-marginal mismatch]
\label{lem:rs_length_lb}
Let \(w,\hat w\) be the bucket weights of \(p_{\mathrm{data}},p_G\).
Then the global Wasserstein-1 distance on trajectories (with cost \(d_{\mathrm{traj}}\)) satisfies
\[
W_1\big(p_{\mathrm{data}},p_G\big)\;\ge\; B\,\mathrm{TV}(w,\hat w).
\]
\end{lemma}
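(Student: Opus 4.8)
The plan is to work on the \emph{primal} side of optimal transport rather than the dual: by the defining optimal-transport characterization of the cost, $W_1(p_{\mathrm{data}},p_G)=\inf_{\pi\in\Pi(p_{\mathrm{data}},p_G)}\E_\pi\!\big[d_{\mathrm{traj}}(x,\hat{x})\big]$, so it suffices to lower-bound $\E_\pi[d_{\mathrm{traj}}]$ by $B\,\mathrm{TV}(w,\hat w)$ for \emph{every} coupling $\pi$ and then take the infimum. The first step is to discard the nonnegative matched-step timing discrepancies in $d_{\mathrm{traj}}$, keeping only the length-mismatch penalty, which gives the pointwise bound $d_{\mathrm{traj}}(x,\hat{x})\ge B\,|T-\hat{T}|$.

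The second step is the one geometric observation that drives everything. Because the LAS buckets partition the integer lengths $\{0,1,\dots,T_{\max}\}$ into \emph{disjoint} intervals, two trajectories with different bucket indices must have different (integer) lengths; hence $K(x)\neq K(\hat{x})$ forces $|T-\hat{T}|\ge 1$. Consequently $|T-\hat{T}|\ge \mathbf{1}\{K(x)\neq K(\hat{x})\}$ pointwise (the inequality is trivial when the buckets agree), and combining with the first step yields $d_{\mathrm{traj}}(x,\hat{x})\ge B\,\mathbf{1}\{K(x)\neq K(\hat{x})\}$.

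The third step transfers this to bucket marginals. Taking expectations under an arbitrary coupling $\pi$ gives $\E_\pi[d_{\mathrm{traj}}]\ge B\,\mathbb{P}_\pi\!\big(K(x)\neq K(\hat{x})\big)$. Pushing $\pi$ forward through the map $(x,\hat{x})\mapsto(K(x),K(\hat{x}))$ produces a coupling of the bucket distributions $w$ and $\hat w$ (their respective marginals under $p_{\mathrm{data}}$ and $p_G$). By the coupling characterization of total variation, the disagreement probability of any coupling is at least $\mathrm{TV}(w,\hat w)$, so $\mathbb{P}_\pi(K(x)\neq K(\hat{x}))\ge \mathrm{TV}(w,\hat w)$. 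Thus $\E_\pi[d_{\mathrm{traj}}]\ge B\,\mathrm{TV}(w,\hat w)$ for every $\pi$, and taking the infimum over couplings delivers $W_1(p_{\mathrm{data}},p_G)\ge B\,\mathrm{TV}(w,\hat w)$.

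I expect the main obstacle to be the combinatorial link in the second step: the reduction from the raw penalty $B|T-\hat{T}|$ to the clean indicator $B\,\mathbf{1}\{K(x)\neq K(\hat{x})\}$ relies crucially on lengths being integer-valued and on the buckets being disjoint length-intervals, and this is exactly where a careless argument could fail (e.g., if buckets could overlap or lengths were continuous). Everything else—dropping the nonnegative timing terms, forming the pushforward coupling, and invoking the standard ``disagreement probability $\ge \mathrm{TV}$'' inequality—is routine and requires no delicate estimates.
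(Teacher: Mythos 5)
Your proof is correct and follows essentially the same route as the paper's: both drop the nonnegative timing terms to get $d_{\mathrm{traj}}\ge B\,|T-\hat T|\ge B\,\mathbf{1}\{K(x)\neq K(\hat x)\}$, take expectations under an arbitrary coupling, and invoke the coupling characterization of total variation before passing to the infimum. Your write-up is if anything slightly more careful, since you explicitly justify the step $|T-\hat T|\ge\mathbf{1}\{K(x)\neq K(\hat x)\}$ via integrality and disjointness of the buckets, which the paper asserts without comment.
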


\begin{proof}
For any coupling \(\pi\) of \(p_{\mathrm{data}}\) and \(p_G\), let \((X,\hat X)\sim\pi\).
Since \(d_{\mathrm{traj}}(X,\hat X)\ge B\,|T(X)-T(\hat X)|\ge B\,\mathbf{1}\{K(X)\neq K(\hat X)\}\),
\[
\E_{\pi}[d_{\mathrm{traj}}(X,\hat X)]\;\ge\; B\,\mathbb{P}_{\pi}\big(K(X)\neq K(\hat X)\big).
\]
Minimizing over couplings gives
\[
W_1(p_{\mathrm{data}},p_G)\;\ge\; B\,\inf_{\pi}\mathbb{P}_{\pi}(K(X)\neq K(\hat X)).
\]
The minimum mismatch probability between two discrete distributions equals their total variation distance,
so \(\inf_{\pi}\mathbb{P}_{\pi}(K(X)\neq K(\hat X))=\mathrm{TV}(w,\hat w)\),
which proves the claim.
\end{proof}

\begin{corollary}[Within-bucket matching implies derived-variable distribution matching]
\label{cor:las_derived_match}
Let \(f\in\{\mathrm{Tot},\mathrm{Avg},\mathrm{Vis}\}\).
Then \(f\) is 1-Lipschitz with respect to \(d_{\mathrm{traj}}\) (Lemma~\ref{lem:lipschitz-w1}), and
\[
W_1\big(f_{\#}p_{\mathrm{data}},\, f_{\#}p_G\big)
\le \sum_{k=1}^K w_k\, W_{1,k}\big(p_{\mathrm{data},k},p_{G,k}\big)
+ C_f\,\mathrm{TV}(w,\hat w).
\]

where one may take \(C_{\mathrm{Tot}}=B T_{\max}\), \(C_{\mathrm{Avg}}=B\), and \(C_{\mathrm{Vis}}=T_{\max}\).
\end{corollary}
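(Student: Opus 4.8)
The plan is to exhibit an explicit coupling of $f_{\#}p_{\mathrm{data}}$ and $f_{\#}p_G$ whose transport cost splits cleanly into a within-bucket piece and a length-marginal piece. Two ingredients drive the argument. First, by Lemma~\ref{lem:lipschitz-w1}, each $f\in\{\mathrm{Tot},\mathrm{Avg},\mathrm{Vis}\}$ is $1$-Lipschitz with respect to $d_{\mathrm{traj}}$; this controls the cost of matching trajectories that lie in the \emph{same} bucket. Second, under Assumption~\ref{ass:bounds} each $f$ has a bounded range $\sup f-\inf f\le C_f$, with $C_{\mathrm{Tot}}=BT_{\max}$ (since $\mathrm{Tot}(x)\le\sum_{t=1}^{T}(\tau_t^{(\mathrm{intra})}+\tau_t^{(\mathrm{inter})})\le BT_{\max}$), $C_{\mathrm{Avg}}=B$ (since each $\tau_t^{(\mathrm{intra})}\le B$ forces $\mathrm{Avg}\le B$), and $C_{\mathrm{Vis}}=T_{\max}$ (since $T\le T_{\max}$); this controls the worst case whenever a data trajectory must be matched to a generated trajectory in a \emph{different} bucket.

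Next I would build the coupling by gluing. Write $p_{\mathrm{data}}=\sum_k w_k\,p_{\mathrm{data},k}$ and $p_G=\sum_k \hat w_k\,p_{G,k}$, and let $\gamma=(\gamma_{kl})$ be a maximal coupling of the bucket weights $w$ and $\hat w$ on $\{1,\dots,K\}^2$, so that $\sum_l \gamma_{kl}=w_k$, $\sum_k \gamma_{kl}=\hat w_l$, and $\sum_k \gamma_{kk}=1-\mathrm{TV}(w,\hat w)$. Define the trajectory coupling $\pi:=\sum_{k,l}\gamma_{kl}\,\pi_{kl}$, where on the diagonal $\pi_{kk}$ is an optimal $d_{\mathrm{traj}}$-coupling of $p_{\mathrm{data},k}$ and $p_{G,k}$ attaining $W_{1,k}$, and off the diagonal $\pi_{kl}=p_{\mathrm{data},k}\otimes p_{G,l}$. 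A short check confirms the $X$-marginal of $\pi$ is $\sum_k\big(\sum_l\gamma_{kl}\big)p_{\mathrm{data},k}=p_{\mathrm{data}}$ and likewise the $\hat X$-marginal is $p_G$; hence $(f(X),f(\hat X))$ for $(X,\hat X)\sim\pi$ is a valid coupling of the pushforwards, giving $W_1(f_{\#}p_{\mathrm{data}},f_{\#}p_G)\le \E_{\pi}\big|f(X)-f(\hat X)\big|$.

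Then I would split the cost along the diagonal and off-diagonal blocks of $\gamma$. On the diagonal, since $f$ is $1$-Lipschitz and $\pi_{kk}$ is $d_{\mathrm{traj}}$-optimal within bucket $k$,
\[
\E_{\pi_{kk}}\big|f(X)-f(\hat X)\big|
\;\le\;\E_{\pi_{kk}}\,d_{\mathrm{traj}}(X,\hat X)
\;=\;W_{1,k}\big(p_{\mathrm{data},k},p_{G,k}\big),
\]
so the diagonal contributes $\sum_k \gamma_{kk}\,W_{1,k}\le \sum_k w_k\,W_{1,k}$, using $\gamma_{kk}\le w_k$. Off the diagonal, I bound $|f(X)-f(\hat X)|\le C_f$ by the range computed above, so these blocks contribute at most $C_f\sum_{k\ne l}\gamma_{kl}=C_f\big(1-\sum_k\gamma_{kk}\big)=C_f\,\mathrm{TV}(w,\hat w)$ by the maximal-coupling identity. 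Adding the two contributions yields exactly the claimed bound.

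The point requiring care is resisting the temptation to first invoke $W_1(f_{\#}p_{\mathrm{data}},f_{\#}p_G)\le W_1^{d_{\mathrm{traj}}}(p_{\mathrm{data}},p_G)$ and then decompose the \emph{global} trajectory-space distance. That route charges every cross-bucket match the full $d_{\mathrm{traj}}$-diameter (of order $BT_{\max}$), which would inflate the cross-bucket constant uniformly and destroy the sharp $f$-specific constants $C_{\mathrm{Avg}}=B$ and $C_{\mathrm{Vis}}=T_{\max}$. The gluing must therefore be carried out in trajectory space while the cost is always measured in $f$-space, so that off-diagonal matches are charged only the range of $f$ rather than the much larger $d_{\mathrm{traj}}$-diameter. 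The only remaining fact to verify is the maximal-coupling identity $\sum_k\gamma_{kk}=1-\mathrm{TV}(w,\hat w)$, which is standard for two discrete distributions on a common index set.
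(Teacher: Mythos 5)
Your proof is correct and takes essentially the same route as the paper: the paper's proof pushes forward within each bucket via the 1-Lipschitz property and then invokes a ``standard mixture bound'' \(W_1\big(\sum_k w_k P_k,\sum_k \hat w_k Q_k\big)\le \sum_k w_k W_1(P_k,Q_k)+\mathrm{diam}\big(f(\mathcal{X})\big)\,\mathrm{TV}(w,\hat w)\) together with \(\mathrm{diam}(f(\mathcal{X}))\le C_f\). Your explicit gluing construction (maximal coupling of the bucket weights, optimal within-bucket couplings on the diagonal, product couplings off it) is precisely a proof of that cited mixture bound, and your care to charge off-diagonal matches only the range of \(f\) rather than the \(d_{\mathrm{traj}}\)-diameter is exactly what the paper's use of \(\mathrm{diam}(f(\mathcal{X}))\) accomplishes, so the two arguments coincide.
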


\begin{proof}
The 1-Lipschitz property implies \(W_1(f_{\#}p_{\mathrm{data},k}, f_{\#}p_{G,k})\le W_{1,k}(p_{\mathrm{data},k},p_{G,k})\) for each \(k\).
A standard mixture bound on \(W_1\) then gives
\[
W_1\!\big(f_{\#}p_{\mathrm{data}},\, f_{\#}p_G\big)
\le \sum_k w_k\, W_1\!\big(f_{\#}p_{\mathrm{data},k},\, f_{\#}p_{G,k}\big)
+ \mathrm{diam}\!\big(f(\mathcal{X})\big)\,\mathrm{TV}(w,\hat w).
\]

and \(\mathrm{diam}(f(\mathcal{X}))\le C_f\) under Assumption~\ref{ass:bounds}.
Combining the inequalities yields the result.
\end{proof}

\paragraph{Consequences and mechanism.}
Lemma~\ref{lem:las_nullspace} formalizes that LAS \emph{projects out} bucket-only (length-only) shortcut features within each update, so the critic must rely on within-bucket structure.
Proposition~\ref{prop:wlas_ipm} shows that, once bucket weights are aligned, LAS corresponds to optimizing a weighted sum of within-bucket Wasserstein/IPM objectives.
Corollary~\ref{cor:las_derived_match} then connects within-bucket matching to the \emph{derived-variable distribution matching} reported in our experiments.
In contrast, Lemma~\ref{lem:rs_length_lb} highlights that the global Wasserstein objective optimized under random sampling can be dominated by bucket-marginal mismatch, encouraging length-driven discrimination rather than improving within-bucket structure.

\section{Experimental Evaluation (Full)}
\label{app:full_eval}
\label{app:exp_full}
\label{sec:experiments_app}
\noindent This appendix complements the main-text experimental protocol with additional plots, dataset details, and ablation results.

\subsection{Additional mall plots}
\label{app:extra_mall_plots}

\noindent Figure~\ref{fig:mall_length_all} provides per-mall trajectory-length (\#visits) overlays under random sampling (RS) and LAS.

\begin{figure*}[t]
\centering
\textbf{RS (top row)}\par\vspace{0.3em}
\begin{subfigure}{0.24\textwidth}\centering
\includegraphics[width=\textwidth]{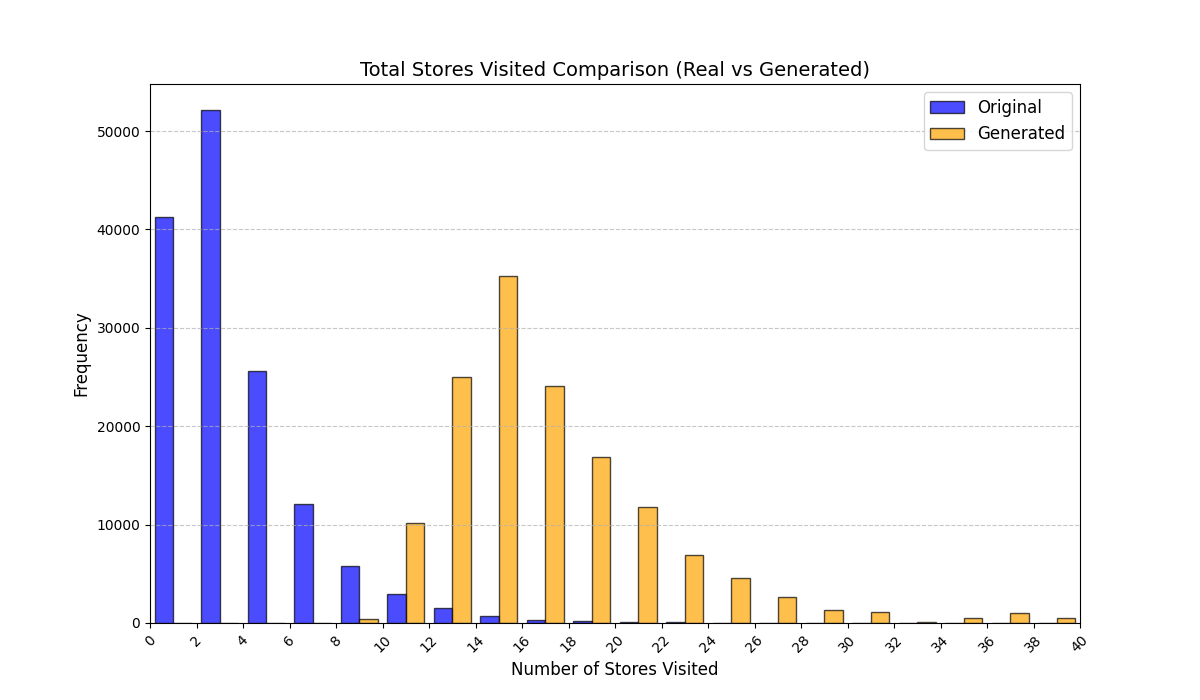}\caption{Mall A}\end{subfigure}
\begin{subfigure}{0.24\textwidth}\centering
\includegraphics[width=\textwidth]{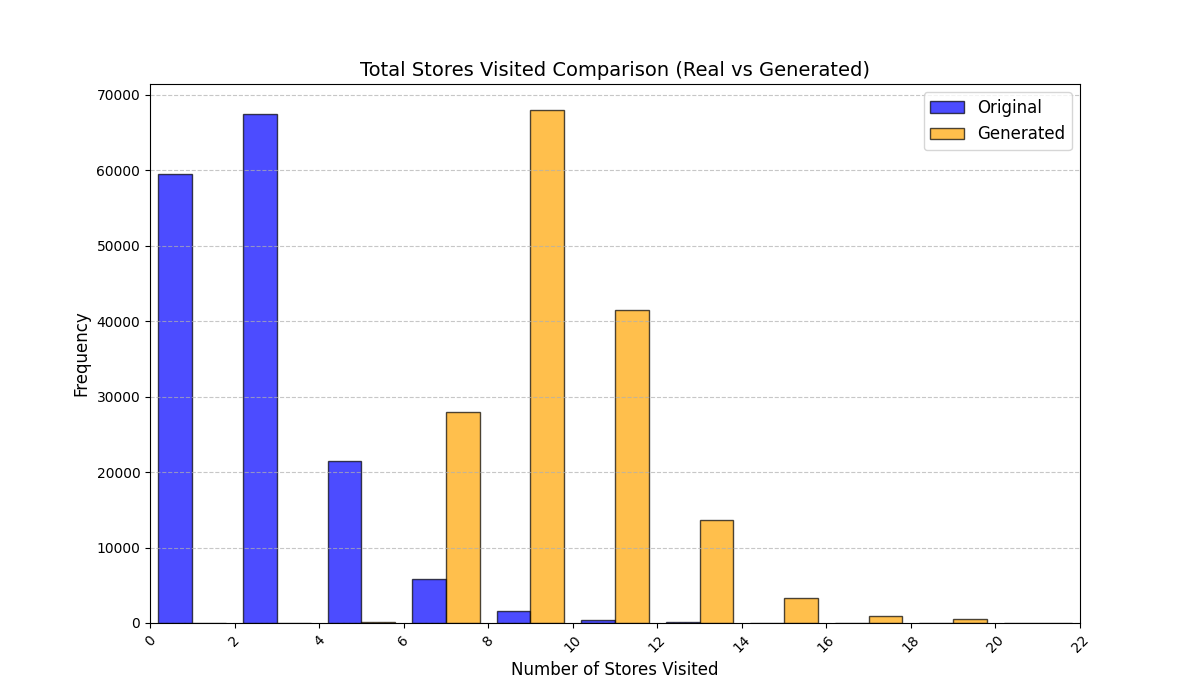}\caption{Mall B}\end{subfigure}
\begin{subfigure}{0.24\textwidth}\centering
\includegraphics[width=\textwidth]{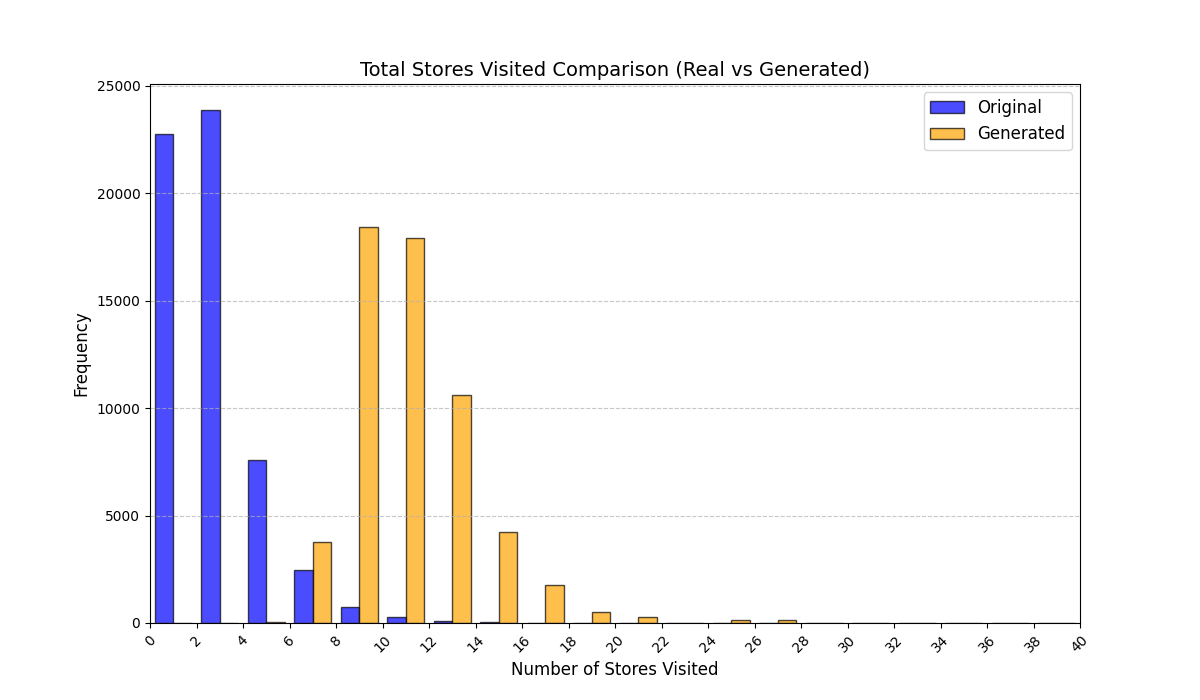}\caption{Mall C}\end{subfigure}
\begin{subfigure}{0.24\textwidth}\centering
\includegraphics[width=\textwidth]{figures/mall_bjfk_num_visits_RS.png}\caption{Mall D}\end{subfigure}

\vspace{0.6em}
\textbf{LAS (bottom row)}\par\vspace{0.3em}
\begin{subfigure}{0.24\textwidth}\centering
\includegraphics[width=\textwidth]{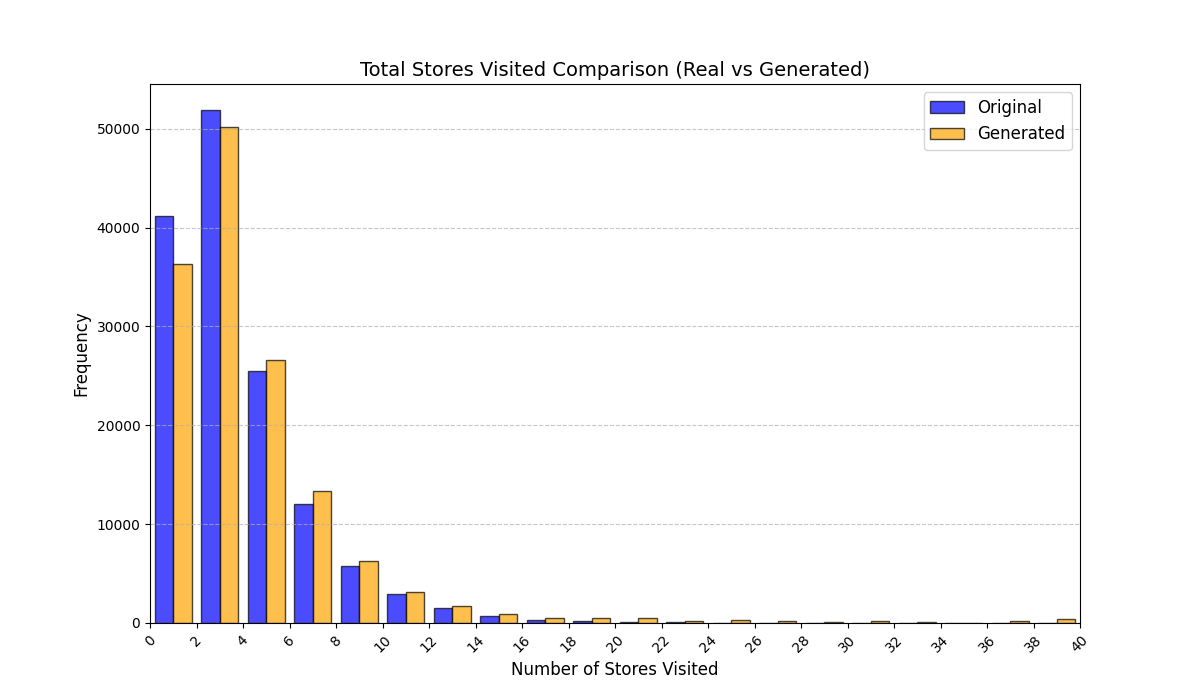}\caption{Mall A}\end{subfigure}
\begin{subfigure}{0.24\textwidth}\centering
\includegraphics[width=\textwidth]{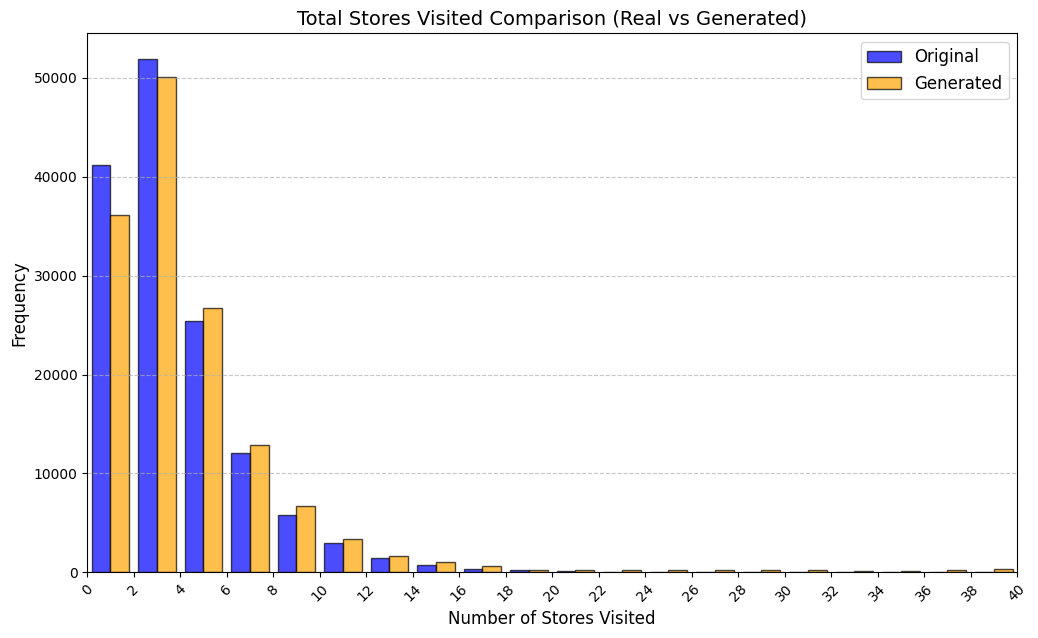}\caption{Mall B}\end{subfigure}
\begin{subfigure}{0.24\textwidth}\centering
\includegraphics[width=\textwidth]{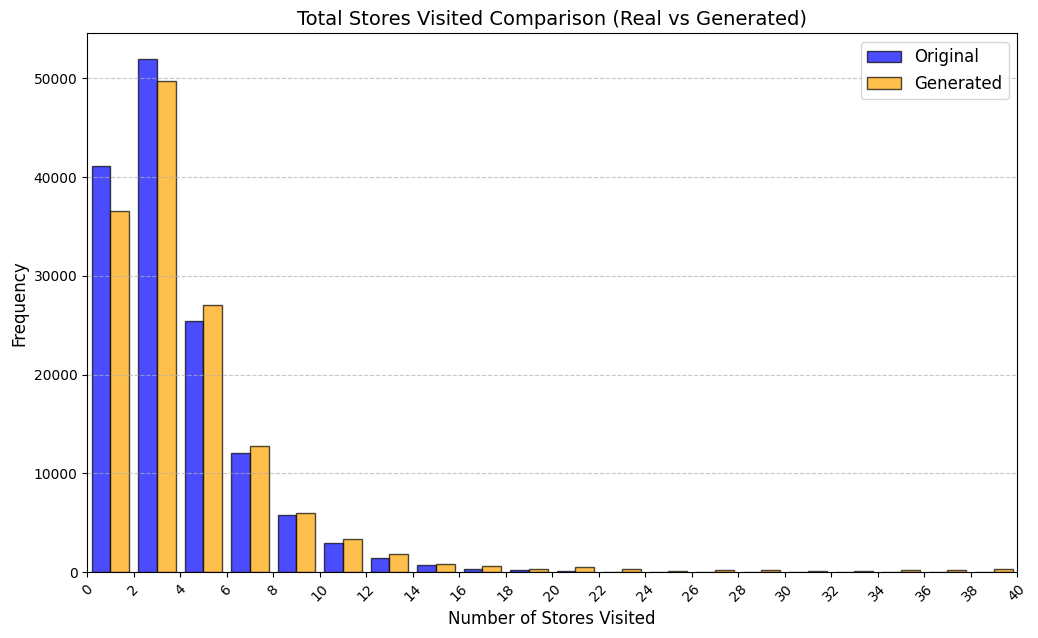}\caption{Mall C}\end{subfigure}
\begin{subfigure}{0.24\textwidth}\centering
\includegraphics[width=\textwidth]{figures/mall_bjfk_num_visits_LAS.png}\caption{Mall D}\end{subfigure}
\caption{Trajectory-length (\#visits) distributions across four malls. LAS matches the ground-truth length marginal substantially better than RS.}
\label{fig:mall_length_all}
\end{figure*}

\paragraph{Data.}
We use anonymized mall visit trajectories on held-out calendar days. Each trajectory
$\pi=\{(j_t,\tau_t^{(\text{intra})},\tau_t^{(\text{inter})})\}_{t=1}^{T}$
records the visited store $j_t$, the intra–store dwell time $\tau_t^{(\text{intra})}$, and the inter–store (walking) time $\tau_t^{(\text{inter})}$ at step $t$.
The corpus covers a single multi-floor mall with
$|\mathcal{S}|=202$ stores across $F=3$ floors and $C=19$ categories, spanning a broad mix of weekdays/weekends and event days.

\paragraph{Train/test split.}
To prevent temporal leakage, we split by \emph{unique days} rather than by individual trajectories.
We use an $80\%$/$20\%$ day-level split with a fixed seed and no overlap between sets.
Unless otherwise noted, all figures compare real vs.\ generated distributions on the held-out test days only.

\paragraph{Model configuration (notation $\rightarrow$ value).}
Model architecture and embedding dimensions are shared across experiments; dataset-specific constants (e.g., the number of stores/floors/categories) are set from each dataset.
For a representative mall, we use:
\begin{center}
\small
\setlength{\tabcolsep}{4pt}
\renewcommand{\arraystretch}{0.95}
\begin{tabular}{l l l}
\toprule
Symbol & Description & Value \\
\midrule
$|\mathcal{S}|$ & number of stores & $202$ \\
$F$ & number of floors & $3$ \\
$C$ & number of store categories & $19$ \\
\midrule
$d_e$ & store embedding dimension & $32$ \\
$h$ & LSTM hidden size & $128$ \\
$z$ & latent dimension (generator) & $16$ \\
$d_{\text{type}}$ & store--type embedding dimension & $16$ \\
$d_{\text{floor}}$ & floor embedding dimension & $8$ \\
\bottomrule
\end{tabular}
\end{center}

\paragraph{Training protocol.}
Training follows the procedure described in the algorithmic section, with the same loss notation and objectives:
the adversarial loss for realism and $\ell_1$ losses for time heads (intra/inter) weighted as in the loss section.
We use Adam optimizers ($\beta_1{=}0.5,\ \beta_2{=}0.999$) with learning rate $10^{-4}$ for both generator and discriminator,
batch size $128$, spectral normalization on linear layers, and Gumbel–Softmax sampling for store selection with an annealed temperature
from $1.5$ down to $0.1$.
Training runs for up to $18$ epochs with early stopping (patience $=3$) based on generator loss.

\paragraph{Evaluation protocol.}
Our evaluation is both \emph{quantitative} and \emph{visual}. For each dataset, we define a set of trajectory-derived variables (e.g., total time, trajectory length/\#visits, intra-/inter-event times, and categorical summaries such as store-type or floor distributions).
We report scalar goodness-of-fit via the Kolmogorov--Smirnov (KS) statistic between the empirical distributions of real and generated trajectories (lower is better), and we additionally overlay the corresponding distributions using shared binning and axis ranges for visualization.
Unless noted otherwise, the reference is the empirical distribution from real trajectories on the held-out test split, and comparisons are made against trajectories generated under the same day-level context and conditioning variables.
The subsequent subsections (Unconditional, Conditional ON/OFF, Swapping by Gate Distance, Swapping by Anchor Distance) apply this protocol under their respective conditions.

\subsection*{Notation and metrics}
A trajectory is $\pi=\{(j_t,\tau_t^{(\text{intra})},\tau_t^{(\text{inter})})\}_{t=1}^{T}$ with visited store $j_t$, intra-store time $\tau_t^{(\text{intra})}$, inter-store (walking) time $\tau_t^{(\text{inter})}$ at step $t$, and $T$ total store visits (trajectory length). We visualize overlays for:
\begin{itemize}
    \item Total time in mall:
    \( M_{\text{total}}=\sum_{t=1}^{T}\tau_t^{(\text{intra})}+\sum_{t=1}^{T}\tau_t^{(\text{inter})} \)
    \item Total intra time:
    \( M_{\text{intra}}^{\text{tot}}=\sum_{t=1}^{T}\tau_t^{(\text{intra})} \)
    \item Total inter time:
    \( M_{\text{inter}}^{\text{tot}}=\sum_{t=1}^{T}\tau_t^{(\text{inter})} \)
    \item Avg.\ intra time per store:
    \( M_{\text{avg-intra}}=\frac{1}{T}\sum_{t=1}^{T}\tau_t^{(\text{intra})} \)
    \item Avg.\ inter time per hop:
    \( M_{\text{avg-inter}}=\frac{1}{\max(T-1,1)}\sum_{t=1}^{T}\tau_t^{(\text{inter})} \)
    \item Trajectory length:
    \( M_{\text{len}}=T \)
\end{itemize}
For category/floor summaries, with $c(j_t)$ the category and $f(j_t)$ the floor of $j_t$, we visualize:
\begin{itemize}
    \item Diversity of categories per trajectory:
    \( M_{\text{div}}=\big|\{c(j_t)\}_{t=1}^{T}\big| \)
    \item Visit counts by category:
    \( N_c=\sum_{t=1}^{T}\mathbf{1}[c(j_t)=c] \)
    \item Intra-store time by category:
    \( T_c=\sum_{t=1}^{T}\mathbf{1}[c(j_t)=c]\cdot \tau_t^{(\text{intra})} \)
    \item Floor-level visit counts:
    \( N_f=\sum_{t=1}^{T}\mathbf{1}[f(j_t)=f] \)
\end{itemize}

\subsection{Unconditional Distribution Matching}
\label{sec:uncond}
We pool all held-out test days—without conditioning on store status—and compare real vs.\ generated trajectories at the population level.

\begin{figure}[htb]
  \centering
  \begin{minipage}[t]{0.48\linewidth}
    \centering
    \includegraphics[width=\linewidth]{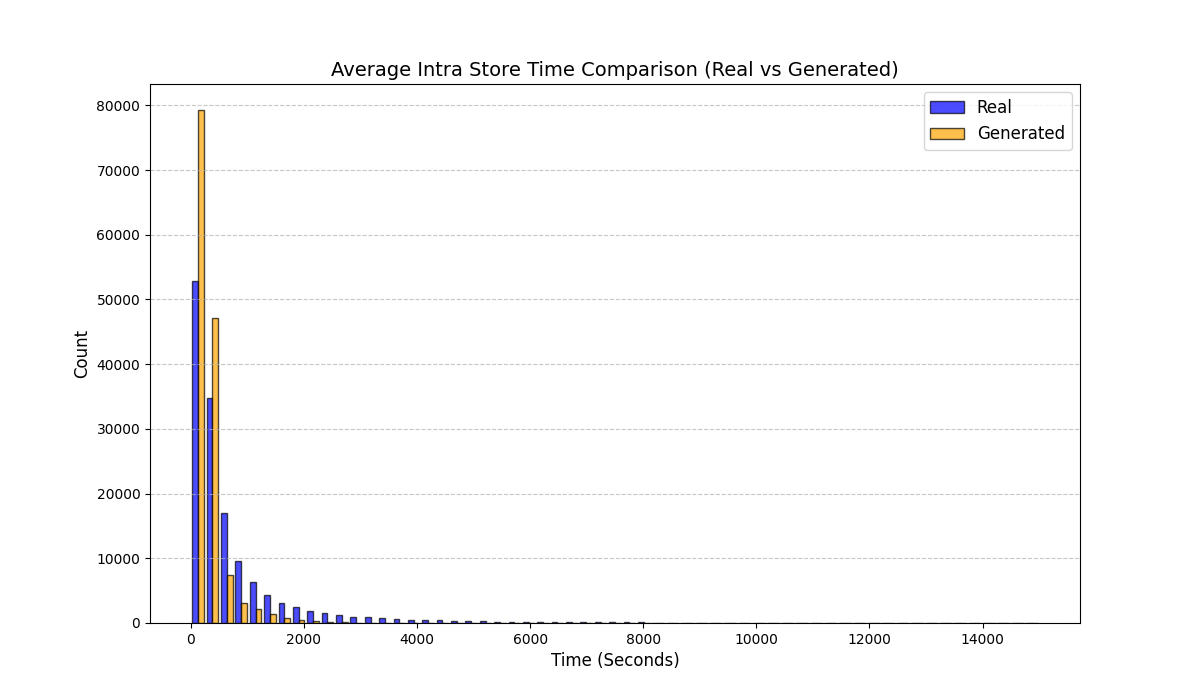}
    \vspace{2pt}
    {\small Average \emph{intra}-store time (real vs.\ generated)}
  \end{minipage}\hfill
  \begin{minipage}[t]{0.48\linewidth}
    \centering
    \includegraphics[width=\linewidth]{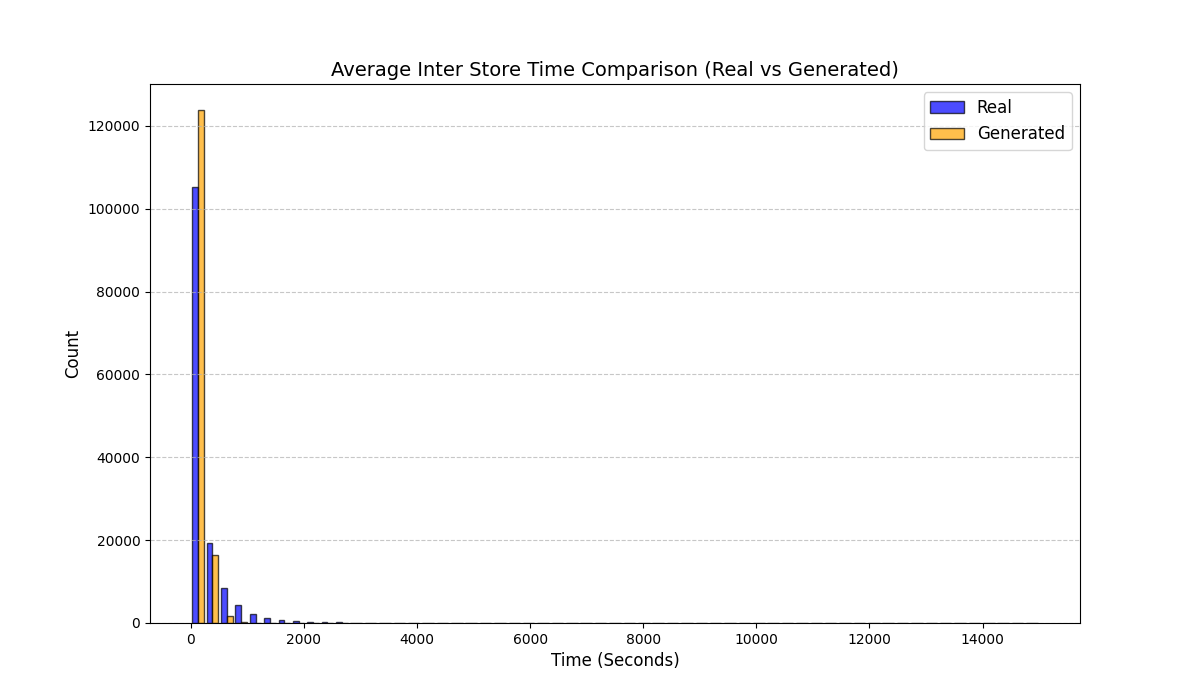}
    \vspace{2pt}
    {\small Average \emph{inter}-store time (real vs.\ generated)}
  \end{minipage}
  \caption{Unconditional overlays for average intra/inter time.}
  \label{fig:uncond_time1}
\end{figure}

\begin{figure}[htb]
  \centering
  \begin{minipage}[t]{0.48\linewidth}
    \centering
    \includegraphics[width=\linewidth]{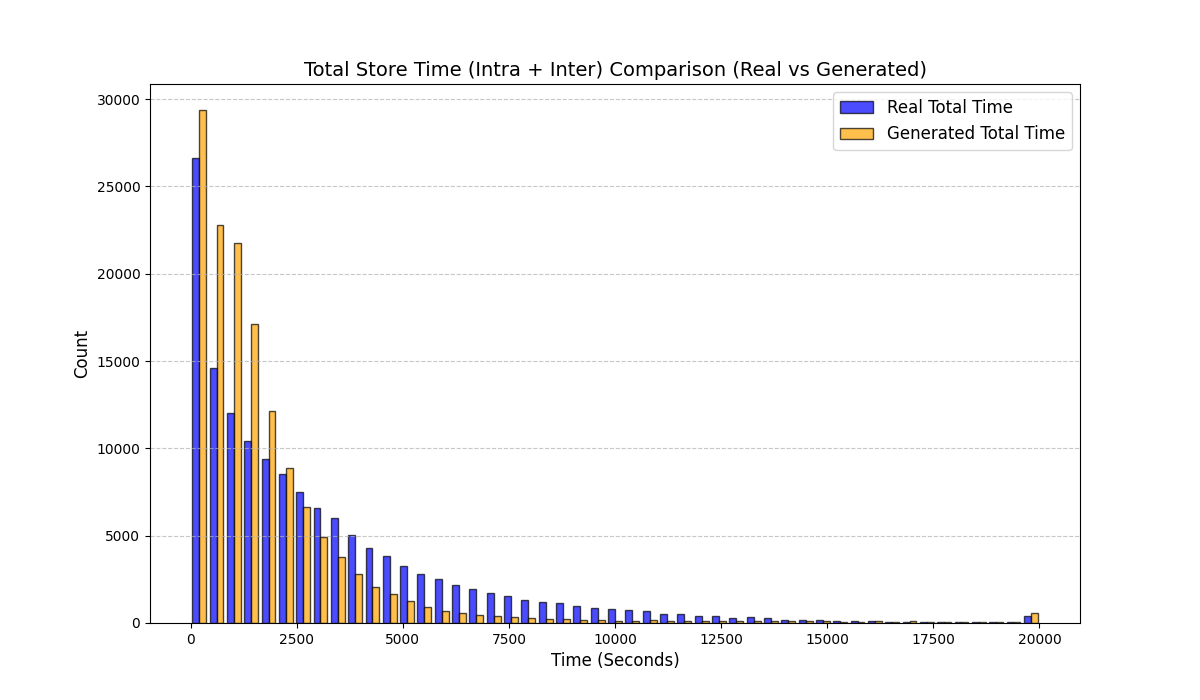}
    \vspace{2pt}
    {\small Total time in mall ($\text{intra}+\text{inter}$)}
  \end{minipage}\hfill
  \begin{minipage}[t]{0.48\linewidth}
    \centering
    \includegraphics[width=\linewidth]{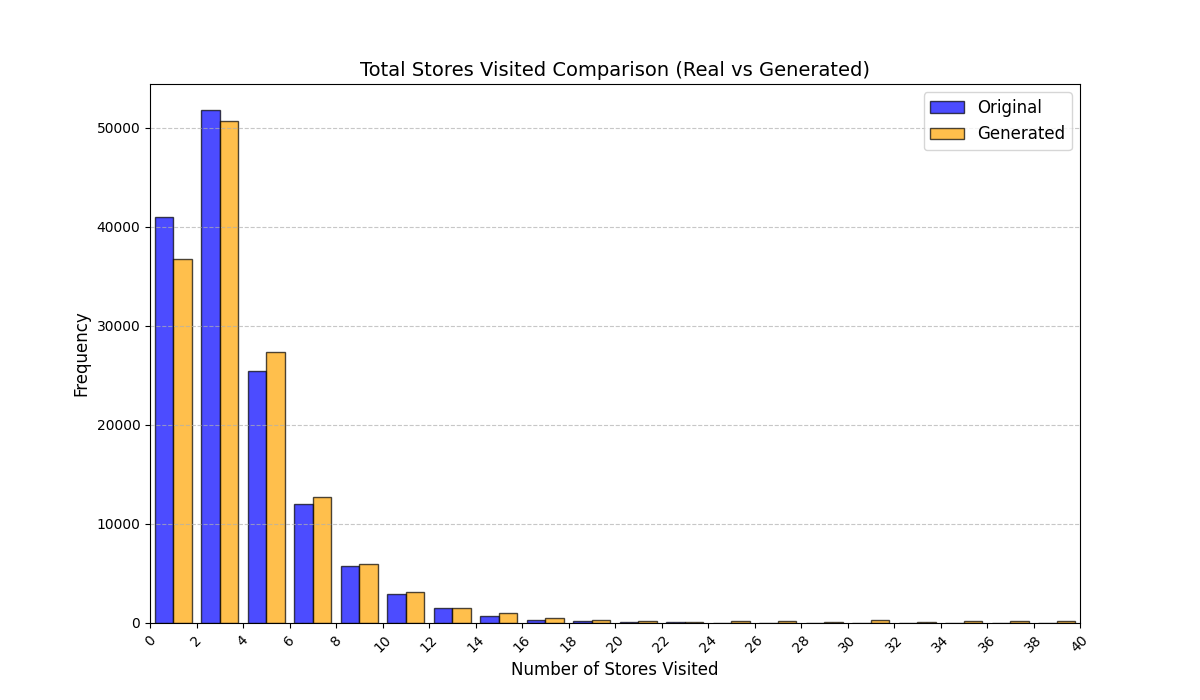}
    \vspace{2pt}
    {\small Trajectory length ($T$)}
  \end{minipage}
  \caption{Unconditional overlays for total time in mall and trajectory length.}
  \label{fig:uncond_time2}
\end{figure}

\begin{figure}[htb]
  \centering
  \begin{minipage}[t]{0.48\linewidth}
    \centering
    \includegraphics[width=\linewidth]{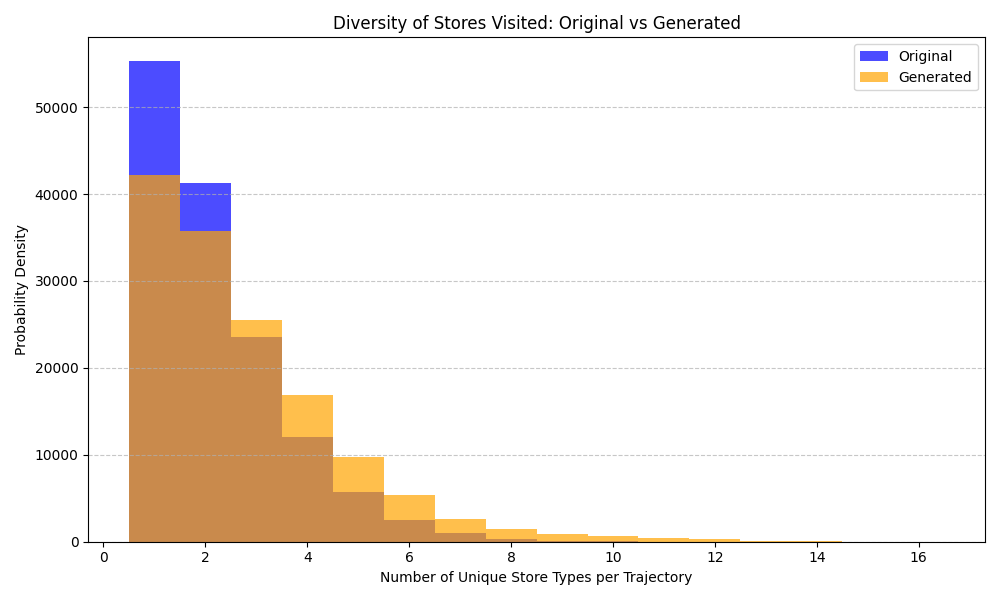}
    \vspace{2pt}
    {\small Diversity: number of unique categories per trajectory}
  \end{minipage}\hfill
  \begin{minipage}[t]{0.48\linewidth}
    \centering
    \includegraphics[width=\linewidth]{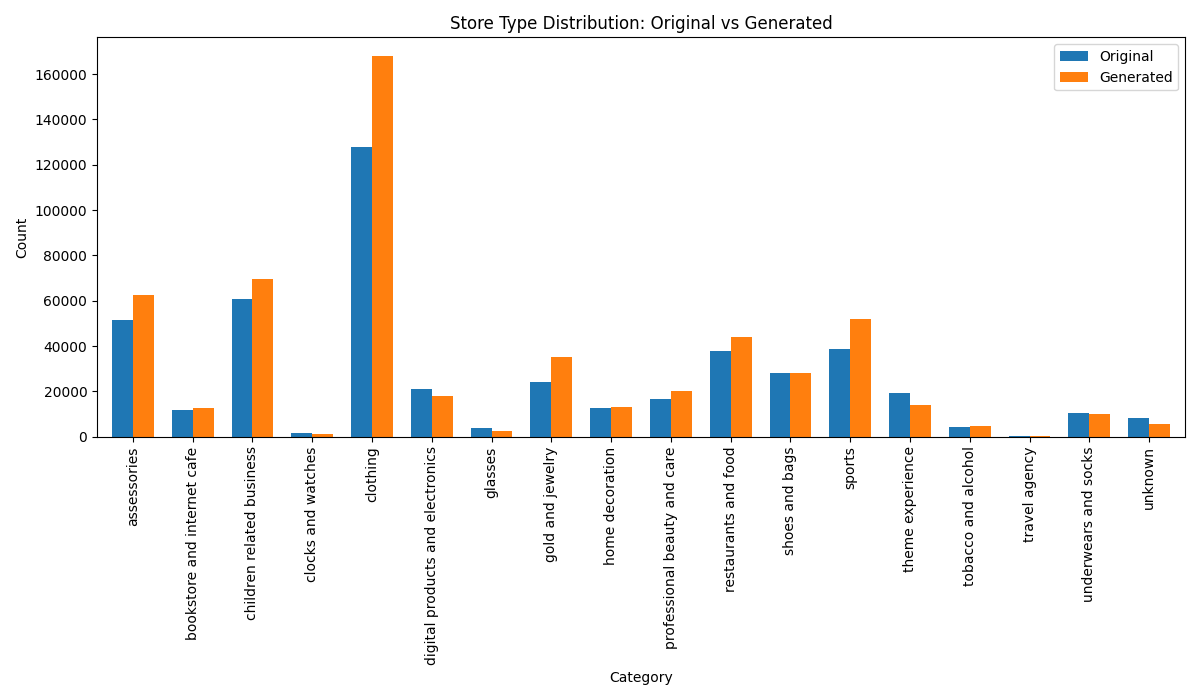}
    \vspace{2pt}
    {\small Store-type visit distribution (counts by category)}
  \end{minipage}
  \caption{Unconditional category/diversity overlays (time-per-category and floor distributions omitted for brevity).}
  \label{fig:uncond_cat}
\end{figure}

\begin{figure}[htb]
  \centering
  \begin{minipage}[t]{0.48\linewidth}
    \centering
    \includegraphics[width=\linewidth]{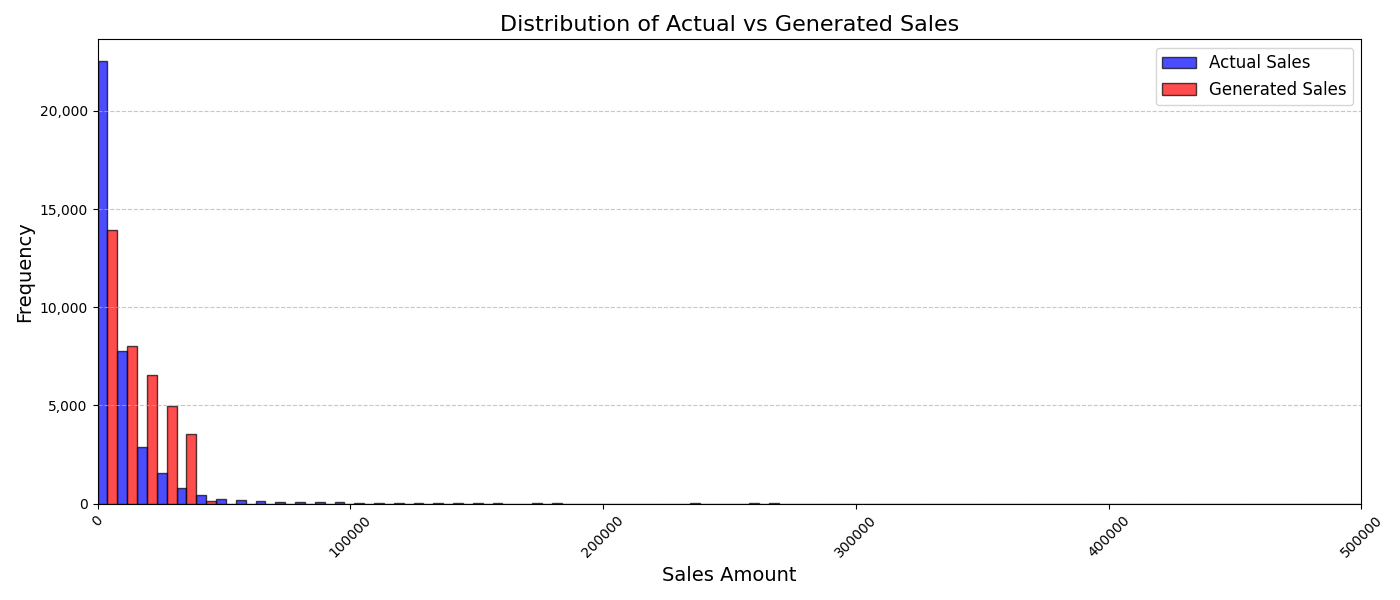}
    \vspace{2pt}
    {\small Store-level sales distribution}
  \end{minipage}\hfill

  \caption{Unconditional overlays for sales marginals (real vs.\ generated).}
  \label{fig:uncond_sales}
\end{figure}

\noindent\emph{Observations.}
Across Figs.~\ref{fig:uncond_time1}–\ref{fig:uncond_time2}, the generator places more mass at shorter dwell times and under-represents the longest tails relative to real trajectories. In Fig.~\ref{fig:uncond_cat}, clothing dominates, with sports and restaurants also prominent; generated trajectories slightly over-index on these high-traffic categories and under-index on smaller experiential types. These patterns indicate that long-stay cohorts (e.g., event days) are harder to reproduce without explicit conditioning, while category shares follow observed traffic but may need rebalancing for niche segments. Sales marginals (Fig.~\ref{fig:uncond_sales}) track the shape of real distributions qualitatively; extremes are less frequent in the generated set.

\subsection{Conditional Store Influence (ON/OFF)}
\label{sec:store-influence}
We study behavioral shifts when a specific store $s^*$ is open (ON) versus closed (OFF). We partition real and generated trajectories by the observed status of $s^*$ and overlay the distributions of the metrics defined above. Representative results for ZARA and MLB are in Figs.~\ref{fig:zara_on_off}–\ref{fig:mlb_on_off}. This analysis is conditional (not counterfactual).

\begin{figure}[htb]
    \centering
    \includegraphics[width=0.45\linewidth]{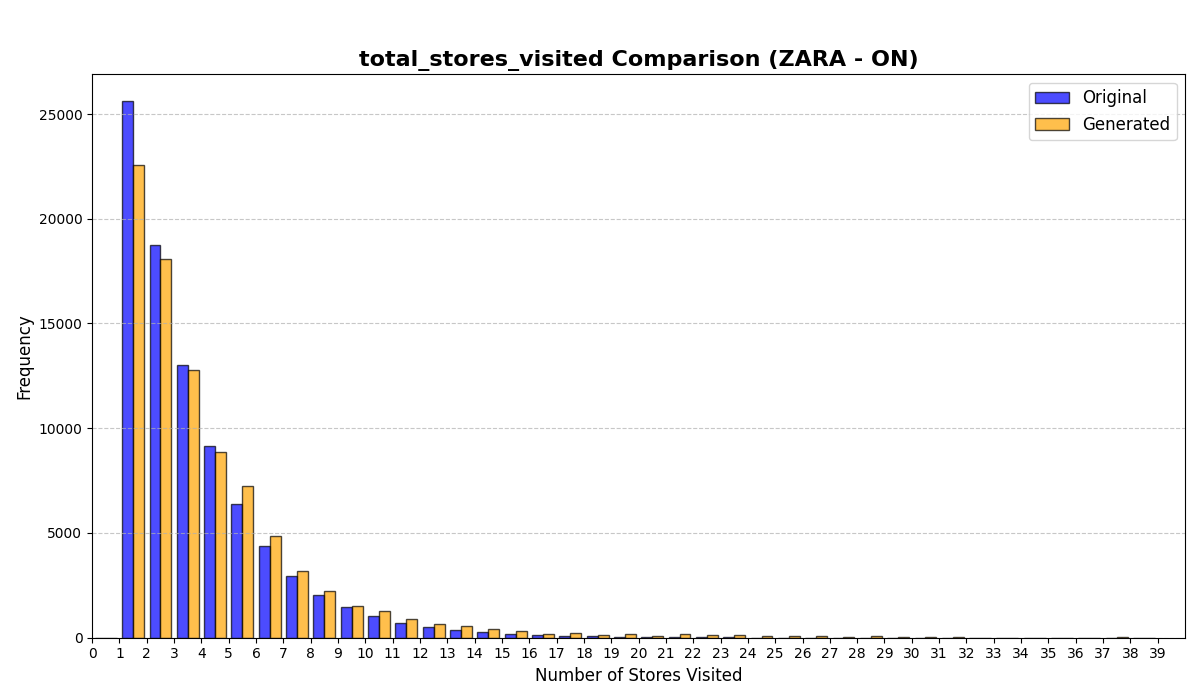}
    \includegraphics[width=0.45\linewidth]{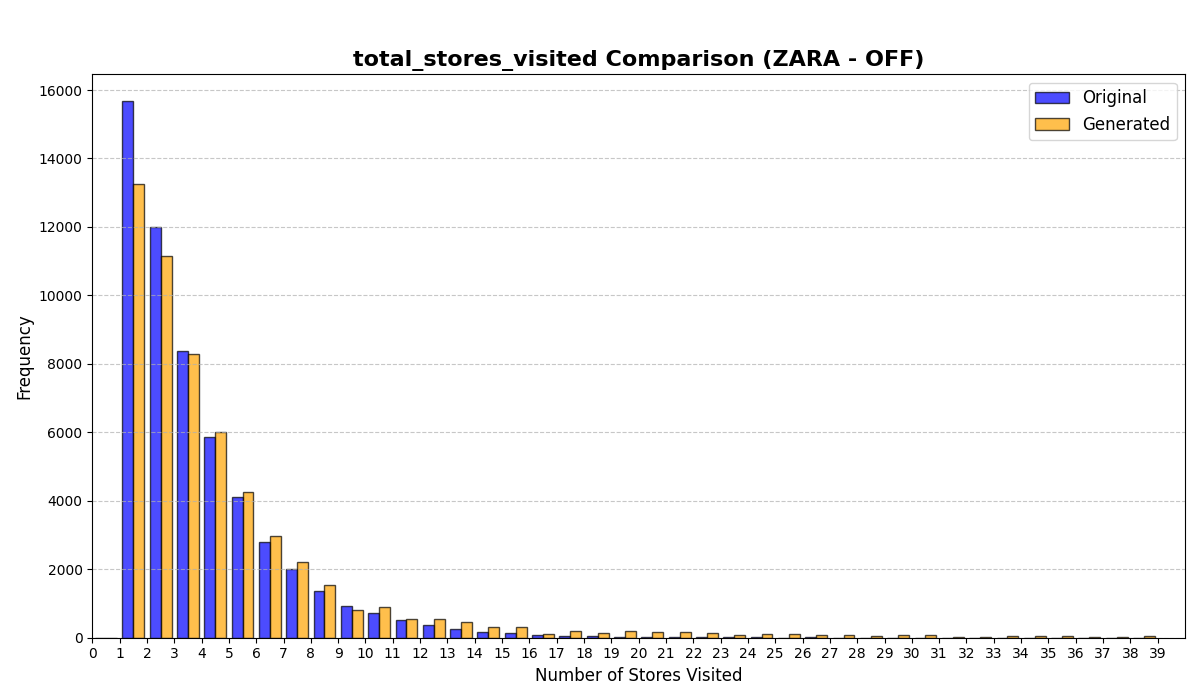}
    \caption{Trajectory length overlays for ZARA under ON (left) and OFF (right). Blue = real, Orange = generated.}
    \label{fig:zara_on_off}
\end{figure}
\begin{figure}[htb]
    \centering
    \includegraphics[width=0.45\linewidth]{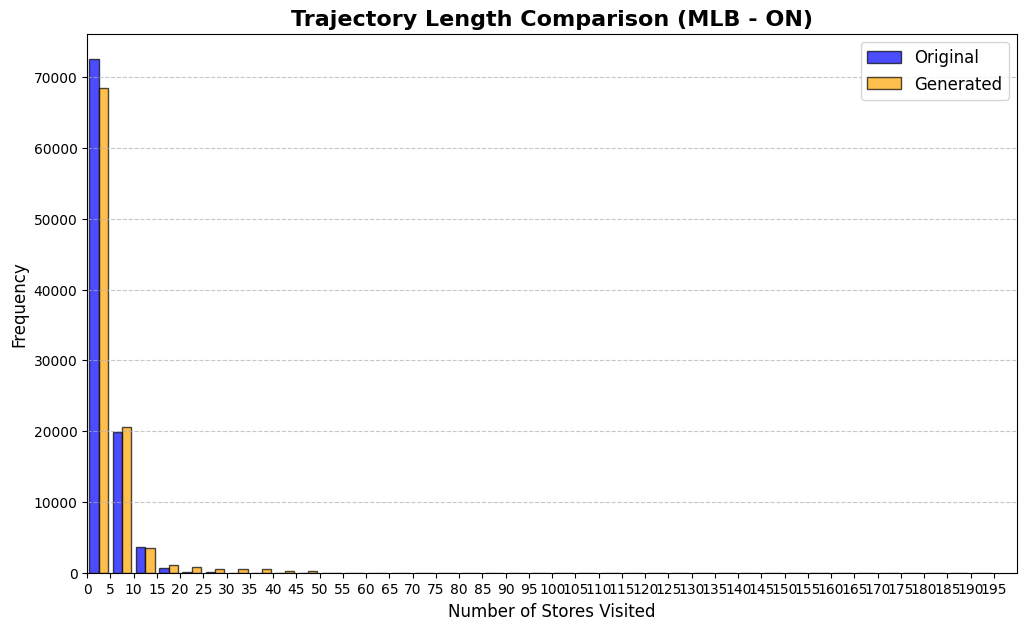}
    \includegraphics[width=0.45\linewidth]{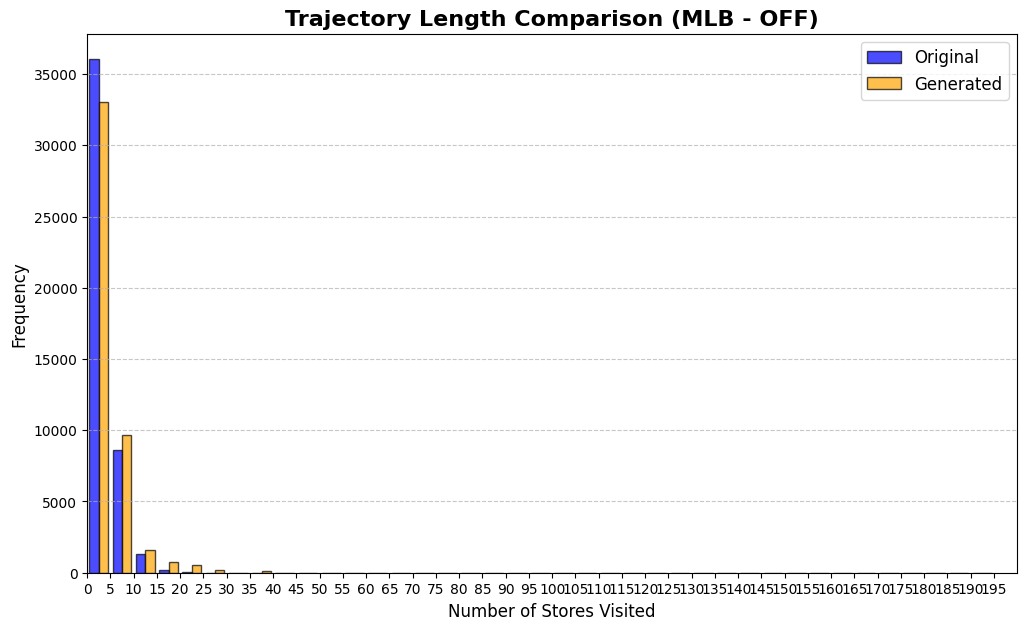}
    \caption{Trajectory length overlays for MLB under ON (left) and OFF (right).}
    \label{fig:mlb_on_off}
\end{figure}

\noindent\emph{Observations.}
ON days exhibit a heavier mid/long tail in trajectory length than OFF days, indicating more multi-store tours when the focal store is available. This suggests co-promotion or cross-windowing with nearby tenants on ON days, while OFF days behave more like quick, targeted trips.

\subsection{Swapping Experiments: Gate Distance}
\label{sec:swap-gate}
We test sensitivity to placement by swapping a target brand (e.g., ZARA) with alternative stores grouped by their distance to the nearest gate. Stores are binned by $(f,h)$: floor $f$ and hop-from-gate group $h$. For each bin we regenerate trajectories under the same day context and visualize the metrics as means with dispersion; comparisons are qualitative.

\begin{figure}[htb]
  \centering
  \includegraphics[width=0.9\linewidth]{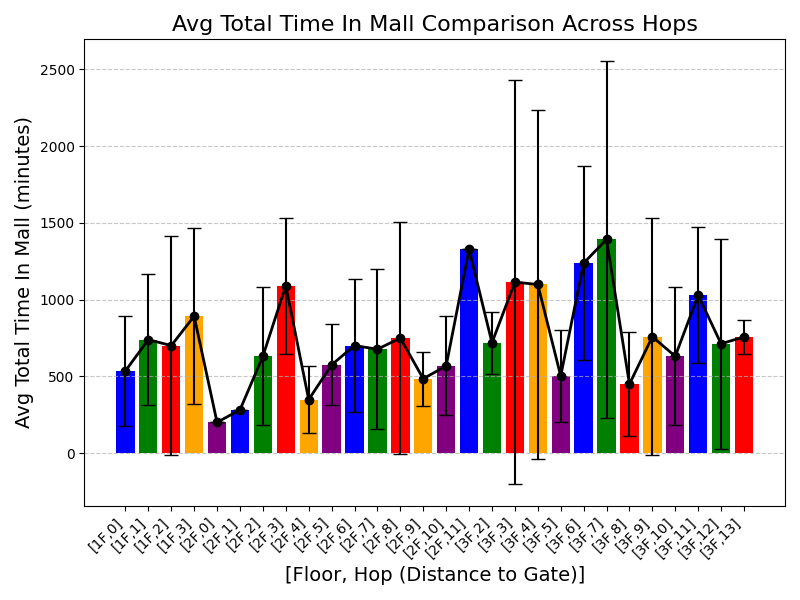}
  \caption{Average total time in mall ($\text{intra}+\text{inter}$) across gate-distance bins $(f,h)$ after swapping. Bars show group means with error bars; the line overlays the trend.}
  \label{fig:swap_gate_total_time}
\end{figure}
\begin{figure}[htb]
  \centering
  \includegraphics[width=0.9\linewidth]{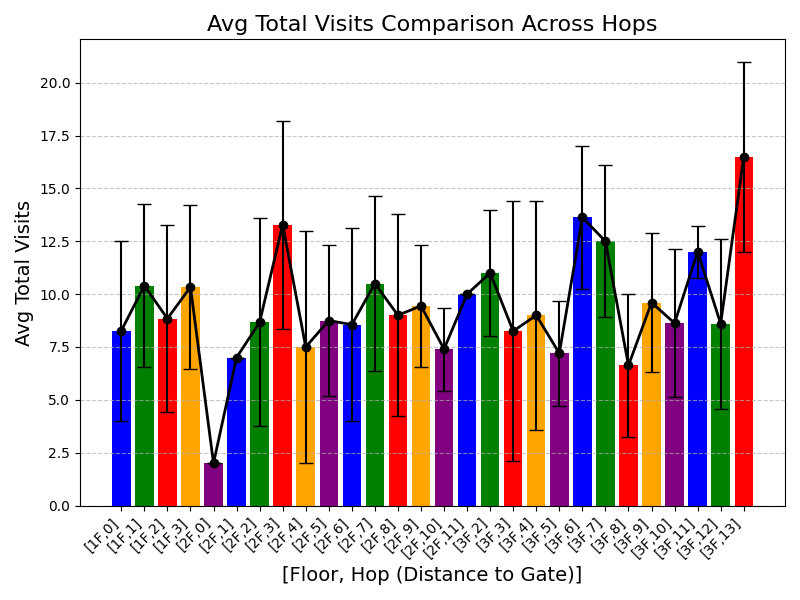}
  \caption{Average total visits ($T$) across gate-distance bins $(f,h)$ after swapping.}
  \label{fig:swap_gate_visits}
\end{figure}

\noindent\emph{Observations.}
Placements a few hops from primary gates tend to exhibit higher mean total time in mall and higher visit counts than gate-adjacent or distant placements (Figs.~\ref{fig:swap_gate_total_time}–\ref{fig:swap_gate_visits}); variance remains substantial and floor effects are visible. For dwell-oriented concepts, positions just beyond the entrances are associated with longer tours.

\subsection{Swapping Experiments: Anchor-Store Distance}
\label{sec:swap-anchor}
We analyze sensitivity to an anchor store $s_c$ (e.g., ZARA) by swapping a target brand (e.g., Uniqlo) with candidates binned by $(f,h)$, where $f$ is the floor and $h$ is the hop distance to $s_c$. For each $(f,h)$ bin we regenerate trajectories under the same day context and visualize qualitative summaries.

\noindent\emph{Observations.}
Bins with small $h$ (roughly $h\in\{1,2,3\}$) show higher average visit counts (Fig.~\ref{fig:swap_anchor_visits}), while the average intra-store time per stop is largely flat across $(f,h)$ (Fig.~\ref{fig:swap_anchor_avg_intra}). Thus, proximity primarily affects circulation rather than dwell; changing dwell per stop likely requires adjustments to in-store experience or messaging rather than small relocations (see Figs.~\ref{fig:swap_anchor_avg_intra}–\ref{fig:swap_anchor_visits}).

\begin{figure}[htb]
  \centering
  \includegraphics[width=0.95\linewidth]{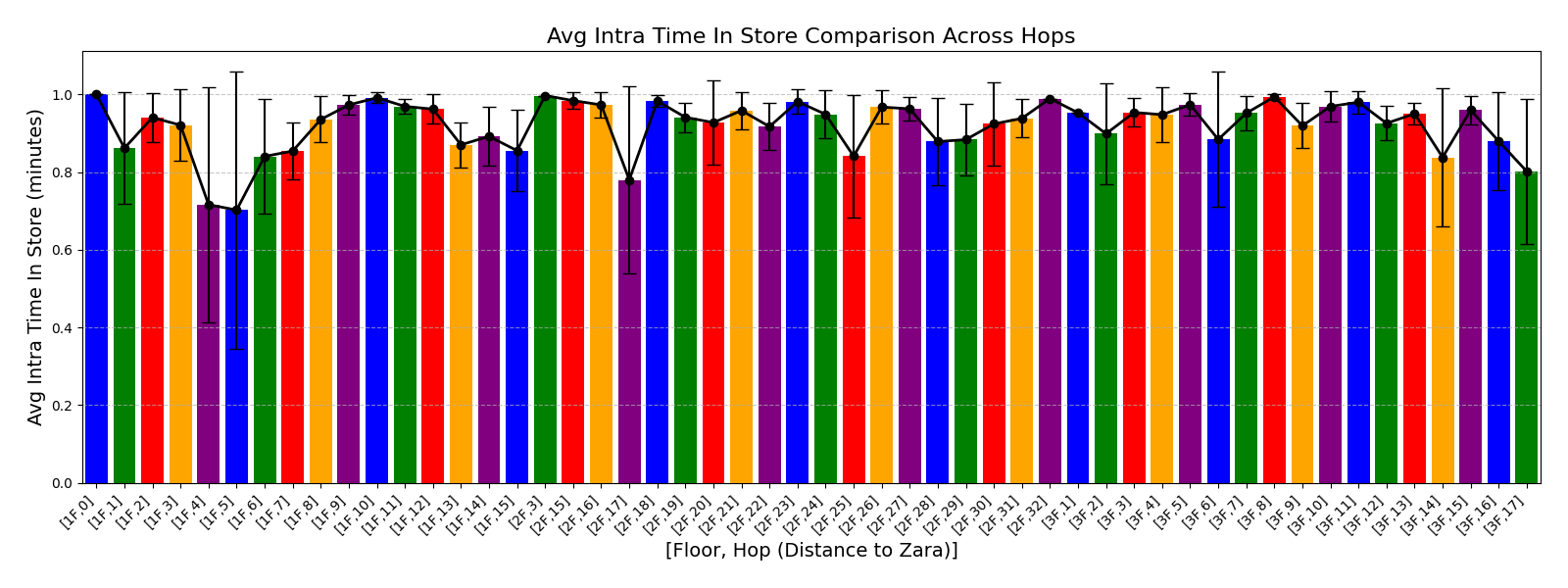}
  \caption{Average intra time per store across \([f,h]\) bins relative to anchor ZARA after swapping Uniqlo.}
  \label{fig:swap_anchor_avg_intra}
\end{figure}
\begin{figure}[htb]
  \centering
  \includegraphics[width=0.95\linewidth]{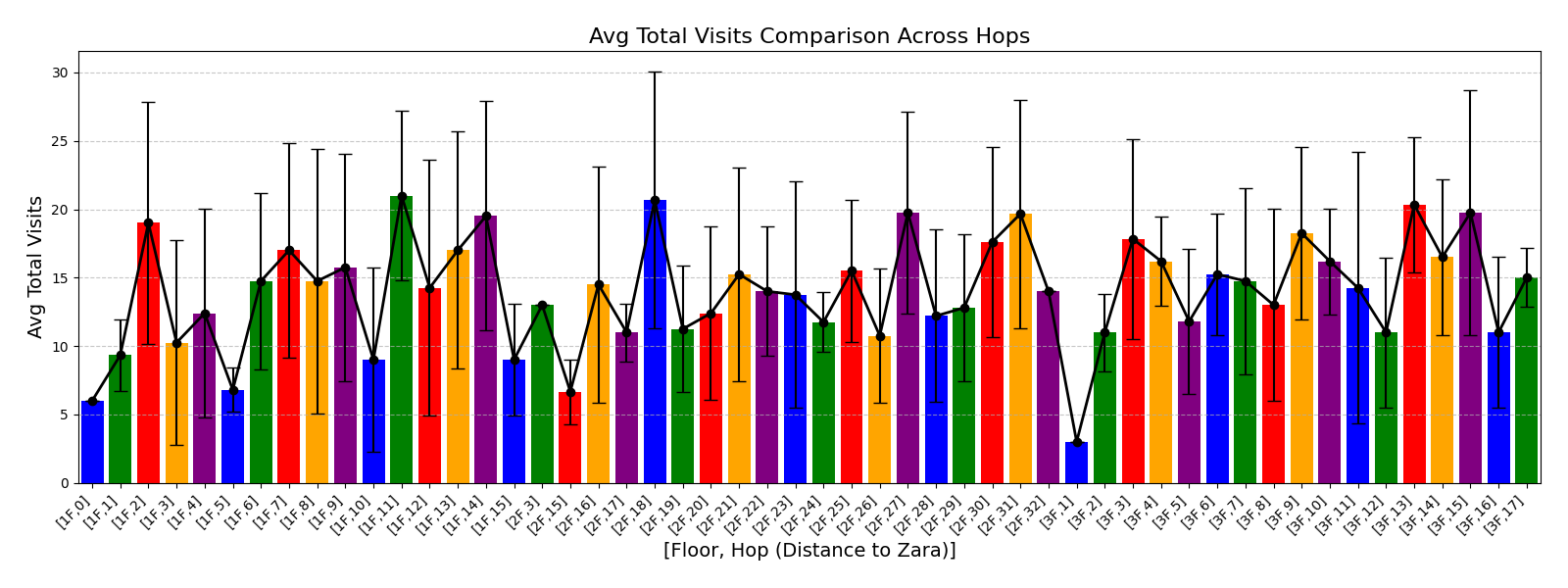}
  \caption{Average total visits (\(T\)) across \([f,h]\) bins relative to anchor ZARA after swapping Uniqlo.}
  \label{fig:swap_anchor_visits}
\end{figure}

\subsection{Four-mall evaluation: full results}
\label{app:full_mall_results}
Table~\ref{tab:mall_full_results} reports the KS statistic for all derived metrics we computed from mall trajectories.
Figure~\ref{fig:mall_total_time_all} visualizes the total-mall-time marginals across all four malls for RS and LAS.

\begin{table*}[t]
\centering
\caption{Full four-mall results: KS statistic for all derived metrics (lower is better).}
\label{tab:mall_full_results}
\setlength{\tabcolsep}{3pt}
\renewcommand{\arraystretch}{1.03}
\scriptsize
\begin{tabular}{lcccccccc}
\toprule
\multirow{2}{*}{Derived metric}
& \multicolumn{2}{c}{Mall A}
& \multicolumn{2}{c}{Mall B}
& \multicolumn{2}{c}{Mall C}
& \multicolumn{2}{c}{Mall D}\\
\cmidrule(lr){2-3}\cmidrule(lr){4-5}\cmidrule(lr){6-7}\cmidrule(lr){8-9}
& RS & LAS & RS & LAS & RS & LAS & RS & LAS\\
\midrule
Average inter-store time & 0.622 & 0.289 & 0.645 & 0.380 & 0.684 & 0.404 & 0.767 & 0.456 \\
Average intra-store time & 0.975 & 0.005 & 0.978 & 0.066 & 0.975 & 0.382 & 0.959 & 0.034 \\
Floor distribution & 1.000 & 0.667 & 1.000 & 0.333 & 1.000 & 0.667 & 0.200 & 0.400 \\
Store diversity & 0.641 & 0.074 & 0.611 & 0.045 & 0.523 & 0.108 & 0.450 & 0.039 \\
Store category mix & 0.278 & 0.333 & 0.506 & 0.287 & 0.467 & 0.333 & 0.477 & 0.303 \\
Time spent per category & 0.419 & 0.432 & 0.468 & 0.383 & 0.394 & 0.367 & 0.425 & 0.379 \\
Total inter-store time & 0.726 & 0.538 & 0.777 & 0.413 & 0.784 & 0.219 & 0.764 & 0.352 \\
Total intra-store time & 0.854 & 0.060 & 0.801 & 0.116 & 0.730 & 0.168 & 0.799 & 0.134 \\
Total time in mall & 0.528 & 0.056 & 0.538 & 0.152 & 0.630 & 0.269 & 0.661 & 0.072 \\
Trajectory length / \#visits & 0.955 & 0.047 & 0.947 & 0.048 & 0.953 & 0.048 & 0.951 & 0.044 \\
\midrule
Mean across metrics & 0.700 & 0.250 & 0.727 & 0.222 & 0.714 & 0.297 & 0.645 & 0.221 \\
\bottomrule
\end{tabular}
\end{table*}

\begin{figure*}[t]
\centering
\textbf{RS}\par\medskip
\begin{subfigure}[t]{0.24\linewidth}
\centering
\includegraphics[width=\linewidth]{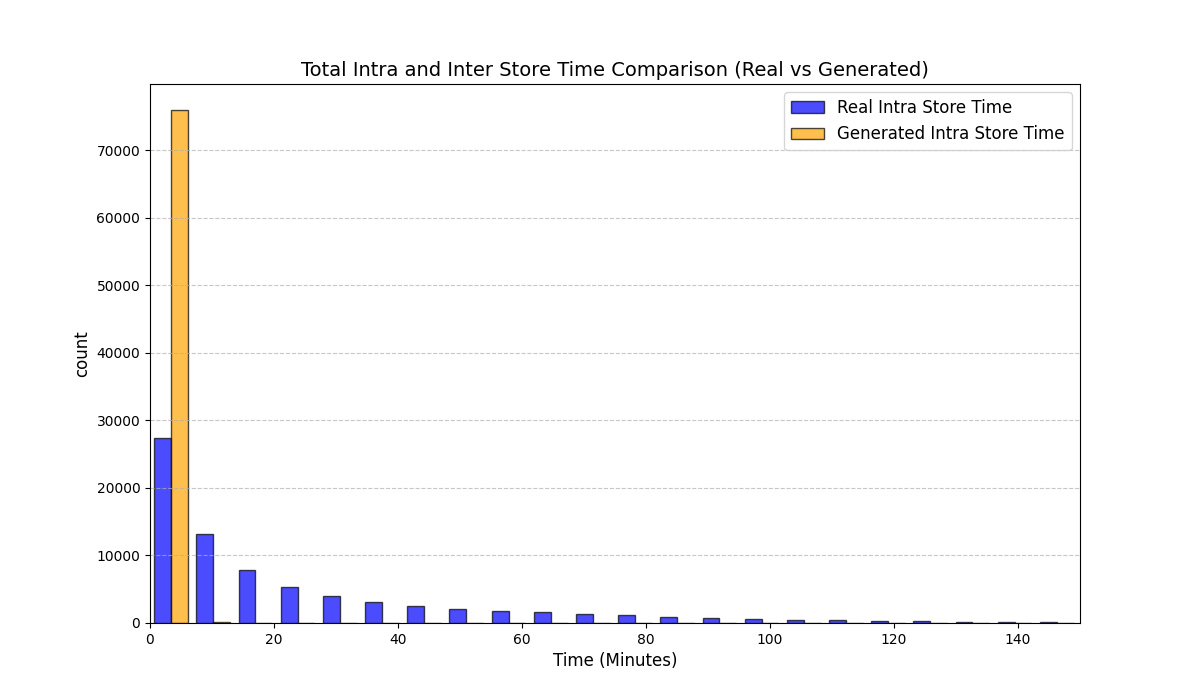}
\caption{Mall A}
\end{subfigure}\hfill
\begin{subfigure}[t]{0.24\linewidth}
\centering
\includegraphics[width=\linewidth]{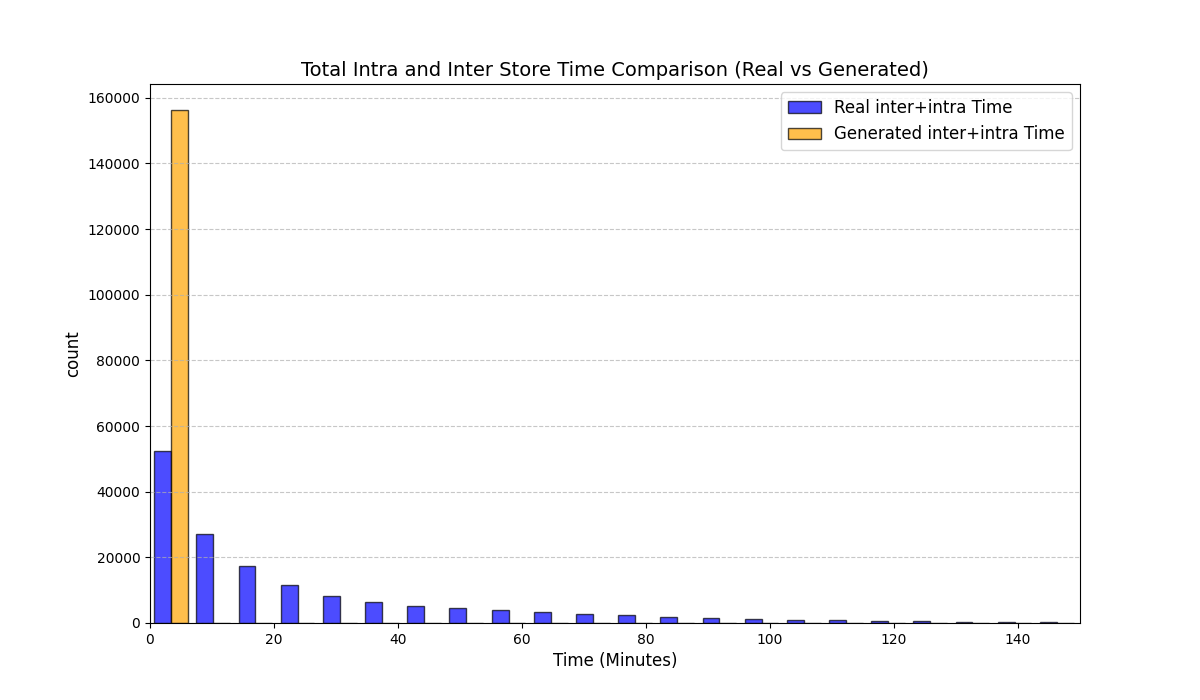}
\caption{Mall B}
\end{subfigure}\hfill
\begin{subfigure}[t]{0.24\linewidth}
\centering
\includegraphics[width=\linewidth]{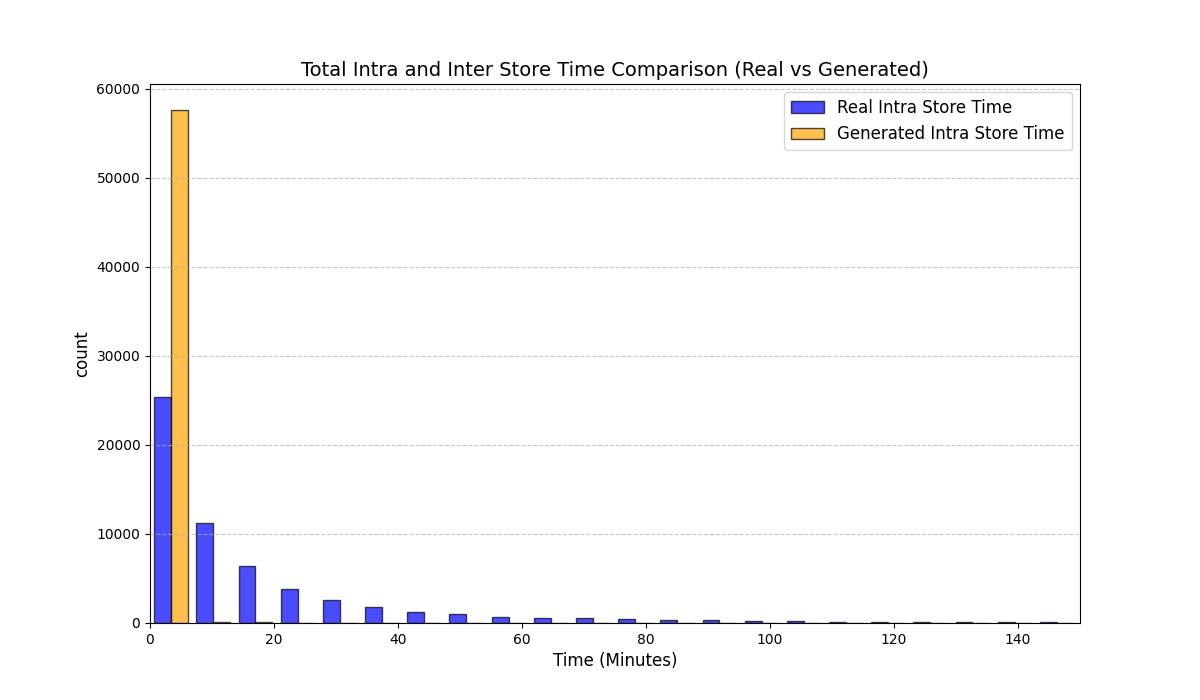}
\caption{Mall C}
\end{subfigure}\hfill
\begin{subfigure}[t]{0.24\linewidth}
\centering
\includegraphics[width=\linewidth]{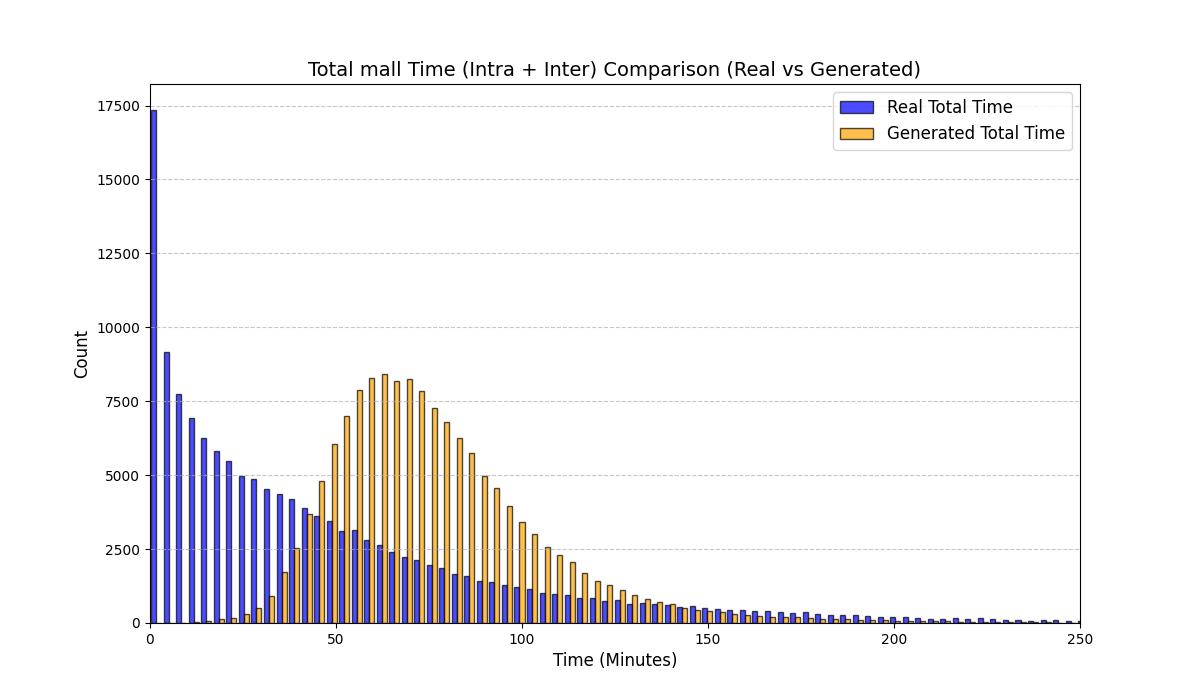}
\caption{Mall D}
\end{subfigure}

\medskip
\textbf{LAS}\par\medskip
\begin{subfigure}[t]{0.24\linewidth}
\centering
\includegraphics[width=\linewidth]{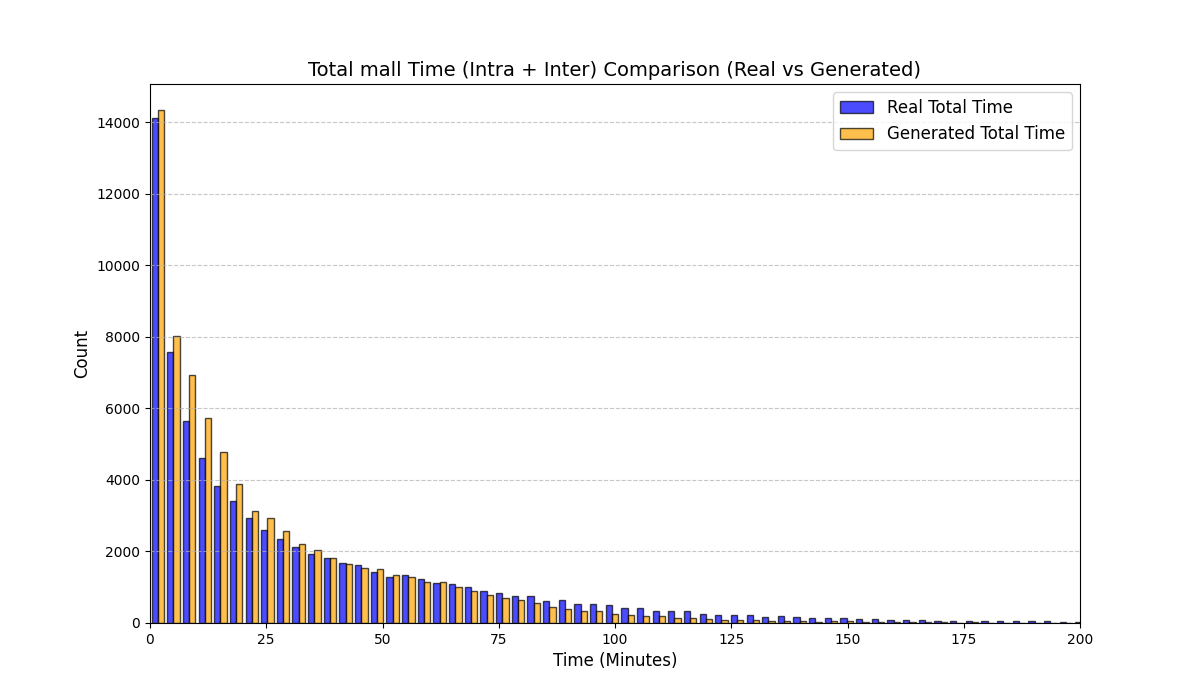}
\caption{Mall A}
\end{subfigure}\hfill
\begin{subfigure}[t]{0.24\linewidth}
\centering
\includegraphics[width=\linewidth]{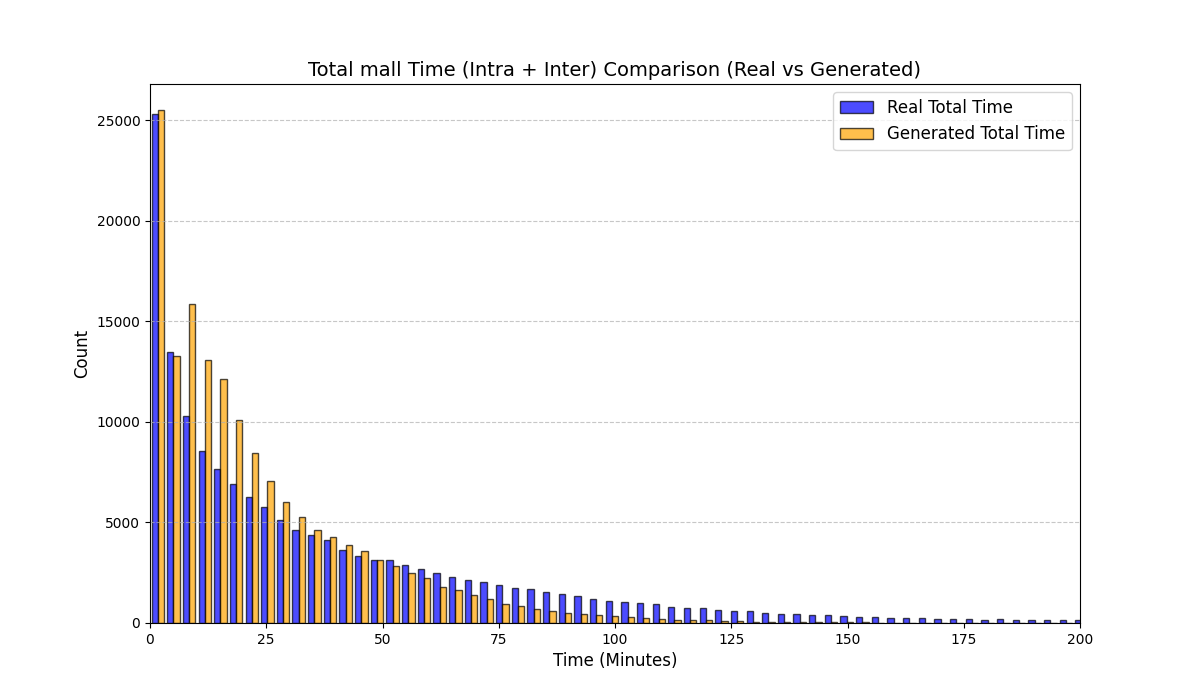}
\caption{Mall B}
\end{subfigure}\hfill
\begin{subfigure}[t]{0.24\linewidth}
\centering
\includegraphics[width=\linewidth]{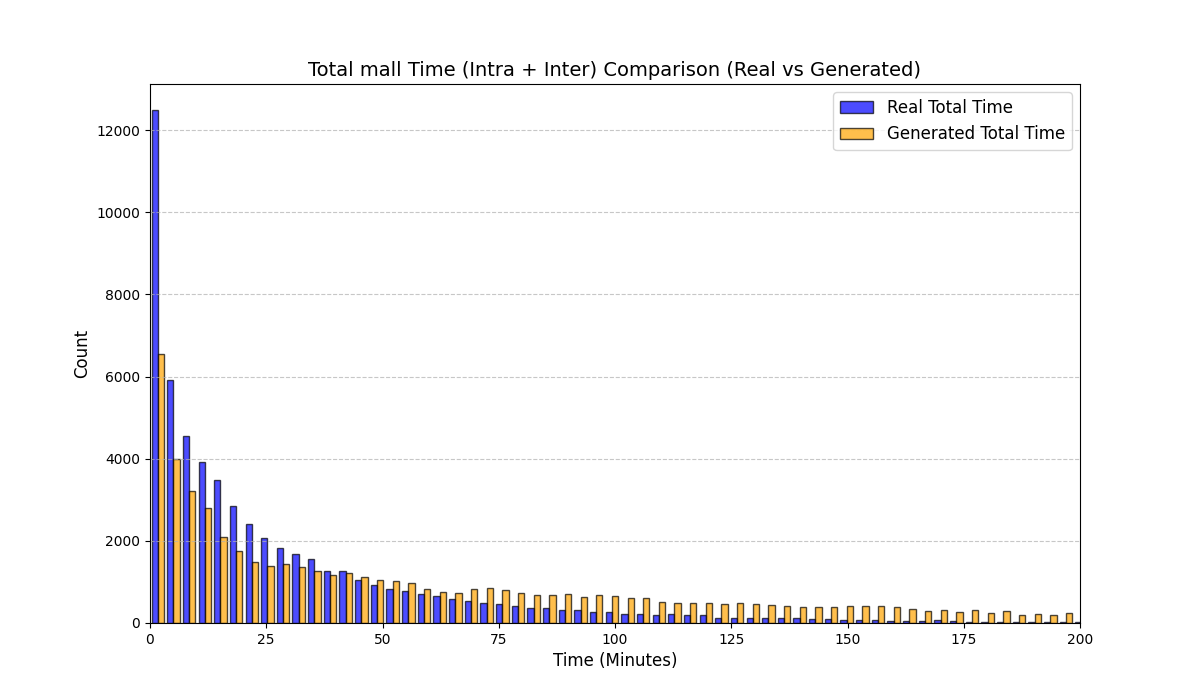}
\caption{Mall C}
\end{subfigure}\hfill
\begin{subfigure}[t]{0.24\linewidth}
\centering
\includegraphics[width=\linewidth]{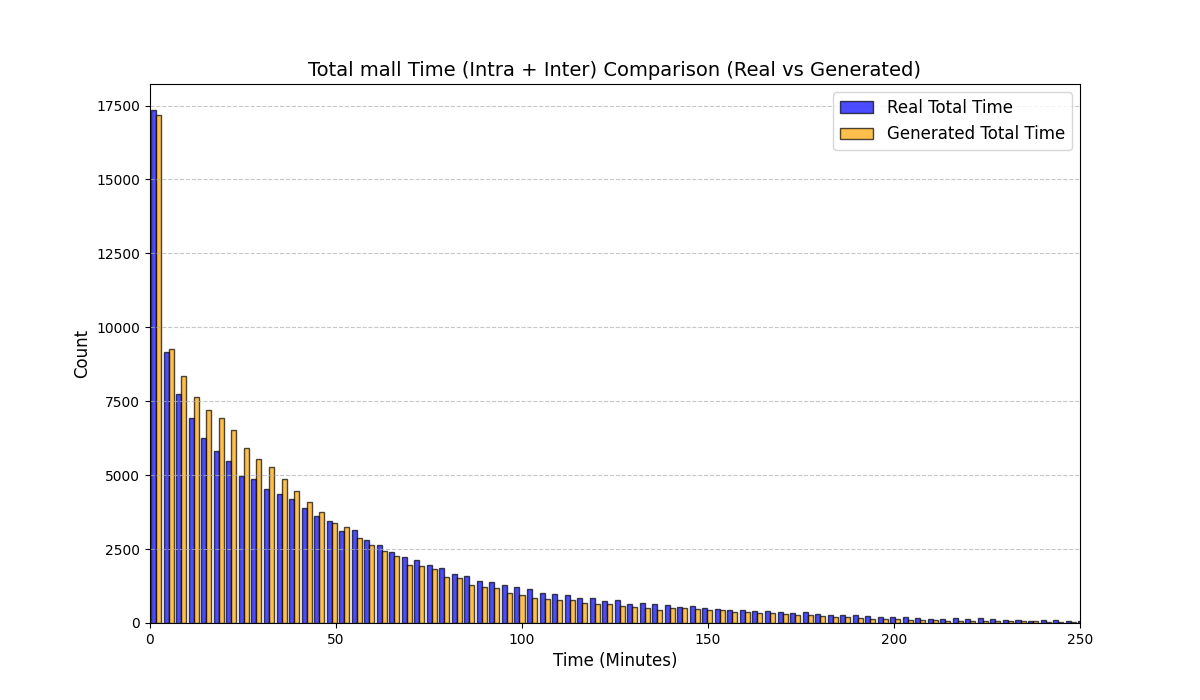}
\caption{Mall D}
\end{subfigure}

\caption{Total time spent in the mall: real vs.\ generated marginal distributions for all four malls. Top: RS. Bottom: LAS.}
\label{fig:mall_total_time_all}
\end{figure*}

\subsection{Public datasets: full results and extra plots}
\label{app:public_full_results}
Table~\ref{tab:public_full_results} reports a full set of derived-metric KS errors on Amazon, Movie, Education, and GPS.
Figures~\ref{fig:public_extra} and~\ref{fig:public_extra_gps_edu} provide additional visualizations that complement the main-text plots in Figs.~\ref{fig:amazon_main}--\ref{fig:gps_main}.

\begin{table*}[t]
\centering
\caption{Full public-dataset results: KS statistic for each derived metric (lower is better).}
\label{tab:public_full_results}
\setlength{\tabcolsep}{4pt}
\renewcommand{\arraystretch}{1.05}
\small
\begin{tabular}{llcc}
\toprule
Dataset & Derived metric & RS & LAS\\
\midrule
Amazon & Sequence length & 0.002 & 0.002 \\
Amazon & Item diversity & 0.338 & 0.020 \\
Amazon & Inter-event days & 0.456 & 0.170 \\
Amazon & Duration (days) & 0.413 & 0.046 \\
Amazon & Mean rating & 0.632 & 0.590 \\
\addlinespace
Movie & Trajectory length & 0.120 & 0.067 \\
Movie & Inter-rating time (min) & 0.466 & 0.294 \\
Movie & Mean rating & 0.155 & 0.106 \\
Movie & Rating std & 0.754 & 0.669 \\
\addlinespace
Education & Trajectory length & 0.411 & 0.164 \\
Education & Mean correctness & 0.9997 & 0.529 \\
Education & Std correctness & 0.9994 & 0.350 \\
\addlinespace
GPS & Trajectory length & 0.243 & 0.0287 \\
GPS & Total distance (km) & 0.284 & 0.142 \\
GPS & Average speed (km/h) & 0.312 & 0.108 \\
\midrule
Amazon & Mean across metrics & 0.368 & 0.166 \\
Movie & Mean across metrics & 0.373 & 0.284 \\
Education & Mean across metrics & 0.803 & 0.348 \\
GPS & Mean across metrics & 0.280 & 0.093 \\
\bottomrule
\end{tabular}
\end{table*}

\begin{figure*}[t]
\centering
\begin{subfigure}[t]{0.48\linewidth}
\centering
\includegraphics[width=\linewidth]{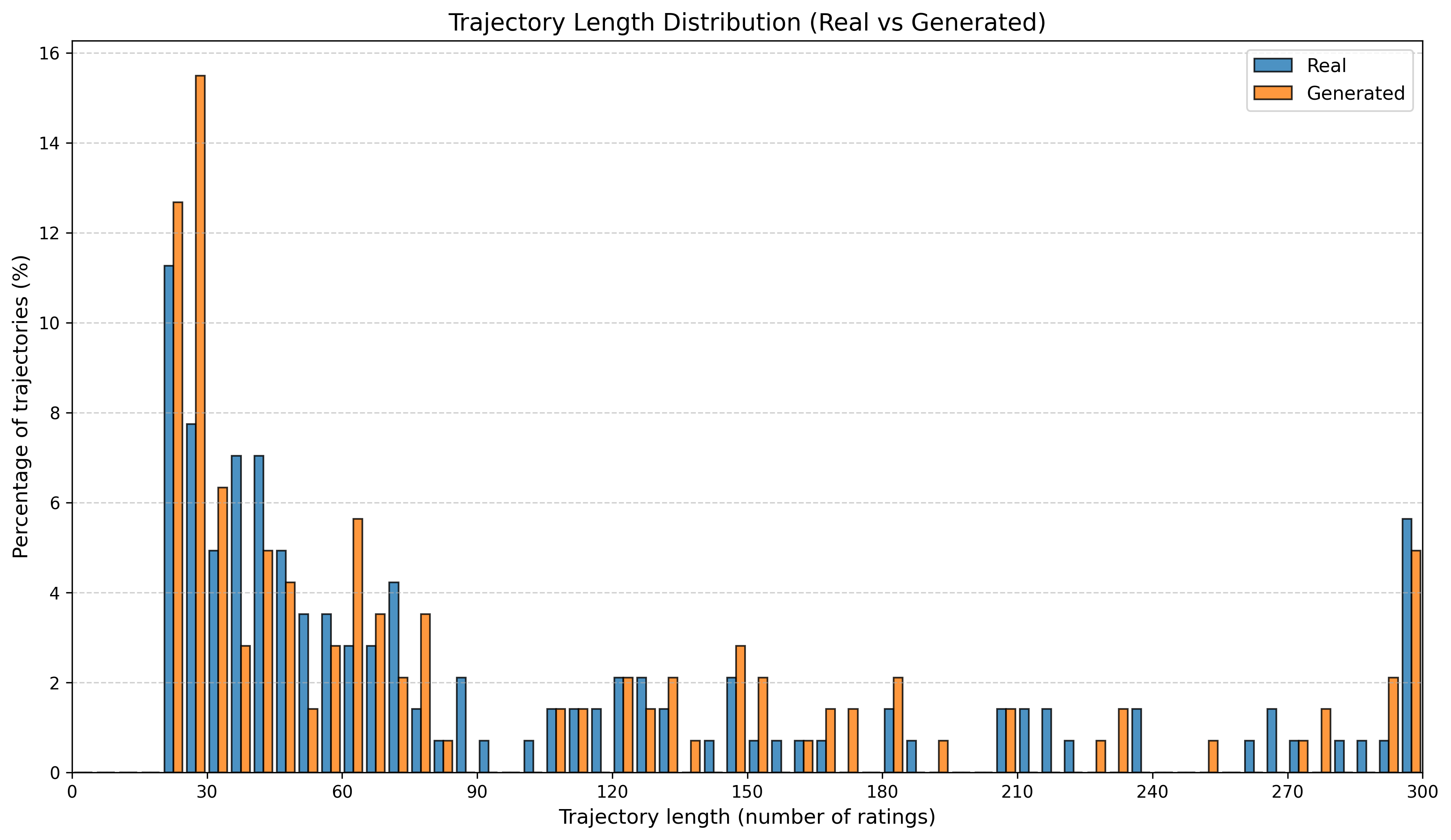}
\caption{RS: trajectory length (Movie)}
\end{subfigure}\hfill
\begin{subfigure}[t]{0.48\linewidth}
\centering
\includegraphics[width=\linewidth]{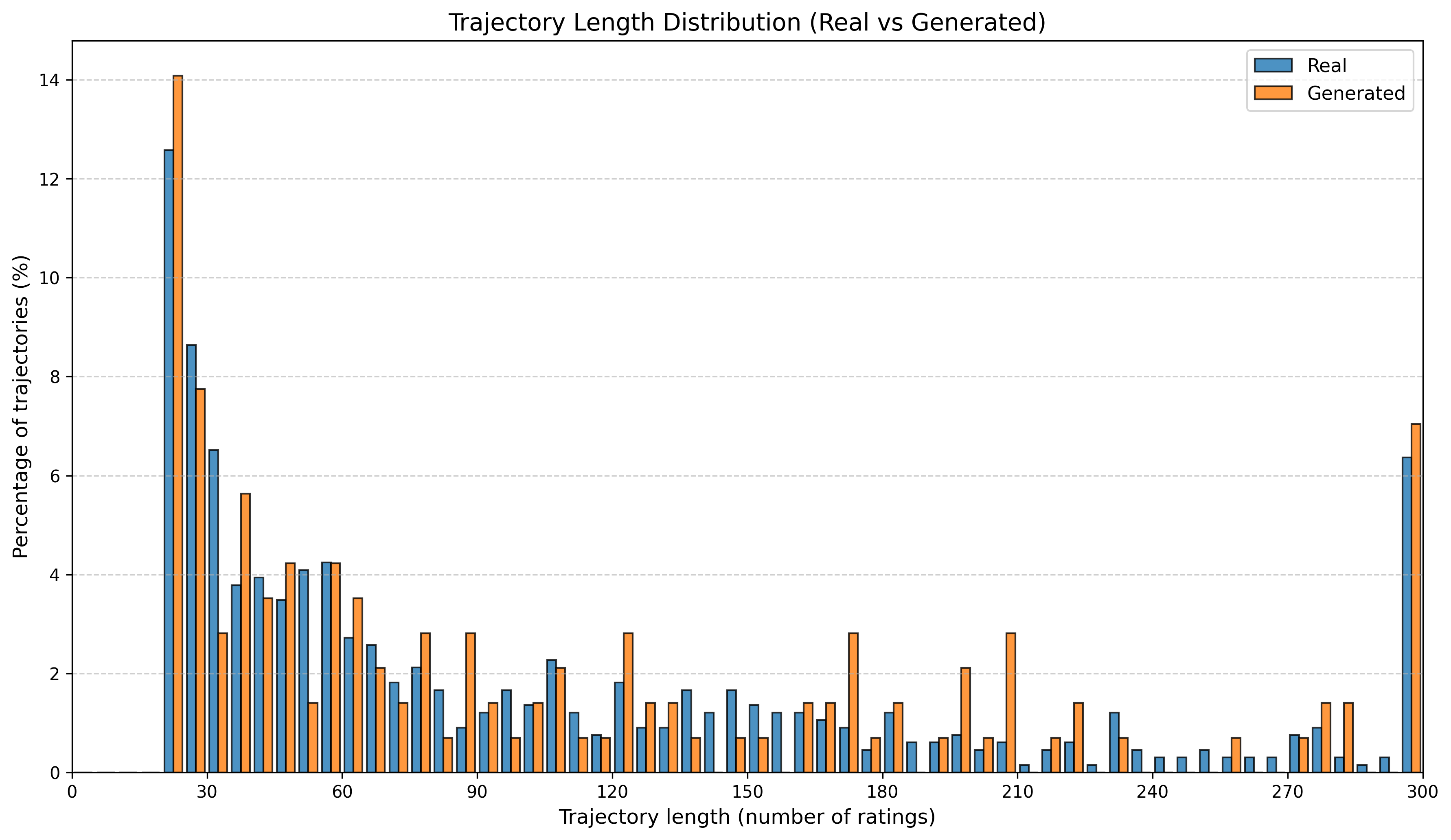}
\caption{LAS: trajectory length (Movie)}
\end{subfigure}
\caption{Movie: trajectory-length marginal distribution.}
\label{fig:public_extra}
\end{figure*}

\begin{figure*}[t]
\centering
\begin{subfigure}[b]{0.48\linewidth}
\centering
\includegraphics[width=\linewidth]{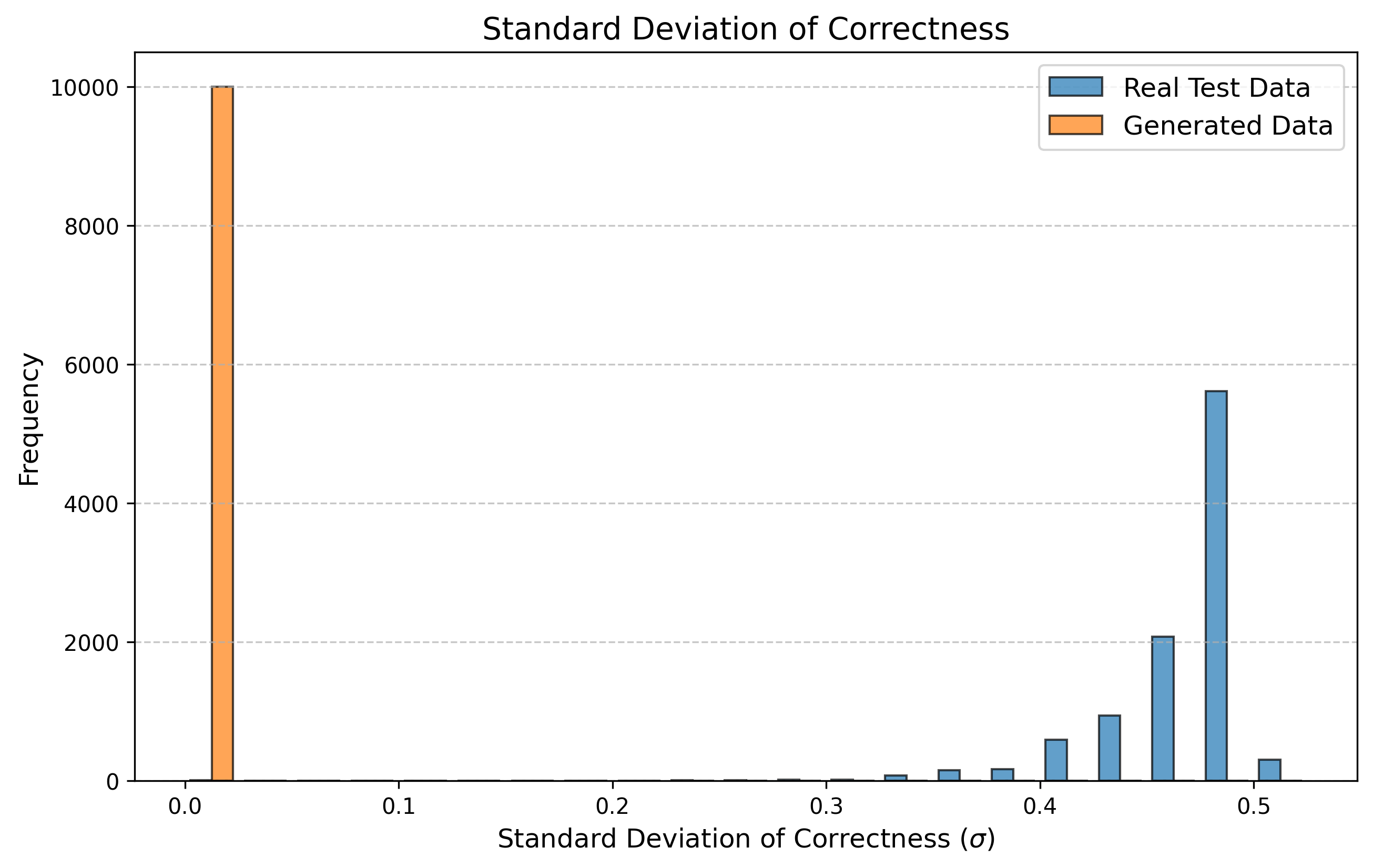}
\caption{Education (RS): std correctness}
\end{subfigure}
\hfill
\begin{subfigure}[b]{0.48\linewidth}
\centering
\includegraphics[width=\linewidth]{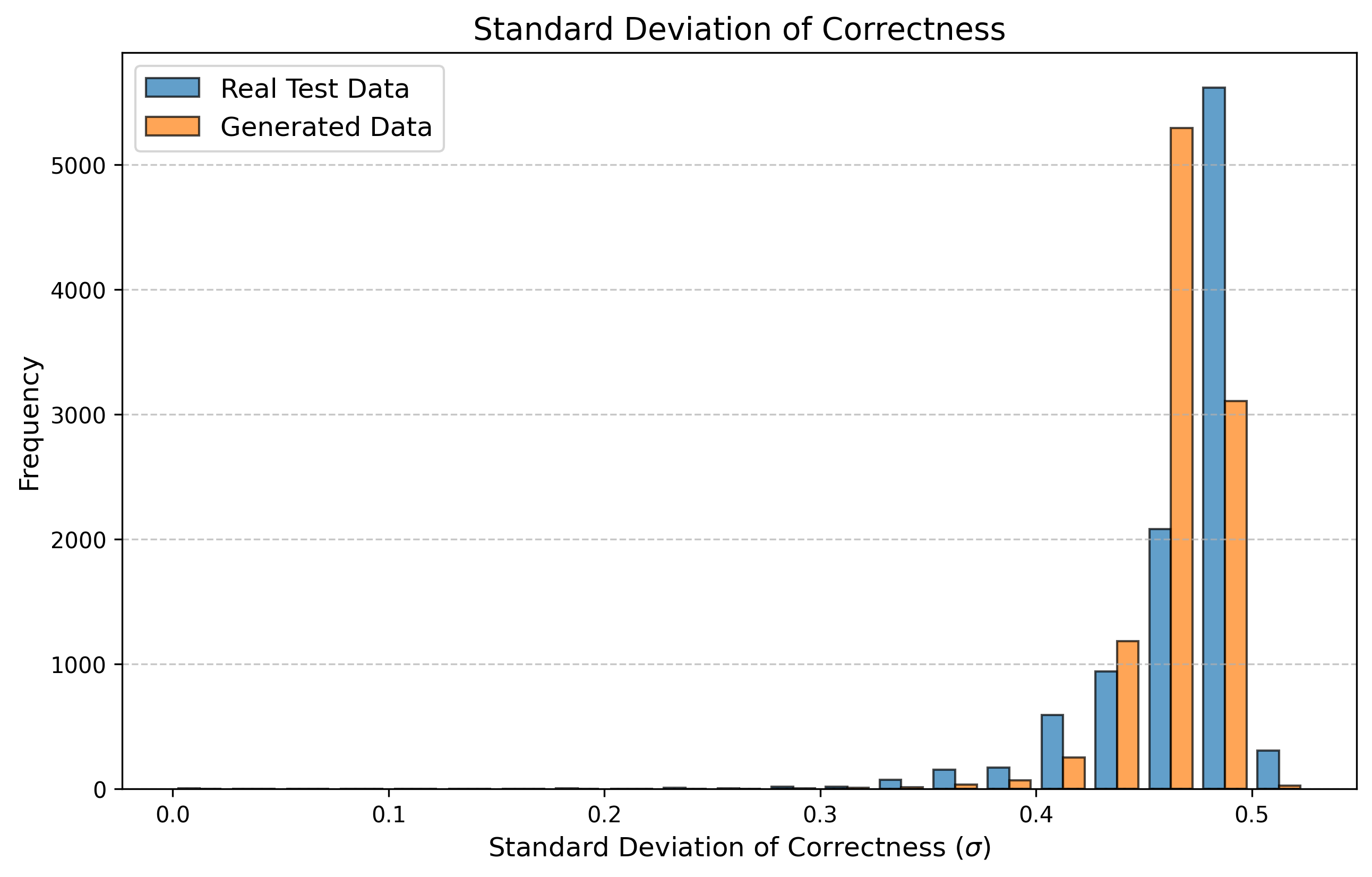}
\caption{Education (LAS): std correctness}
\end{subfigure}

\begin{subfigure}[b]{0.48\linewidth}
\centering
\includegraphics[width=\linewidth]{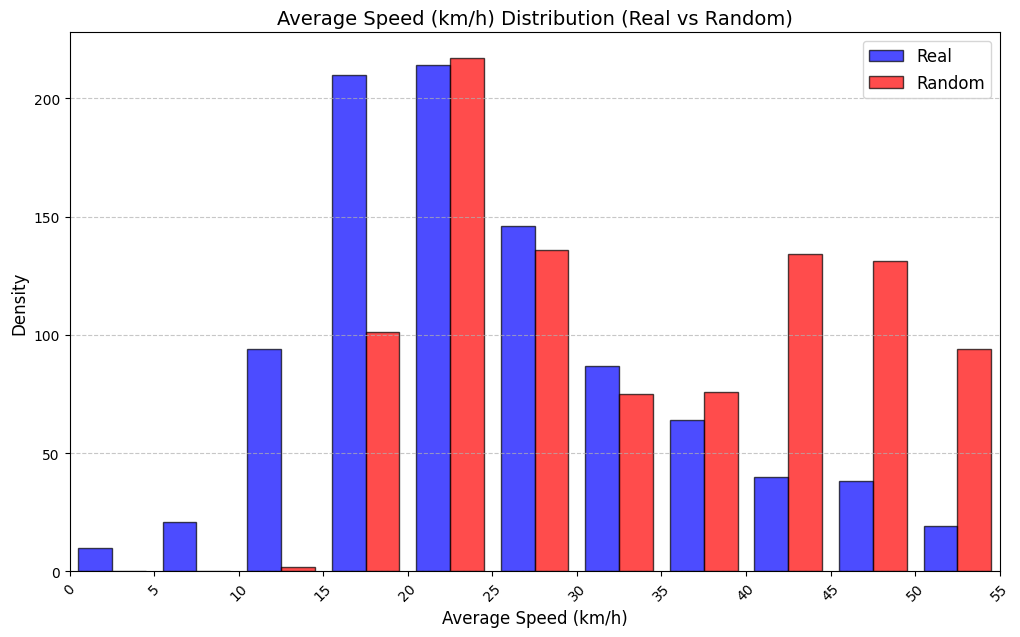}
\caption{GPS (RS): average speed}
\end{subfigure}
\hfill
\begin{subfigure}[b]{0.48\linewidth}
\centering
\includegraphics[width=\linewidth]{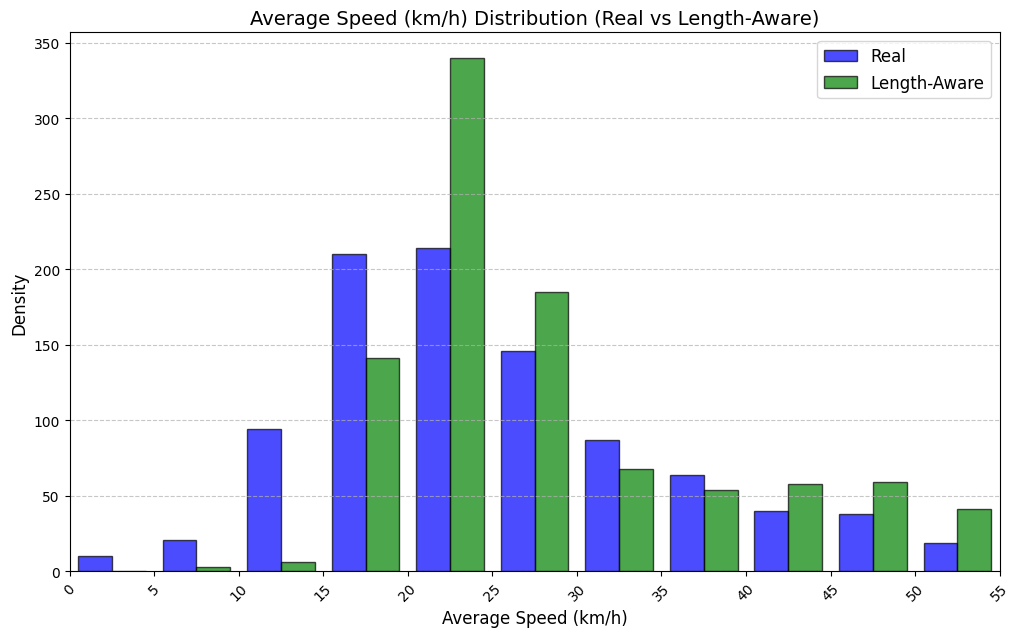}
\caption{GPS (LAS): average speed}
\end{subfigure}
\caption{Additional public-dataset marginals for Education and GPS.}
\label{fig:public_extra_gps_edu}
\end{figure*}

\end{document}